\documentclass{article}
\usepackage{style/unites}
\usepackage{XCharter}
\usepackage[scaled=1.1]{zlmtt}

\usepackage[utf8]{inputenc}
\usepackage[T1]{fontenc}
\usepackage{microtype}

\usepackage{amsmath}
\usepackage{amssymb}
\usepackage{amsfonts}
\usepackage{amsthm}
\usepackage{mathtools}
\usepackage{mathrsfs}
\usepackage{physics}
\usepackage{braket}
\usepackage{slashed}
\usepackage{nicefrac}
\usepackage{textcomp}
\usepackage{dsfont}
\usepackage{bbm}
\usepackage{bm}

\usepackage{graphicx}
\usepackage{subcaption}
\usepackage[export]{adjustbox}
\usepackage{float}
\usepackage{booktabs}
\usepackage{dcolumn}
\newcolumntype{d}[1]{D{.}{.}{#1}}
\usepackage{bigstrut, tabularx, multirow, makecell, diagbox}
\usepackage{colortbl}
\usepackage{tabularray}
\UseTblrLibrary{booktabs}
\usepackage{threeparttable}
\usepackage{tablefootnote}
\usepackage{fontawesome5}

\usepackage{placeins}
\usepackage{caption}
\usepackage{footnote}
\usepackage{enumitem}
\usepackage{multicol}
\usepackage{xspace}
\usepackage{titletoc}
\usepackage{titlesec}
\usepackage[bottom]{footmisc}
\usepackage{setspace}

\usepackage{wrapfig}
\usepackage{tikz}
\usepackage{quantikz}
\usepackage{dashbox}
\usepackage{mdframed}
\usepackage{marvosym}
\usepackage{pifont}
\usepackage{CJK}
\usepackage{url}

\usepackage[table,x11names]{xcolor}
\usepackage[most]{tcolorbox}
\tcbuselibrary{breakable}
\usetikzlibrary{decorations.pathreplacing, fit}

\definecolor{primaryblue}{HTML}{0066CC}
\definecolor{accentcyan}{HTML}{00D4AA}
\definecolor{warmorange}{HTML}{FF6B35}
\definecolor{deepgray}{HTML}{2C3E50}
\definecolor{lightgray}{HTML}{F8F9FA}
\definecolor{gradientstart}{HTML}{667eea}
\definecolor{gradientend}{HTML}{764ba2}

\definecolor{citecolor}{HTML}{0071bc}
\definecolor{citeblue}{RGB}{0, 113, 188}
\definecolor{linkcolor}{HTML}{9A4D92}
\definecolor{firebrick}{rgb}{0.698,0.133,0.133}

\definecolor{paleviolet}{HTML}{E1EEFC}
\definecolor{CarolinaUltraLight}{HTML}{E7F4FC}
\definecolor{lightgrey}{RGB}{247, 247, 247}
\definecolor{shadecolor}{HTML}{EFEFEF}
\definecolor{lightyellow}{rgb}{1.0, 0.95, 0.7}
\definecolor{lightblue}{rgb}{0.90, 0.95, 1.0}
\definecolor{light-gray}{gray}{0.95}

\definecolor{darkgrey}{rgb}{0.5, 0.5, 0.5}
\definecolor{darkgreen}{rgb}{0, 0.5, 0}
\definecolor{mydarkblue}{rgb}{0,0.08,0.45}
\definecolor{mydarkblue2}{rgb}{0.133, 0.133, 0.698}
\definecolor{echodrk}{HTML}{0099cc}
\definecolor{mymauve}{rgb}{0.58,0,0.82}
\definecolor{midnightblue}{rgb}{0.1,0.1,0.44}
\definecolor{oxfordblue}{rgb}{0.0,0.13,0.28}
\definecolor{prussianblue}{rgb}{0.0,0.19,0.33}
\definecolor{coolteal}{rgb}{0, 0.45, 0.45}
\definecolor{olive}{rgb}{0.1, 0.3, 0}
\definecolor{mypurple}{rgb}{0.5,0,0.5}
\definecolor{almond}{rgb}{0.94, 0.87, 0.8}

\definecolor{blue_ampEncoding}{HTML}{DAE8FC}
\definecolor{green_encoder}{HTML}{D5E8D4}
\definecolor{purple_decoder}{HTML}{E1D5E7}
\definecolor{yellow_measure}{HTML}{FFF2CC}
\definecolor{gray_block}{HTML}{F5F5F5}
\definecolor{pink_dru}{HTML}{FAD9D5}
\definecolor{orange_v}{HTML}{FAD7AC}

\definecolor{colorA}{rgb}{1,0,0}
\definecolor{colorB}{rgb}{0,0.3,1}
\definecolor{colorC}{rgb}{0.9,0.8,0.2}
\definecolor{colorD}{rgb}{0,0.65,0}
\definecolor{lesslightgray}{rgb}{0.5,0.5,0.5}
\definecolor{fundamental}{RGB}{55, 110, 111}
\definecolor{Gred}{RGB}{219, 50, 54}
\definecolor{ToCgreen}{RGB}{0, 128, 0}
\definecolor{Sepia}{RGB}{112, 66, 20}
\definecolor{Dblue}{rgb}{0,0.08,0.45}
\definecolor{Blue}{rgb}{0, 0, 0.8}
\definecolor{blue}{rgb}{0,0,1}
\definecolor{UNCblue!10}{rgb}{0.84,0.91,0.98}
\definecolor{RowAlt}{rgb}{0.98,0.98,0.99}

\definecolor{CarolinaBlue}{HTML}{7BAFD4}        %
\definecolor{CarolinaLightBlue}{HTML}{B3D4E5}   %
\definecolor{CarolinaUltraLight}{HTML}{E8F4F8}  %
\definecolor{CarolinaText}{HTML}{1C2B33}        %

\usepackage[pagebackref=true,breaklinks=true,colorlinks,hyperfootnotes=false]{hyperref}
\hypersetup{
  colorlinks,
  citecolor=citeblue,
  linkcolor=firebrick,
  urlcolor=firebrick
}
\usepackage[nameinlink,capitalize,noabbrev]{cleveref}

\titlespacing\section{0pt}{4pt plus 4pt minus 2pt}{-2pt plus 2pt minus 2pt}
\titlespacing\subsection{0pt}{2pt plus 4pt minus 2pt}{-2pt plus 2pt minus 2pt}
\titlespacing\subsubsection{0pt}{2pt plus 4pt minus 2pt}{-2pt plus 2pt minus 2pt}

\makeatletter
\def\th@remark{%
  \thm@headfont{\bfseries}%
  \normalfont %
  \thm@preskip\topsep \divide\thm@preskip\tw@
  \thm@postskip\thm@preskip
}
\makeatother

\theoremstyle{definition}

\newtheorem{theorem}{Theorem}[section]
\tcolorboxenvironment{theorem}{
  breakable,
  colback=black!10,
  colframe=white,
  width=\linewidth, 
  enlarge left by=0pt,
  enlarge right by=0pt,
  boxsep=5pt,
  boxrule=0pt,
  left=0pt,right=0pt,top=0pt,bottom=0pt,
  arc=8pt,
  before skip=\topsep,
  after skip=\topsep
}

\newtheorem{lemma}{Lemma}[section]
\tcolorboxenvironment{lemma}{
  breakable,
  colback=black!10,
  colframe=white,
  width=\linewidth,
  enlarge left by=0pt,
  enlarge right by=0pt,
  boxsep=5pt,
  boxrule=0pt,
  left=0pt,right=0pt,top=0pt,bottom=0pt,
  arc=8pt,
  before skip=\topsep,
  after skip=\topsep
}

\tcolorboxenvironment{corollary}{
  breakable,
  colback=black!10,
  colframe=white,
  width=\linewidth,
  enlarge left by=0pt,
  enlarge right by=0pt,
  boxsep=5pt,
  boxrule=0pt,
  left=0pt,right=0pt,top=0pt,bottom=0pt,
  arc=8pt,
  before skip=\topsep,
  after skip=\topsep
}

\newtheorem{proposition}{Proposition}[section]
\tcolorboxenvironment{proposition}{
  breakable,
  colback=black!10,
  colframe=white,
  width=\linewidth,
  enlarge left by=0pt,
  enlarge right by=0pt,
  boxsep=5pt,
  boxrule=0pt,
  left=0pt,right=0pt,top=0pt,bottom=0pt,
  arc=8pt,
  before skip=\topsep,
  after skip=\topsep
}

\newtheorem{definition}{Definition}[section]
\tcolorboxenvironment{definition}{
  breakable,
  colback=black!10,
  colframe=white,
  width=\linewidth,
  enlarge left by=0pt,
  enlarge right by=0pt,
  boxsep=5pt,
  boxrule=0pt,
  left=0pt,right=0pt,top=0pt,bottom=0pt,
  arc=8pt,
  before skip=\topsep,
  after skip=\topsep
}

\newtheorem{remark}{Remark}[section]

\tcolorboxenvironment{assumption}{
  breakable,
  colback=black!10,
  colframe=white,
  width=\linewidth,
  enlarge left by=0pt,
  enlarge right by=0pt,
  boxsep=5pt,
  boxrule=0pt,
  left=0pt,right=0pt,top=0pt,bottom=0pt,
  arc=8pt,
  before skip=\topsep,
  after skip=\topsep
}

\tcolorboxenvironment{claim}{
  breakable,
  colback=black!10,
  colframe=white,
  width=\linewidth,
  enlarge left by=0pt,
  enlarge right by=0pt,
  boxsep=5pt,
  boxrule=0pt,
  left=0pt,right=0pt,top=0pt,bottom=0pt,
  arc=8pt,
  before skip=\topsep,
  after skip=\topsep
}

\tcolorboxenvironment{problem}{
  breakable,
  colback=black!10,
  colframe=white,
  width=\linewidth,
  enlarge left by=0pt,
  enlarge right by=0pt,
  boxsep=5pt,
  boxrule=0pt,
  left=0pt,right=0pt,top=0pt,bottom=0pt,
  arc=8pt,
  before skip=\topsep,
  after skip=\topsep
}

\tcolorboxenvironment{question}{
  breakable,
  colback=black!10,
  colframe=white,
  width=\linewidth,
  enlarge left by=0pt,
  enlarge right by=0pt,
  boxsep=5pt,
  boxrule=0pt,
  left=0pt,right=0pt,top=0pt,bottom=0pt,
  arc=8pt,
  before skip=\topsep,
  after skip=\topsep
}

\newtcolorbox{titleblock}{
  enhanced,
  frame hidden,
  colback=CarolinaUltraLight,
  colframe=CarolinaUltraLight,
  boxrule=0pt,
  arc=10pt,
  left=14pt,
  right=14pt,
  top=14pt,
  bottom=14pt,
  width=\linewidth,
  before skip=12pt plus 4pt,
  after skip=12pt plus 4pt,
  grow to left by=1.5pt,
  grow to right by=1.5pt,
  before upper={
    \setlength{\parindent}{0cm}
    \setlength{\parskip}{0.5cm}
  }
}

\crefname{theorem}{Theorem}{Theorems}
\crefname{proposition}{Proposition}{Propositions}
\crefname{lemma}{Lemma}{Lemmas}
\crefname{corollary}{Corollary}{Corollaries}
\crefname{definition}{Definition}{Definitions}
\crefname{assumption}{Assumption}{Assumptions}
\crefname{remark}{Remark}{Remarks}
\crefname{problem}{Problem}{Problems}
\crefname{property}{Property}{property}
\crefname{question}{Question}{Questions}

\numberwithin{equation}{section}
\numberwithin{theorem}{section}
\numberwithin{proposition}{section}
\numberwithin{definition}{section}
\numberwithin{lemma}{section}
\numberwithin{assumption}{section}
\numberwithin{remark}{section}

\def\1{\bm{1}}

\makeatletter
\let\save@mathaccent\mathaccent
\newcommand*\if@single[3]{%
    \setbox0\hbox{${\mathaccent"0362{#1}}^H$}%
    \setbox2\hbox{${\mathaccent"0362{\kern0pt#1}}^H$}%
    \ifdim\ht0=\ht2 #3\else #2\fi
}
\newcommand*\rel@kern[1]{\kern#1\dimexpr\macc@kerna}
\newcommand*\widebar[1]{\@ifnextchar^{{\wide@bar{#1}{0}}}{\wide@bar{#1}{1}}}
\newcommand*\wide@bar[2]{\if@single{#1}{\wide@bar@{#1}{#2}{1}}{\wide@bar@{#1}{#2}{2}}}
\newcommand*\wide@bar@[3]{%
    \begingroup
    \def\mathaccent##1##2{%
        \let\mathaccent\save@mathaccent
        \if#32 \let\macc@nucleus\first@char \fi
        \setbox\z@\hbox{$\macc@style{\macc@nucleus}_{}$}%
        \setbox\tw@\hbox{$\macc@style{\macc@nucleus}{}_{}$}%
        \dimen@\wd\tw@
        \advance\dimen@-\wd\z@
        \divide\dimen@ 3
        \@tempdima\wd\tw@
        \advance\@tempdima-\scriptspace
        \divide\@tempdima 10
        \advance\dimen@-\@tempdima
        \ifdim\dimen@>\z@ \dimen@0pt\fi
        \rel@kern{0.6}\kern-\dimen@
        \if#31
        \overline{\rel@kern{-0.6}\kern\dimen@\macc@nucleus\rel@kern{0.4}\kern\dimen@}%
        \advance\dimen@0.4\dimexpr\macc@kerna
        \let\final@kern#2%
        \ifdim\dimen@<\z@ \let\final@kern1\fi
        \if\final@kern1 \kern-\dimen@\fi
        \else
        \overline{\rel@kern{-0.6}\kern\dimen@#1}%
        \fi
    }%
    \macc@depth\@ne
    \let\math@bgroup\@empty \let\math@egroup\macc@set@skewchar
    \mathsurround\z@ \frozen@everymath{\mathgroup\macc@group\relax}%
    \macc@set@skewchar\relax
    \let\mathaccentV\macc@nested@a
    \if#31
    \macc@nested@a\relax111{#1}%
    \else
    \def\gobble@till@marker##1\endmarker{}%
    \futurelet\first@char\gobble@till@marker#1\endmarker
    \ifcat\noexpand\first@char A\else
    \def\first@char{}%
    \fi
    \macc@nested@a\relax111{\first@char}%
    \fi
    \endgroup
    }
\makeatother

\DeclareMathAlphabet{\mathsfit}{\encodingdefault}{\sfdefault}{m}{sl}
\SetMathAlphabet{\mathsfit}{bold}{\encodingdefault}{\sfdefault}{bx}{n}

\renewcommand{\arraystretch}{1.15}
\newtheorem{ResearchQ}{RQ}
\tcolorboxenvironment{ResearchQ}{
  breakable,
  colback=black!10,
  colframe=white,%
  width=\dimexpr\linewidth\relax,%
  enlarge left by=0pt,%
  enlarge right by=0pt,%
  boxsep=5pt,%
  boxrule=0pt,
  left=0pt,right=0pt,top=0pt,bottom=0pt,
  arc=8pt
  before skip=\topsep,
  after skip=\topsep
}
\crefname{ResearchQ}{ResearchQ}{ResearchQs}

\usepackage{listings}
\lstdefinestyle{plainmarkdown}{
  language={},
  basicstyle=\fontsize{12}{14}\selectfont\ttfamily,
  breaklines=true,
  breakatwhitespace=true,
  showstringspaces=false,
  showspaces=false,
  showtabs=false,
  columns=fullflexible,
  keepspaces=false,
  backgroundcolor=\color{gray!8},
  frame=single,
  rulecolor=\color{gray!40},
  framerule=0.8pt,
  xleftmargin=3pt,
  xrightmargin=3pt,
  framextopmargin=3pt,
  framexbottommargin=3pt,
  breakindent=0pt,
  postbreak=\mbox{\textcolor{red}{$\hookrightarrow$}\space},
}

\lstdefinestyle{markdownhighlight}{
  basicstyle=\ttfamily\footnotesize,
  breaklines=true,
  breakatwhitespace=true,
  showstringspaces=false,
  showspaces=false,
  showtabs=false,
  columns=fullflexible,
  keepspaces=false,
  backgroundcolor=\color{gray!8},
  frame=single,
  rulecolor=\color{gray!40},
  framerule=0.8pt,
  xleftmargin=3pt,
  xrightmargin=3pt,
  framextopmargin=3pt,
  framexbottommargin=3pt,
  breakindent=0pt,
  postbreak=\mbox{\textcolor{red}{$\hookrightarrow$}\space},
  escapeinside={(*@}{@*)},
}

\renewcommand{\arraystretch}{1.15}
\definecolor{UNCblue!10}{rgb}{0.84,0.91,0.98}
\definecolor{RowAlt}{rgb}{0.98,0.98,0.99}

\usepackage{tocloft}

\usepackage{ulem}
\usepackage{listings}
\usepackage{xcolor}

\definecolor{codebg}{HTML}{F6F8FA}
\definecolor{codeframe}{HTML}{D0D7DE}
\definecolor{codekw}{HTML}{0969DA}
\definecolor{codecomment}{HTML}{6E7781}
\definecolor{codestring}{HTML}{0A3069}
\lstdefinelanguage{Solidity}{
  keywords={
    contract,interface,function,returns,external,payable,event,
    address,bool,uint256,bytes32,calldata
  },
  keywordstyle=\color{codekw}\bfseries,
  identifierstyle=\color{black},
  sensitive=true,
  comment=[l]{//},
  morecomment=[s]{/*}{*/},
  commentstyle=\color{codecomment}\itshape,
  stringstyle=\color{codestring},
}
\usepackage{tcolorbox}
\tcbuselibrary{skins, breakable}

\definecolor{codebg}{HTML}{F6F8FA}      %
\definecolor{codeframe}{HTML}{D0D7DE}   %
\definecolor{codekw}{HTML}{0969DA}      %
\definecolor{codecomment}{HTML}{6E7781}
\definecolor{codestring}{HTML}{0A3069}

\tcbset{
  metabox/.style={
    enhanced,
    colback=codebg,
    colframe=codeframe,
    coltitle=black,
    fonttitle=\bfseries\small,
    boxrule=0.5pt,
    arc=3pt,
    left=8pt,
    right=8pt,
    top=6pt,
    bottom=6pt,
    boxsep=0pt,
    breakable,
  }
}

\tcbset{
  metainput/.style={
    metabox,
    colbacktitle=codekw,
    coltitle=white,
    attach boxed title to top left={yshift=-2mm, xshift=4mm},
    boxed title style={
      colback=codekw,
      colframe=codekw,
      arc=2pt,
      boxrule=0pt,
    },
  }
}

\tcbset{
  metatrace/.style={
    metabox,
    title filled,
    colbacktitle=codebg,
    coltitle=black,
    toptitle=4pt,
    bottomtitle=4pt,
    fonttitle=\bfseries\footnotesize,
    separator sign={},
  }
}

\tcbset{
  metaaudit/.style={
    metabox,
    title filled,
    colbacktitle=codebg,
    coltitle=black,
    fonttitle=\bfseries\scriptsize,
    left=4pt,
    right=4pt,
    top=4pt,
    bottom=4pt,
  }
}

\tcbset{
  metapass/.style={
    metaaudit,
    colframe={rgb:green,4;black,1},
    colbacktitle={rgb:green,1;white,9},
  },
  metafail/.style={
    metaaudit,
    colframe={rgb:red,4;black,1},
    colbacktitle={rgb:red,1;white,9},
  }
}

\newcommand{\TRUST}{\textsc{Trust}}

\newcommand{\DAAN}{\textsc{DAAN}}

\begin{document}

\makeatletter
\def\blfootnote{\gdef\@thefnmark{}\@footnotetext}
\makeatother

\makeatletter
\pagestyle{fancy}
\fancyhf{}
\renewcommand{\headrulewidth}{1pt}
\chead{\small\bf \TRUST{}: A Framework for Decentralized AI Service v.0.1

}
\cfoot{\thepage}
\thispagestyle{fancy}
\makeatother

\makeatletter
\def\icmldate#1{\gdef\@icmldate{#1}}
\icmldate{\today}
\makeatother

\makeatletter
\fancypagestyle{fancytitlepage}{
  \fancyhead{}
  \lhead{\includegraphics[height=0.8cm]{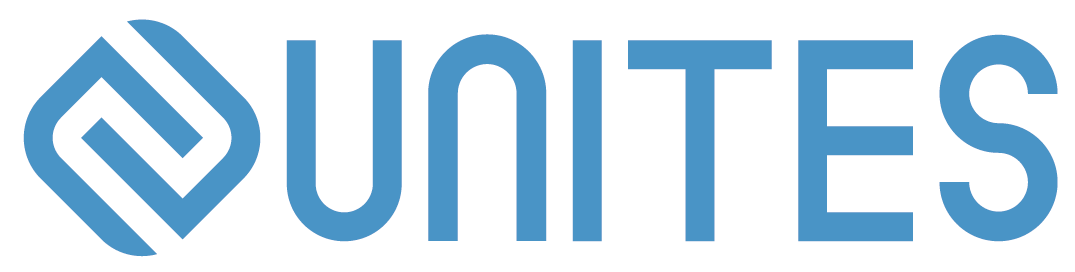}}
  \rhead{\it \@icmldate}
  \cfoot{}
}
\makeatother

\thispagestyle{fancytitlepage}

\vspace*{0.5em}

\noindent
\begin{titleblock}
    {\setlength{\parskip}{0cm}
     \raggedright
     {\setstretch{1.2}
      \LARGE\bfseries
      
      \par}
    }
    \vskip 0.2cm
    
    \begin{icmlauthorlist}
\mbox{Yu-Chao Huang$^{\,1\,\textrm{\Letter}}$},
\mbox{Zhen Tan$^{\,2\,}$}, 
\mbox{Mohan Zhang$^{\,1\,}$}, 
\mbox{Pingzhi Li$^{\,1\,}$}, 
\mbox{Zhuo Zhang$^{\,3\,}$}, 
and \mbox{Tianlong Chen$^{\,1\,\textrm{\Letter}}$}
\end{icmlauthorlist}

$^{1\,}$UNITES Lab, University of North Carolina at Chapel Hill  
\\ 
$^{2\,}$Arizona State University
\\ 
$^{3\,}$Columbia University

$^{\textrm{\Letter}}$ Corresponding Author: \{morris, tianlong\}@cs.unc.edu

    \vskip 0.2cm
    
\noindent
Large Reasoning Models (LRMs) and Multi-Agent Systems (MAS) in high-stakes
domains demand reliable verification, yet centralized approaches suffer four
limitations: (1) Robustness, with single points of failure vulnerable to
attacks and bias; (2) Scalability, as reasoning complexity creates bottlenecks;
(3) Opacity, as hidden auditing erodes trust; and (4) Privacy, as exposed
reasoning traces risk model theft.
We introduce TRUST (Transparent, Robust, and Unified Services for Trustworthy
AI), a decentralized framework with three innovations: (i) Hierarchical
Directed Acyclic Graphs (HDAGs) that decompose Chain-of-Thought reasoning into
five abstraction levels for parallel distributed auditing; (ii) the DAAN
protocol, which projects multi-agent interactions into Causal Interaction
Graphs (CIGs) for deterministic root-cause attribution; and (iii) a multi-tier
consensus mechanism among computational checkers, LLM evaluators, and human
experts with stake-weighted voting that guarantees correctness under 30\%
adversarial participation.
We prove a Safety-Profitability Theorem ensuring honest auditors profit while
malicious actors incur losses. All decisions are recorded on-chain, while
privacy-by-design segmentation prevents reconstruction of proprietary logic.
Across multiple LLMs and benchmarks, TRUST attains 72.4\% accuracy (4-18\% above
baselines) and remains resilient against 20\% corruption. DAAN reaches 70\%
root-cause attribution (vs. 54-63\% for standard methods) with 60\% token
savings. Human studies validate the design (F1 = 0.89, Brier = 0.074).
The framework supports (A1) decentralized auditing, (A2) tamper-proof
leaderboards, (A3) trustless data annotation, and (A4) governed autonomous
agents, pioneering decentralized AI auditing for safe, accountable deployment
of reasoning-capable systems.

\end{titleblock}

\clearpage
\vspace{1em}
{\LARGE \bfseries Table of Contents}

{
\setlength{\parskip}{-0em}
\startcontents[sections]
\printcontents[sections]{}{1}{}
}

\setlength{\parskip}{.5em}
\clearpage

\section{Introduction}\label{sec:intro}

\subsection{The Centralization Crisis in AI Verification}

The capabilities of large language models have expanded from text generation to complex, multi-step reasoning, leading to the emergence of Large Reasoning Models (LRMs) that produce explicit reasoning traces~\citep{wei2022chain,jaech2024openai,guo2025deepseek}. Simultaneously, the shift from single-model prompting to \emph{agent orchestration} has rapidly expanded what AI systems can accomplish, from tool-augmented reasoning~\citep{yao2023react,schick2023toolformer} to end-to-end software engineering~\citep{jimenez2024swebench,hong2023metagpt}, autonomous scientific discovery~\citep{boiko2023autonomous}, and complex multi-step workflows~\citep{openai2023gpt4}.

While explicit reasoning traces offer unprecedented visibility into AI decision-making processes, they also expose potential flaws: logical errors, lack of faithfulness to the model's true internal state~\citep{turpin2023language}, and safety vulnerabilities. The verification of these intermediate steps is a critical prerequisite for safe deployment in high-stakes domains such as medicine~\citep{singhal2023large}, law~\citep{chalkidis2021lexglue}, and finance~\citep{wang2023fingpt}. The urgency is underscored by emerging regulatory frameworks such as the EU AI Act~\citep{com2021laying} and the NIST AI RMF~\citep{ai2023artificial}, which mandate rigorous documentation and monitoring.

Yet the AI ecosystem remains dominated by centralized providers who control model access, define safety standards, and adjudicate disputes, all without transparency or accountability. This centralization creates systemic risks:

Centralized AI auditing faces five fundamental limitations that motivate our decentralized approach.

\textbf{Single points of failure.} When a major AI provider experiences an outage, thousands of downstream applications fail simultaneously. Centralized auditors constitute ``single points of failure'' that are vulnerable to targeted attacks such as prompt injection~\citep{zou2023universal,perez2022ignore} and susceptible to systemic biases~\citep{bender2021dangers,liang2022holistic}.

\textbf{Opacity and trust erosion.} Users cannot verify claims about model capabilities, safety measures, or data handling. Internal auditing processes at proprietary model providers erode public trust and prevent independent verification of safety claims, conflicting with established principles of transparent reporting~\citep{bommasani2023foundation,mitchell2019model}.

\textbf{Scalability bottlenecks.} The volume and combinatorial complexity of reasoning traces from modern LRMs, especially those employing branching search~\citep{lightman2023let,yao2023tree}, make comprehensive manual verification practically and economically infeasible, as evidenced by the massive human effort required for existing process supervision datasets~\citep{bai2022training}.

\textbf{Privacy-transparency tension.} Exposing complete reasoning traces for public audit risks intellectual property theft through model distillation~\citep{carlini2021extracting} and increases the surface area for extracting sensitive training data~\citep{nasr2023scalable}. Current approaches force a difficult trade-off between transparency and proprietary protection.

\textbf{The ``Black Box of Black Boxes'' problem.} For Multi-Agent Systems, when a swarm fails, the observed error is often far downstream of its cause, buried under layers of interaction and message passing. Recent empirical analyses reveal failure rates of 41\% to 87\% on standard benchmarks~\citep{cemri2025multifail}, with failures clustering into specification issues, inter-agent misalignment, and inadequate task verification.

\subsection{The Inadequacy of Current Evaluation Paradigms}

Current evaluation and alignment practice largely treats AI systems as \emph{linear transcripts}, scoring final outputs via ``LLM-as-a-Judge'' or single automated auditors~\citep{zheng2023judging,gu2024survey}. While convenient and scalable, this paradigm introduces critical vulnerabilities.

\textbf{Centralized Judge Biases.} LLM-as-a-Judge exhibits well-documented biases: position bias favoring certain response orderings~\citep{wang2024positionbias}, verbosity bias toward longer outputs~\citep{ye2024calm}, and self-enhancement bias when evaluating outputs from the same model family~\citep{chen2024humansorllms,panickssery2024selfeval}.

\textbf{The Attribution Gap.} In collaborative swarms, downstream agents frequently inherit upstream faults. Linear audits exhibit \emph{recency bias}, blaming the last agent that touched the answer rather than the true root cause. This mirrors the classic ``fault propagation'' problem in distributed systems~\citep{jha2024causalfault}, where errors at one node cascade through dependencies, yet observability tools only surface symptoms at the terminal node.

\textbf{Protocol Opacity.} Many failures are not ``wrong reasoning'' but broken interfaces, format mismatches, ignored constraints, or semantic drift during transmission~\citep{cemri2025multifail}. These \emph{edge-level} violations between agents are invisible to output-only evaluators, which cannot distinguish whether an agent produced incorrect reasoning or simply received corrupted input.

Multi-agent debate and ensemble voting methods~\citep{chan2023chateval,du2024multiagentdebate,harrasse2024d3} partially address single-judge brittleness but do not resolve the provenance problem, they still exhibit recency bias, blaming the last agent rather than the root cause.

\subsection{Motivating Example: Why Semantic Audit is Necessary}\label{sec:example}

To illustrate concretely why output-only auditing is insufficient, we present a clinical scenario where two reasoning models produce \textit{identical correct outputs}, yet one uses fundamentally flawed reasoning. This demonstrates that output-only auditing cannot distinguish sound evidence-based reasoning from error-prone reasoning that happens to reach the correct answer by coincidence.

\begin{tcolorbox}[colback=gray!10!white,colframe=black,title=\textbf{Clinical Input},boxsep=2pt]
\small
\textbf{Patient Note:} 58-year-old male admitted with atrial fibrillation. Weight: 85 kg; serum creatinine: 1.4 mg/dL; no active bleeding; no history of stroke.

\textbf{Clinical Note:} Patient has hypertension (on BP medications) and type 2 diabetes (on metformin). No heart failure, no prior stroke/TIA, no vascular disease.

\textbf{Task:} Calculate CHA$_2$DS$_2$-VASc score to determine anticoagulation need using the following scoring rules:
\begin{itemize}
    \item Congestive heart failure: +1 point if present
    \item Hypertension: +1 point if present
    \item Age $\geq$75 years: +2 points if applicable
    \item Age 65-74 years: +1 point if applicable
    \item Diabetes mellitus: +1 point if present
    \item Prior stroke/TIA: +2 points if present
    \item Vascular disease: +1 point if present
    \item Female sex: +1 point if female
\end{itemize}

\textbf{Correct Answer:} CHA$_2$DS$_2$-VASc score = 2 (Hypertension +1, Diabetes +1) $\rightarrow$ Anticoagulation recommended
\end{tcolorbox}

Figure~\ref{fig:reasoning-comparison} presents two complete reasoning traces that both arrive at the correct score of 2, demonstrating that \textbf{correct outputs do not guarantee sound reasoning}. The sound reasoning trace systematically evaluates all eight clinical rules, extracting evidence from the appropriate sources and applying each rule independently. In contrast, the flawed reasoning trace contains four critical errors: (1) confused age with weight and creatinine values, arriving at the correct 0 points through wrong logic, (2) completely skipped the Age 65-74 rule, (3) inferred hypertension from elevated creatinine instead of reading the explicit clinical note, and (4) combined vascular disease and sex rules instead of evaluating them independently.

\textbf{Both reasoning traces produce identical, clinically correct outputs} (score = 2, recommend anticoagulation), yet they follow fundamentally different reasoning processes. Output-only auditing, which evaluates only the final score and recommendation, passes both traces as correct. This creates a critical safety gap: the flawed reasoning model would be approved for clinical deployment despite containing four systematic errors. The flawed model arrives at the correct answer only through \textbf{multiple coincidences} that happen to align for this specific patient profile.

\begin{figure*}[t]
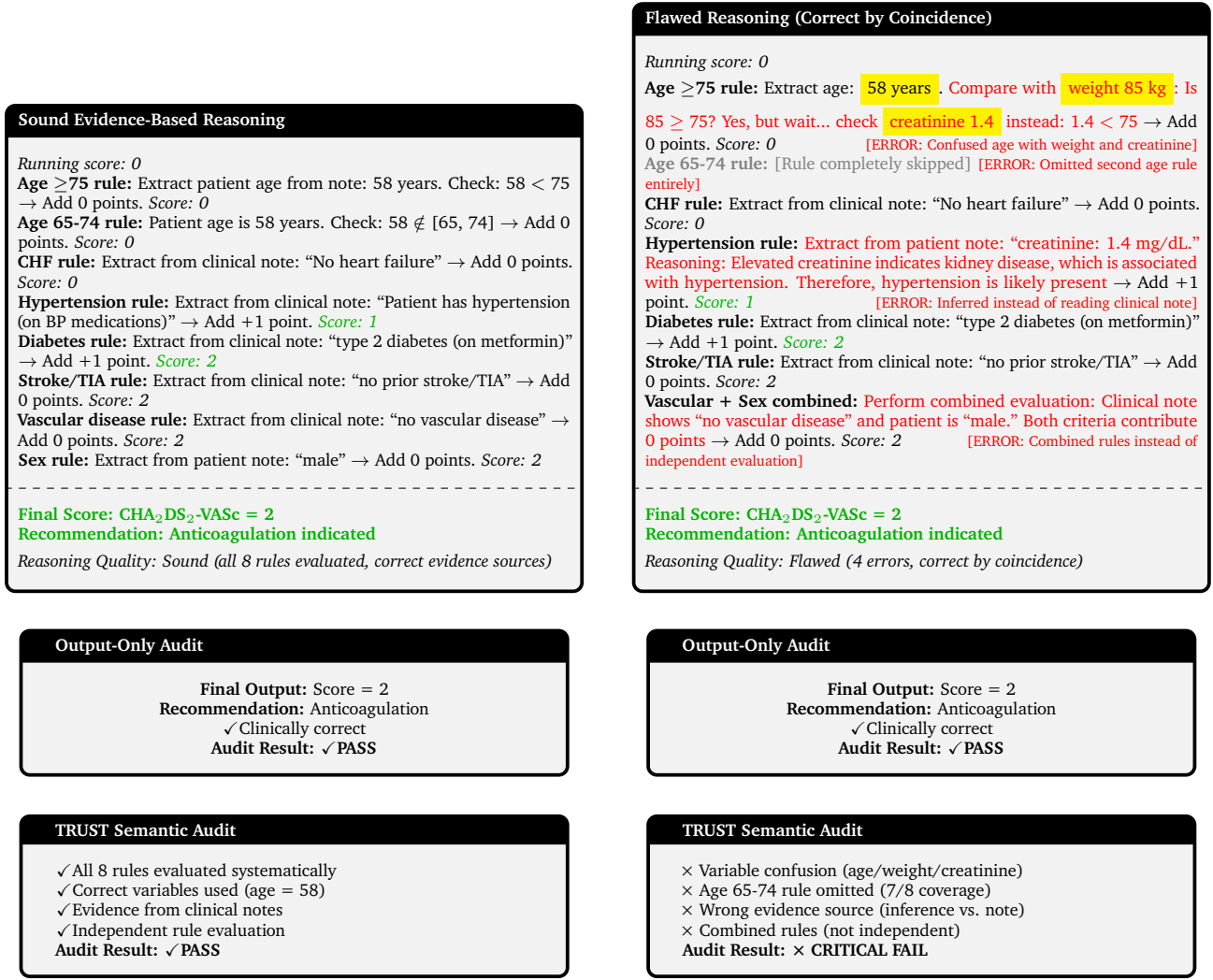

\centering
\scriptsize
\begin{minipage}[t]{0.48\textwidth}
\begin{tcolorbox}[colback=gray!10!white,colframe=black,title=\textbf{Sound Evidence-Based Reasoning},boxsep=2pt,left=2pt,right=2pt]

\textit{Running score: 0}

\textbf{Age $\geq$75 rule:} Extract patient age from note: 58 years. Check: 58 $<$ 75 $\rightarrow$ Add 0 points. 
\textit{Score: 0}

\textbf{Age 65-74 rule:} Patient age is 58 years. Check: 58 $\notin$ [65, 74] $\rightarrow$ Add 0 points.
\textit{Score: 0}

\textbf{CHF rule:} Extract from clinical note: ``No heart failure'' $\rightarrow$ Add 0 points.
\textit{Score: 0}

\textbf{Hypertension rule:} Extract from clinical note: ``Patient has hypertension (on BP medications)'' $\rightarrow$ Add +1 point.
\textcolor{green!70!black}{\textit{Score: 1}}

\textbf{Diabetes rule:} Extract from clinical note: ``type 2 diabetes (on metformin)'' $\rightarrow$ Add +1 point.
\textcolor{green!70!black}{\textit{Score: 2}}

\textbf{Stroke/TIA rule:} Extract from clinical note: ``no prior stroke/TIA'' $\rightarrow$ Add 0 points.
\textit{Score: 2}

\textbf{Vascular disease rule:} Extract from clinical note: ``no vascular disease'' $\rightarrow$ Add 0 points.
\textit{Score: 2}

\textbf{Sex rule:} Extract from patient note: ``male'' $\rightarrow$ Add 0 points.
\textit{Score: 2}

\tcblower

\textcolor{green!70!black}{\textbf{Final Score: CHA$_2$DS$_2$-VASc = 2}}

\textcolor{green!70!black}{\textbf{Recommendation: Anticoagulation indicated}}

\vspace{0.1cm}
\textit{Reasoning Quality: Sound (all 8 rules evaluated, correct evidence sources)}

\end{tcolorbox}
\end{minipage}
\hfill
\begin{minipage}[t]{0.48\textwidth}
\begin{tcolorbox}[colback=gray!10!white,colframe=black,title=\textbf{Flawed Reasoning (Correct by Coincidence)},boxsep=2pt,left=2pt,right=2pt]

\textit{Running score: 0}

\textbf{Age $\geq$75 rule:} Extract age: \colorbox{yellow}{58 years}. \textcolor{red}{Compare with \colorbox{yellow}{weight 85 kg}: Is 85 $\geq$ 75? Yes, but wait... check \colorbox{yellow}{creatinine 1.4} instead: 1.4 $<$ 75} $\rightarrow$ Add 0 points. 
\textit{Score: 0}
\textcolor{red}{\tiny \hfill [ERROR: Confused age with weight and creatinine]}

\textcolor{gray}{\textbf{Age 65-74 rule:} [Rule completely skipped]}
\textcolor{red}{\tiny \hfill [ERROR: Omitted second age rule entirely]}

\textbf{CHF rule:} Extract from clinical note: ``No heart failure'' $\rightarrow$ Add 0 points.
\textit{Score: 0}

\textbf{Hypertension rule:} \textcolor{red}{Extract from patient note: ``creatinine: 1.4 mg/dL.'' Reasoning: Elevated creatinine indicates kidney disease, which is associated with hypertension. Therefore, hypertension is likely present} $\rightarrow$ Add +1 point.
\textcolor{green!70!black}{\textit{Score: 1}}
\textcolor{red}{\tiny \hfill [ERROR: Inferred instead of reading clinical note]}

\textbf{Diabetes rule:} Extract from clinical note: ``type 2 diabetes (on metformin)'' $\rightarrow$ Add +1 point.
\textcolor{green!70!black}{\textit{Score: 2}}

\textbf{Stroke/TIA rule:} Extract from clinical note: ``no prior stroke/TIA'' $\rightarrow$ Add 0 points.
\textit{Score: 2}

\textbf{Vascular + Sex combined:} \textcolor{red}{Perform combined evaluation: Clinical note shows ``no vascular disease'' and patient is ``male.'' Both criteria contribute 0 points} $\rightarrow$ Add 0 points.
\textit{Score: 2}
\textcolor{red}{\tiny \hfill [ERROR: Combined rules instead of independent evaluation]}

\tcblower

\textcolor{green!70!black}{\textbf{Final Score: CHA$_2$DS$_2$-VASc = 2}}

\textcolor{green!70!black}{\textbf{Recommendation: Anticoagulation indicated}}

\vspace{0.1cm}
\textit{Reasoning Quality: Flawed (4 errors, correct by coincidence)}

\end{tcolorbox}
\end{minipage}

\vspace{0.3cm}

\begin{center}
\begin{minipage}{0.48\textwidth}
\centering
\begin{tcolorbox}[colback=gray!10!white,colframe=black,title=\textbf{Output-Only Audit},width=0.95\textwidth,boxsep=2pt]
\centering
\scriptsize
\textbf{Final Output:} Score = 2 \\
\textbf{Recommendation:} Anticoagulation \\
\checkmark Clinically correct \\
\textbf{Audit Result: \checkmark PASS}
\end{tcolorbox}
\end{minipage}
\hfill
\begin{minipage}{0.48\textwidth}
\centering
\begin{tcolorbox}[colback=gray!10!white,colframe=black,title=\textbf{Output-Only Audit},width=0.95\textwidth,boxsep=2pt]
\centering
\scriptsize
\textbf{Final Output:} Score = 2 \\
\textbf{Recommendation:} Anticoagulation \\
\checkmark Clinically correct \\
\textbf{Audit Result: \checkmark PASS}
\end{tcolorbox}
\end{minipage}
\end{center}

\vspace{0.15cm}

\begin{center}
\begin{minipage}{0.48\textwidth}
\centering
\begin{tcolorbox}[colback=gray!10!white,colframe=black,title=\textbf{TRUST Semantic Audit},width=0.95\textwidth,boxsep=2pt]
\scriptsize
\checkmark All 8 rules evaluated systematically \\
\checkmark Correct variables used (age = 58) \\
\checkmark Evidence from clinical notes \\
\checkmark Independent rule evaluation \\
\textbf{Audit Result: \checkmark PASS}
\end{tcolorbox}
\end{minipage}
\hfill
\begin{minipage}{0.48\textwidth}
\centering
\begin{tcolorbox}[colback=gray!10!white,colframe=black,title=\textbf{TRUST Semantic Audit},width=0.95\textwidth,boxsep=2pt]
\scriptsize
\texttimes\ Variable confusion (age/weight/creatinine) \\
\texttimes\ Age 65-74 rule omitted (7/8 coverage) \\
\texttimes\ Wrong evidence source (inference vs. note) \\
\texttimes\ Combined rules (not independent) \\
\textbf{Audit Result: \texttimes\ CRITICAL FAIL}
\end{tcolorbox}
\end{minipage}
\end{center}

\caption{
Comparison of sound clinical reasoning versus flawed reasoning that produces the correct CHA$_2$DS$_2$-VASc score (2) by coincidence.
Both traces generate identical outputs and pass output-only auditing. 
However, TRUST semantic auditing detects four critical errors in the flawed trace: 
(1) confused age with weight and creatinine values, 
(2) completely skipped the Age 65-74 rule, 
(3) inferred hypertension from elevated creatinine instead of reading the explicit clinical note, and 
(4) combined vascular disease and sex rules instead of evaluating independently. 
These errors remain hidden under output-only auditing but cause catastrophic failures under distribution shift.
}
\label{fig:reasoning-comparison}
\end{figure*}

\paragraph{Why Flawed Reasoning is Dangerous.}
Despite producing the correct score, the flawed reasoning contains four critical errors that happen to cancel out only under specific input conditions. These coincidences break catastrophically under a distribution shift. For instance, if the patient's age were 75 rather than 58, the variable confusion would yield incorrect results. If creatinine were normal (0.9 mg/dL), the inference-based hypertension detection would fail. The flawed model would be deployed with apparent 100\% accuracy on the validation set, only to cause patient harm when encountering slightly different clinical profiles.

\paragraph{Token Billing Without Value.}
The token usage patterns reveal an additional concern beyond correctness. The flawed reasoning trace consumes more tokens than the sound reasoning trace despite following an inferior reasoning process. This excess token usage includes wasted computation on incorrect variable comparisons, unnecessary inferential reasoning from creatinine levels rather than direct evidence extraction, and incomplete combined rule evaluation. Consequently, the hospital pays more for reasoning that arrives at the correct answer only through fortunate coincidences rather than sound clinical logic.

\paragraph{Implications for Semantic Auditing.}
This example demonstrates three critical gaps in output-only auditing that \TRUST{} semantic auditing addresses. First, output-only auditing \textbf{cannot detect correct-by-coincidence reasoning}. When both models produce a CHA$_2$DS$_2$-VASc score of 2, output-only auditing passes both as correct without examining their underlying reasoning. In contrast, \TRUST{} semantic auditing detects the variable confusion, skipped rules, and wrong evidence sources in the flawed model, appropriately failing it before deployment.

Second, output-only auditing \textbf{cannot verify billing integrity}. The hospital pays more in reasoning tokens for the flawed model without any means to verify whether the reasoning is sound or merely fortunate. \TRUST{} semantic auditing enables verification of token usage quality, allowing healthcare organizations to ensure they are paying for legitimate reasoning rather than systematic errors that happen to produce correct outputs.

Third, output-only auditing \textbf{cannot ensure regulatory compliance}. The FDA requires ``explainable clinical decision support systems'' that provide transparent reasoning for medical recommendations. Output-only auditing can confirm that a score is 2, but cannot explain how or why that score was calculated. In contrast, \TRUST{} semantic auditing provides a complete audit trail showing the evaluation of each clinical rule, the evidence sources used, and the logic applied. This capability is essential not only for regulatory approval but also for post-incident analysis when adverse outcomes occur.

\subsection{Research Questions and Vision}

Addressing these simultaneous challenges of robustness, scalability, opacity, and privacy demands a fundamentally new approach to the auditing paradigm. Our work is guided by the following research questions:

\begin{ResearchQ}
\textit{How can we design an auditing system that is robust to malicious participants and systemic bias without relying on a central trusted authority?}
\end{ResearchQ}

\begin{ResearchQ}
\textit{How can this system scale to audit complex reasoning traces while preserving the intellectual property of the model provider and ensuring public transparency?}
\end{ResearchQ}

\begin{ResearchQ}
\textit{How can we achieve deterministic root-cause attribution in multi-agent systems where errors propagate through complex interaction patterns?}
\end{ResearchQ}

We envision a future where AI verification operates like other internet protocols: open, permissionless, and resilient. In this future, anyone can provide auditing services by staking capital and demonstrating capability, and anyone can consume audited AI outputs with cryptographic guarantees of quality. Safety and alignment are enforced by economic incentives rather than trust, ensuring that no single entity can censor, manipulate, or shut down the verification network. Root causes of failures are deterministically attributed through causal analysis, closing the loop between detection and accountability.

\TRUST{} is the infrastructure layer that makes this vision possible.

\subsection{The \TRUST{} Framework}

We introduce \TRUST{} (\textbf{T}ransparent, \textbf{R}obust, and \textbf{U}nified \textbf{S}ervices for \textbf{T}rustworthy AI), a decentralized framework for auditing LLM reasoning and multi-agent systems.

To achieve \textbf{robustness}, \TRUST{} establishes a consensus mechanism among a diverse, multi-tier set of auditors, drawing on principles from Byzantine Fault Tolerant systems~\citep{castro1999practical,lamport2019byzantine} to provably guarantee audit correctness even with a significant fraction of malicious participants.

For \textbf{scalability}, the framework introduces a novel decomposition method that transforms reasoning traces into \textit{Hierarchical Directed Acyclic Graphs (HDAGs)}, a structured representation with five abstraction levels, Goal, Strategy, Tactic, Step, and Operation, that permits parallel verification of atomic reasoning steps by a distributed network. This representation is problem-agnostic (mathematics, science, programming, general reasoning) and enables different reasoning components to be audited at appropriate granularity by matched auditor types.

To address the \textbf{multi-agent attribution problem}, we integrate the \DAAN{} protocol (Decentralized Audit and Active Refinement), which projects interaction logs into \textit{Causal Interaction Graphs (CIGs)}. Unlike standard call graphs, CIGs explicitly map \textit{intent} against \textit{outcome}, enabling a dual-layer audit that separately verifies \emph{node validity} (reasoning correctness) and \emph{edge consistency} (protocol adherence and transmission fidelity). This separation enables deterministic fault localization, distinguishing root causes from cascade errors.

To jointly address \textbf{opacity} and \textbf{privacy}, all verification decisions are recorded on a transparent blockchain ledger for public auditability, while the protocol preserves confidentiality through privacy-by-design segmentation. Each auditor receives only atomic segments of the trace without access to complete context or final conclusions, preventing intellectual property theft or model distillation while enabling comprehensive verification through distributed consensus.

Finally, the causal structure enables \textbf{active refinement}: upon identifying root causes, only the faulty subgraph is pruned and regenerated through an Audit-Prune-Regenerate loop, avoiding expensive global retries while preserving valid intermediate work, yielding up to 99\% cost savings for leaf-level errors.

\subsection{\TRUST{} Applications}

\TRUST{} is initially a decentralized auditing framework for large reasoning models. We extend it to a broader framework for decentralized AI services on blockchain, providing solutions for many existing issues in private or centralized AI services:

\begin{enumerate}[leftmargin=2.3em]
\setlength\itemsep{0.3em}
    \item[(A1)] \textbf{Decentralized Auditing.} A decentralized framework for auditing trustworthiness and safety of AI reasoning, increasing reliability of open-source models from 45\% to 72.4\% by filtering ``correct answer, wrong reason'' hallucinations. (See Section~\ref{sec:app_audit})
    
    \item[(A2)] \textbf{Decentralized Model Leaderboard.} Creating tamper-proof model leaderboards resistant to selective release scandals and vote manipulation, with scores minted on blockchain that cannot be altered or cherry-picked. (See Section~\ref{sec:app_model})
    
    \item[(A3)] \textbf{Decentralized Data Annotation.} A trustless collaborative data annotation platform for crowd-sourcing high-quality RLHF training data, creating ``Proof-of-Quality'' dataset markets. (See Section~\ref{sec:app_annot})
    
    \item[(A4)] \textbf{Decentralized Agent Governance.} A decentralized agent pipeline collaboration and deployment platform with runtime guardrails, before an agent executes sensitive tools, actions must be approved by the auditing network, and self-healing swarms that detect loops and self-correct. (See Section~\ref{sec:app_agent})
\end{enumerate}

\subsection{Contributions Summary}

Our main contributions are:

\begin{itemize}[leftmargin=*]
    \item \textbf{First Decentralized Auditing System.} We introduce \TRUST{}, the first decentralized auditing system for reasoning traces that achieves privacy-preserving verification without exposing proprietary models, with provable guarantees under Byzantine fault conditions.
    
    \item \textbf{Novel Graph Decomposition Methods.} We develop systematic approaches to decompose Chain-of-Thought reasoning into Hierarchical Directed Acyclic Graphs (HDAGs) and multi-agent interactions into Causal Interaction Graphs (CIGs), enabling modular verification with deterministic fault localization.
    
    \item \textbf{Multi-Tier Verification Architecture.} We design a three-tier auditor system (Computational/LLM/Human) with stake-weighted consensus and cryptographic commit-reveal voting that routes verification tasks to appropriate auditor types based on complexity.
    
    \item \textbf{Theoretical Foundations.} We prove the Safety-Profitability Theorem, providing rigorous guarantees that honest auditors profit while malicious actors incur losses, with failure probabilities bounded by Hoeffding and Chernoff inequalities.
    
    \item \textbf{Active Refinement.} We enable surgical repair through Prune-Freeze-Repair cycles that regenerate only faulty subgraphs, reducing token costs by 60\% compared to global retry while preserving solution integrity.
    
    \item \textbf{Comprehensive Empirical Validation.} We conduct experiments on diverse datasets (MMLU-Pro, GSM8K, MATH, HumanEval, WritingPrompts) and models (GPT-OSS, DeepSeek-R1, Qwen), including human-in-the-loop studies with 30 participants, demonstrating effectiveness and robustness against centralized baselines.
\end{itemize}

\subsection{Paper Organization}

The remainder of this whitepaper is organized as follows.  Section~\ref{sec:audit} presents the decentralized auditing framework, including HDAG decomposition for single-model reasoning and CIG reconstruction for multi-agent systems. Section~\ref{sec:network} details the network design, including the three-tier auditor architecture and consensus mechanisms. Section~\ref{sec:theory} provides theoretical guarantees for statistical safety and economic sustainability. 
Section~\ref{sec:applications} describes the four primary applications. Section~\ref{sec:discussion} discusses implementation considerations, limitations, and future directions. Section~\ref{sec:conclusion} concludes the whitepaper.

\subsection{Elementary Components}\label{sec:components}

\paragraph{Decentralized Auditing Network (DAN).}
TRUST establishes a heterogeneous auditor network comprising computational checkers, LLM-based evaluators, and human experts. This multi-tier architecture routes verification tasks based on segment complexity, implementing stake-weighted Proof-of-Stake consensus adapted for AI auditing. Auditors operate as anonymous seats that receive only partial trace segments, preventing the reconstruction of proprietary logic while enabling distributed verification. Performance-based rewards and cryptographic slashing mechanisms ensure honest participation remains profitable while making coordinated attacks economically infeasible.

\paragraph{Novel Graph-Based Decomposition.}
TRUST introduces Hierarchical Directed Acyclic Graphs (HDAGs) to transform linear Chain-of-Thought traces into structured, independently auditable components across five abstraction levels: \textit{Goal}, \textit{Strategy}, \textit{Tactic}, \textit{Step}, and \textit{Operation}. Each HDAG node carries metadata specifying content, complexity score, and assigned auditor type, while edges encode logical relationships (\textit{decomposes\_to}, \textit{depends\_on}, \textit{enables}, \textit{validates}, \textit{contradicts}). This representation enables parallel verification of independent reasoning branches while preserving dependency constraints, generalizing across mathematical proofs, scientific reasoning, and open-domain problem-solving.

\paragraph{Multi-Tier Verification Architecture.}
Verification proceeds through three hierarchical layers of consensus. At the \textbf{seat layer}, individual auditors cast binary votes with type-specific error rates $\epsilon_t$ and adversarial probabilities $\rho_t$. At the \textbf{segment layer}, votes aggregate through weighted quorum thresholds $q_t = \lceil \tau k_t \rceil$. At the \textbf{trace layer}, segment outcomes combine through weighted voting $W = \sum_{s=1}^S w_{t(s)} B_s$, with failure probability bounded by Hoeffding and Chernoff inequalities. This architecture achieves graceful degradation under adversarial conditions, maintaining partial verification capability even when minority auditor coalitions coordinate attacks.

\paragraph{Blockchain Consensus Mechanism.}
TRUST employs a permissioned blockchain with Proof-of-Stake consensus to maintain tamper-resistant audit trails while preserving privacy. Smart contracts orchestrate session initialization, stake-weighted auditor assignment, cryptographic commit-reveal voting, and reward distribution. The blockchain records only metadata (session identifiers, segment hashes, vote commitments) while storing full reasoning traces off-chain in IPFS as encrypted, content-addressed objects. The commit-reveal protocol prevents vote manipulation through temporal separation, eliminating passive free-riding and reducing herding behavior.

\subsection{Protocol Overview}\label{sec:protocol_overview}

The TRUST verification workflow executes through six sequential phases: \textbf{(1) Trace Submission and Batching}, providers submit reasoning traces that are batched to anonymize source identity; \textbf{(2) HDAG Decomposition}, traces undergo automated decomposition into graph structures with complexity-based auditor assignment; \textbf{(3) Auditor Assignment and Segment Distribution}, smart contracts select auditor sets using stake-weighted random sampling and distribute encrypted segments via IPFS; \textbf{(4) Commit-Reveal Voting}, auditors submit cryptographic vote commitments followed by verified reveals; \textbf{(5) Consensus Aggregation}, votes aggregate through three-tier hierarchy to produce final verdicts recorded on-chain; \textbf{(6) Reward Distribution}, accurate auditors receive rewards while malicious participants face stake slashing based on performance metrics and honeypot validation.

This architecture addresses the tension between transparency and privacy in proprietary AI auditing. By recording verification decisions publicly while distributing only disconnected trace fragments to individual auditors, TRUST enables verifiable oversight of commercial reasoning systems without compromising intellectual property protections, critical for real-world deployment where model internals constitute trade secrets.

\clearpage
\section{Decentralized Auditing Framework}\label{sec:audit}

Auditing is the fundamental primitive of the \TRUST{} network. Current evaluation paradigms treat AI systems as linear transcripts, scoring final outputs via ``LLM-as-a-Judge'' or single automated auditors~\citep{zheng2023judging,gu2024survey}. While convenient and scalable, this approach introduces critical vulnerabilities: centralized judges exhibit well-documented biases, including position bias~\citep{wang2024positionbias}, verbosity bias~\citep{ye2024calm}, and self-enhancement bias~\citep{chen2024humansorllms}. More fundamentally, output-only evaluation suffers from an \textit{attribution gap}---when downstream agents inherit upstream faults, linear audits exhibit recency bias, blaming the last agent rather than the true root cause~\citep{cemri2025multifail}.

To achieve decentralized verification that is both scalable and privacy-preserving, \TRUST{} moves from the ``linear audit'' to the ``structural decomposition'' paradigm. This section presents two complementary decomposition methods: \textbf{Hierarchical Directed Acyclic Graphs (HDAGs)} for single-model reasoning traces, and \textbf{Causal Interaction Graphs (CIGs)} for multi-agent systems.

\begin{figure}[h]
    \centering
    \includegraphics[width=1.0\textwidth]{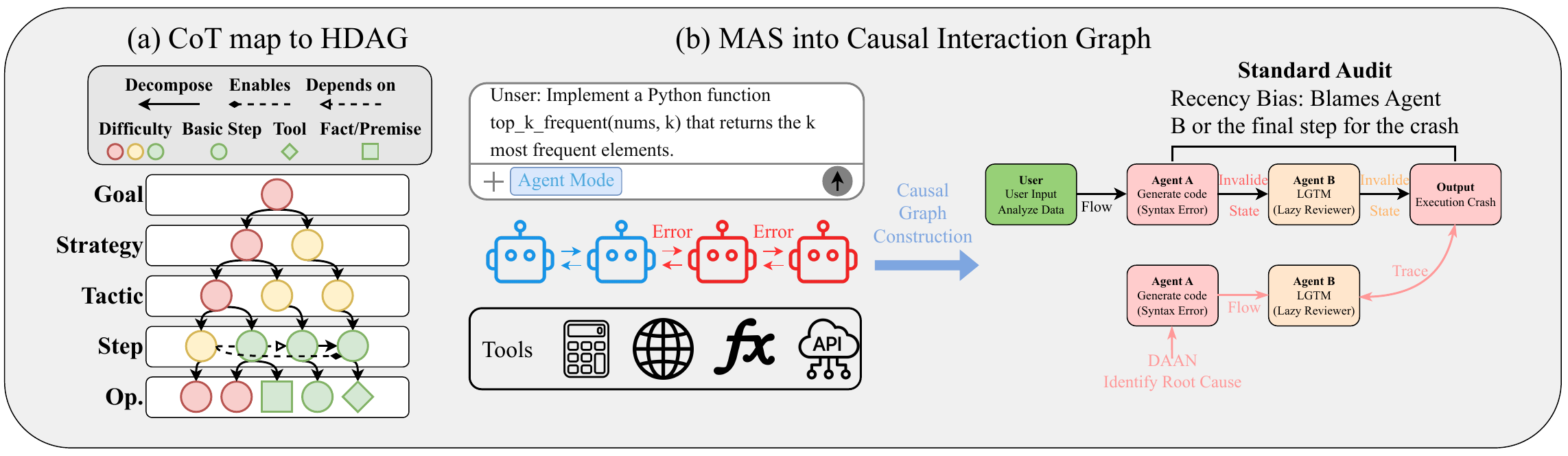}
    \caption{The dual decomposition engine of \TRUST{}. Linear Chain-of-Thought reasoning is mapped to Hierarchical Directed Acyclic Graphs (a), while Multi-Agent interactions are projected into Causal Interaction Graphs (b). Both representations enable parallel, privacy-preserving verification by distributed auditor networks.}
    \label{fig:decomposition}
\end{figure}

\subsection{Hierarchical Directed Acyclic Graphs (HDAGs)}\label{sec:hdag}

For singular Large Reasoning Models (LRMs) producing Chain-of-Thought (CoT) outputs~\citep{wei2022chain}, we introduce the Hierarchical Directed Acyclic Graph (HDAG). Linear text traces hide logical fallacies in prose; HDAGs expose them by enforcing structural rigor. This decomposition mirrors neural circuits in the frontal cortex, which process reasoning through multi-level evidence integration~\citep{morteza2019hierarchical}---just as the brain organizes reasoning hierarchically rather than linearly, our HDAG design enables different reasoning components to be audited at appropriate granularity.

Prior work on CoT decomposition, such as DLCoT~\citep{luo2025decon}, introduced automated frameworks for decomposing long reasoning traces into structured segments, primarily for model distillation. These works observe that CoTs can follow linear, tree, or more general network structures. DLCoT applies macro-structure parsing to divide CoTs into four parts---\emph{Problem Restatement}, \emph{Approach Exploration}, \emph{Verification}, and \emph{Summary}---before further segmenting into stepwise units. Other research focuses on extracting causal structures from token-level processing functions~\citep{kothapalli2025cot}.

In contrast, we propose a general, \textbf{problem-agnostic} approach applicable across mathematics, science, engineering, and open-domain reasoning.

\subsubsection{Formal Definition}

\begin{definition}[Hierarchical Directed Acyclic Graph]
An HDAG is defined as $G = (V, E, \mathcal{L})$, where:
\begin{itemize}
    \item $V = \{v_1, v_2, \ldots, v_n\}$ represents reasoning units (nodes), each carrying metadata: unique identifier, content summary, complexity score, assigned auditor type, and dependency list.
    \item $E \subseteq V \times V$ represents directed dependencies (edges) with semantic labels.
    \item $\mathcal{L} = \{l_{\text{goal}}, l_{\text{strategy}}, l_{\text{tactic}}, l_{\text{step}}, l_{\text{operation}}\}$ represents the five hierarchy levels.
\end{itemize}
\end{definition}

The hierarchy constraint requires that for any edge $(u, v) \in E$ where $\text{level}(u) = l_i$ and $\text{level}(v) = l_j$, we have $i \leq j$ (information flows downward through abstraction levels).

\subsubsection{The Five Abstraction Levels}

The decomposition engine parses raw model output into five abstraction layers, each with distinct verification requirements.

\textbf{Goal ($l_{\text{goal}}$).} The root node representing the user's initial prompt or problem statement. This level captures the high-level objective and success criteria, with verification focused on problem understanding and scope definition.

\textbf{Strategy ($l_{\text{strategy}}$).} High-level approaches proposed to solve the goal, such as ``Use integration by parts,'' ``Apply dynamic programming,'' or ``Decompose into subproblems.'' Strategy nodes require domain expertise to verify appropriateness and are typically assigned to Tier-2 or Tier-3 auditors.

\textbf{Tactic ($l_{\text{tactic}}$).} Specific methodologies implementing each strategy. A strategy of ``Use integration by parts'' might decompose into tactics such as ``Identify $u$ and $dv$'' and ``Apply the formula $\int u\,dv = uv - \int v\,du$.'' Tactics bridge abstract strategies and concrete execution.

\textbf{Step ($l_{\text{step}}$).} Discrete reasoning blocks that execute tactics. Each step represents a self-contained logical unit with clear inputs and outputs. Steps are the primary unit of semantic auditing, verified for logical consistency and faithfulness to the stated reasoning~\citep{lanham2023measuring}.

\textbf{Operation ($l_{\text{operation}}$).} Atomic calculations, code executions, or tool calls verifiable by deterministic checkers. Operations include arithmetic computations, syntax validation, unit test execution, and API call verification. These are assigned exclusively to Tier-1 Computational Auditors.

\begin{figure}[h]
    \centering
    \includegraphics[width=0.6\textwidth]{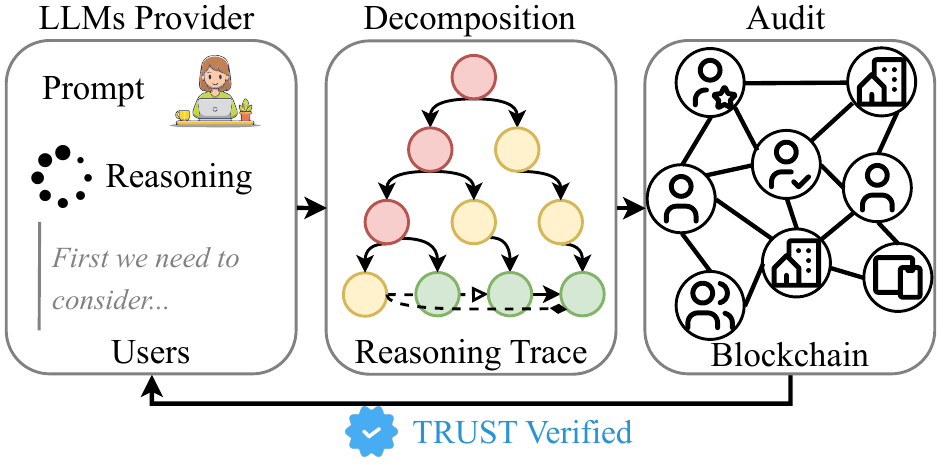}
    \caption{Example HDAG for a mathematical integration problem. Node color indicates the assigned difficulty level for different auditor tiers. Node shape denotes the type of step (basic reasoning, tool usage, or fact/premise). Edges represent logical relationships, including decomposition, dependency, and validation.}
    \label{fig:hdag}
\end{figure}

\subsubsection{Edge Semantics}

Edges in the HDAG carry semantic weight essential for auditing. We define five primary edge types.

\textbf{\textit{decomposes\_to}.} A parent node breaks down into children, as in Strategy $\to$ Tactics. This relationship is hierarchical and captures the refinement of abstract concepts into concrete actions.

\textbf{\textit{depends\_on}.} A sequential dependency where the validity of node $v_j$ requires the validity of node $v_i$, for example when Step 3 uses the result from Step 2. This edge type captures data flow and logical prerequisites.

\textbf{\textit{enables}.} A weaker form of dependency where node $v_i$ provides context or capability for $v_j$ without imposing a strict logical requirement, such as how importing a library enables subsequent function calls.

\textbf{\textit{validates}.} A self-correction or verification step that audits a previous sibling, signaled by phrases like ``Let me verify this calculation.'' These nodes are critical for detecting internal consistency and self-awareness in reasoning.

\textbf{\textit{contradicts}.} A detected logical inconsistency between nodes. When the decomposition engine or auditors identify contradictions, these edges are flagged for resolution, and traces with unresolved contradictions fail validation.
This edge semantics enables \TRUST{} to perform \textbf{dependency-aware auditing}: if a parent node fails validation, all dependent children are automatically marked as \textit{Invalid\_Cascade} rather than being evaluated independently, preventing wasted computation and false attributions.

\subsubsection{HDAG Construction Process}

\TRUST{} constructs HDAGs through a five-step automated process.

\textbf{Step 1: Identify Abstraction Levels.} The raw problem statement, reasoning trace (including any tool usage), and final output are parsed into the five semantic hierarchy levels using a trained classifier. We employ few-shot prompting with GPT-4 or fine-tuned models to identify level boundaries.

\textbf{Step 2: Segment Within Each Level.} Each level is further divided into granular units with associated metadata. Difficulty annotations (Easy/Medium/Hard) are assigned based on the presence of formal notation, domain specificity, the length and complexity of logical chains, and the availability of ground truth for verification.

\textbf{Step 3: Extract Relationships.} Logical dependencies between segments are mapped into the five edge types. We use dependency parsing, coreference resolution, and explicit markers (``therefore,'' ``using the above,'' ``let me verify'') to identify relationships.

\textbf{Step 4: Assign Auditor Types.} Segments are routed to auditor tiers based on level and difficulty. Operation-level nodes ($l_{\text{operation}}$) are handled by Tier-1 (Computational). Step-level nodes ($l_{\text{step}}$) are split by difficulty, with Easy and Medium steps handled by Tier-2 (LLM) and Hard steps escalated to Tier-3 (Human). Tactic, strategy, and goal nodes are assigned to Tier-2 or Tier-3 depending on domain expertise.

\textbf{Step 5: Validate and Finalize HDAG.} Quality assurance checks ensure that the graph is acyclic, all nodes are reachable from the goal, leaf nodes reside at $l_{\text{operation}}$ or $l_{\text{step}}$ levels, and no orphan nodes exist. The finalized HDAG is serialized for distribution to the auditing network.

\paragraph{Worked Example: HDAG Construction for Integration.}

Consider the problem ``Compute $\int x^2 e^x \, dx$.'' The reasoning trace decomposes as follows.

\textbf{Level 0 (Goal).} A single goal node $v_0$ states ``Compute $\int x^2 e^x \, dx$'' and is assigned to a Tier-2 auditor.

\textbf{Level 1 (Strategy).} Node $v_1$ proposes ``Apply integration by parts twice since $x^2$ requires two reductions'' (Tier-2), connected by $v_0 \xrightarrow{\text{decomposes\_to}} v_1$.

\textbf{Level 2 (Tactics).} The strategy refines into two tactic nodes, both at Tier-2: $v_2$ specifies the first application with $u = x^2$ and $dv = e^x dx$, while $v_3$ specifies the second application with $u = 2x$ and $dv = e^x dx$. The relevant edges are $v_1 \xrightarrow{\text{decomposes\_to}} v_2$ and $v_2 \xrightarrow{\text{enables}} v_3$.

\textbf{Level 3 (Steps).} Four step nodes execute the tactics. Node $v_4$ computes $du = 2x\,dx$ and $v = e^x$ (Tier-1); node $v_5$ applies the integration-by-parts formula to obtain $x^2 e^x - \int 2x e^x\,dx$ (Tier-2); node $v_6$ computes $du = 2\,dx$ and $v = e^x$ (Tier-1); and node $v_7$ applies the formula to obtain $2xe^x - \int 2e^x\,dx$ (Tier-2). Representative edges include $v_2 \xrightarrow{\text{decomposes\_to}} v_4$ and $v_4 \xrightarrow{\text{depends\_on}} v_5$.

\textbf{Level 4 (Operations).} Three operation nodes, all Tier-1, close out the computation. Node $v_8$ evaluates $\int 2e^x\,dx = 2e^x + C$ (verified by a computer algebra system), $v_9$ combines results to yield $x^2 e^x - 2xe^x + 2e^x + C$, and $v_{10}$ verifies the answer by symbolic differentiation.

This decomposition enables parallel verification: Tier-1 auditors can immediately verify $v_4$, $v_6$, $v_8$, $v_9$, and $v_{10}$ while Tier-2 auditors evaluate the strategy and step nodes. The $v_{10}$ node, connected through a \textit{validates} edge, triggers automatic verification against the original problem.

\subsection{Causal Interaction Graphs (CIGs) for Multi-Agent Systems}\label{sec:cig}

While HDAGs address single-model reasoning, Multi-Agent Systems (MAS) present a fundamentally different challenge: the ``Black Box of Black Boxes'' problem~\citep{cemri2025multifail}. When a swarm fails, the observed error is often far downstream of its cause, buried under layers of interaction and message passing. Recent empirical analyses reveal that LLM-based multi-agent systems fail at rates of 41\%-87\% on standard benchmarks, with failures clustering into three categories: specification issues, inter-agent misalignment, and inadequate task verification~\citep{cemri2025multifail}.

Current evaluation paradigms for MAS exhibit two critical failure modes:

\textbf{(C1) The Attribution Gap:} In collaborative swarms, downstream agents frequently inherit upstream faults. Linear audits exhibit \textit{recency bias}, blaming the last agent that touched the answer rather than the root cause. This mirrors the classic ``fault propagation'' problem in distributed systems~\citep{jha2024causalfault,atlas2024efficientfault}, where errors at one node cascade through dependencies, yet observability tools only surface symptoms at the terminal node.

\textbf{(C2) Protocol Opacity:} Many failures are not ``wrong reasoning'' but broken interfaces---format mismatches, ignored constraints, or semantic drift during transmission~\citep{cemri2025multifail,gradientinstitute2024riskanalysis}. These \textit{edge-level} violations between agents are invisible to output-only evaluators, which cannot distinguish whether an agent produced incorrect reasoning or simply received corrupted input.

To address these challenges, we integrate the \textbf{DAAN (Decentralized Audit and Active Refinement)} protocol into \TRUST{}, which projects interaction logs into \textbf{Causal Interaction Graphs (CIGs)}.

\subsubsection{Formal Definition}

\begin{definition}[Causal Interaction Graph]
A CIG is a directed graph $G_{\text{CIG}} = (A, M, C)$ where:
\begin{itemize}
    \item $A = \{a_1, a_2, \ldots, a_n\}$: The set of agents involved in the swarm, each with a defined role (e.g., Coder, Reviewer, Planner).
    \item $M = \{m_1, m_2, \ldots, m_k\}$: The set of messages passed between agents, each containing: sender, receiver, content, timestamp, and protocol metadata.
    \item $C \subseteq M \times M$: Causal links indicating trigger-response relationships between messages.
\end{itemize}
\end{definition}

Unlike a standard call graph, the CIG explicitly maps \textit{intent} against \textit{outcome}. If Agent A asks Agent B to ``Write a Python script for sorting,'' and Agent B responds with code, the edge $(m_{\text{request}} \to m_{\text{response}})$ represents both the intent (sorting script requested) and the outcome (code provided). The audit then verifies alignment between these.

\subsubsection{Node Projection}
Each node $v$ in the CIG encapsulates an atomic agent state with four components: a \textbf{Role} that designates the agent's function (e.g., Coder, Reviewer, Planner, Aggregator); an \textbf{Input} representing the context received from upstream agents or the environment; an \textbf{Output} representing the produced artifact (code, text, decision, or tool call result); and a \textbf{Status} that records the error classification assigned after audit (Valid, Invalid\_Root, Invalid\_Cascade, or Negligent). This formulation targets API-based systems where internal reasoning traces are not exposed; we audit the \textit{input-output transformation} rather than internal deliberation.

\subsubsection{Edge Mapping and Topological Properties}
Edges capture information flow between agents. An edge exists from node $v_i$ to $v_j$ if the output of $v_i$ appears in the input of $v_j$. Each edge carries three pieces of metadata: the \textbf{Transmitted Message}, which is the actual content passed between agents; a \textbf{Protocol Compliance Score} indicating whether the message conforms to the expected format and schema; and a \textbf{Fidelity Score} indicating whether key information was preserved without corruption or hallucination.

The CIG supports two critical operations. The first, $\texttt{get\_parents}(v)$, returns all nodes that sent input to $v$. The second, $\texttt{get\_descendants}(v)$, returns all downstream nodes affected if $v$ fails. A key property of the CIG is that it admits a \textbf{topological sort}, ordering nodes such that every agent is processed \textit{after} all of its upstream dependencies. This ordering enables deterministic fault localization by ensuring parent validity is known before classifying children.

\subsubsection{Dual-Layer Audit}
The \TRUST{} auditor performs verification at two distinct layers, separating \textit{content correctness} from \textit{transmission integrity}.

\paragraph{Layer 1: Node Validity Audit.}
The node validity oracle evaluates whether an agent's output is correct given its input and role. Without access to internal reasoning traces, we assess three dimensions. \textbf{Correctness} measures whether the output is factually and logically sound, with calculations verified, claims accurate, and no hallucinations introduced. \textbf{Completeness} measures whether the output addresses all requirements specified in the input. \textbf{Role Adherence} measures whether the output matches the agent's designated function. The auditor scores each node on a scale of $[0.0, 1.0]$, and nodes scoring below the threshold $\tau_{\text{node}} = 0.8$ are flagged as invalid. For mathematical tasks, this evaluation includes a \textbf{Segment Audit} that independently re-computes intermediate values. For code generation, we execute available unit tests.

\paragraph{Layer 2: Edge Integrity Audit (``The Telephone Test'').}
The edge audit validates information transmission through two sub-checks. The first sub-check, \textit{Protocol Compliance}, verifies that outputs conform to interface contracts; failures include format mismatches (e.g., JSON requested but prose returned), schema violations such as missing required fields or incorrect types, and ignored constraints where explicit instructions are disregarded. The second sub-check, \textit{Information Fidelity}, detects corruption during transmission across four failure modes: value corruption, in which key values such as numbers, names, or identifiers are altered; semantic drift, in which the core meaning is changed or distorted; information injection, in which the receiver hallucinates data not present in the message; and information loss, in which critical content is dropped by the receiver. Both checks apply a threshold of $\tau_{\text{edge}} = 0.8$, and edges that fail either check indicate transmission failure and require root cause analysis.

\subsubsection{Error Taxonomy and Fault Classification}

We assign each node to one of four mutually exclusive states based on its role in error propagation:

\begin{table}[h]
\centering
\caption{Error Taxonomy for CIG Nodes}
\label{tab:error_taxonomy}
\begin{tabular}{@{}lll@{}}
\toprule
\textbf{Status} & \textbf{Definition} & \textbf{Implication} \\
\midrule
\textsc{Valid} & Passes all verification checks & No action needed \\
\textsc{Invalid\_Root} & Incorrect output despite valid inputs & True failure point; requires repair \\
\textsc{Invalid\_Cascade} & Fails due to upstream invalid nodes & Inherited failure; auto-resolved \\
\textsc{Negligent} & Reviewer that approved invalid work & ``Lazy reviewer''; penalized \\
\bottomrule
\end{tabular}
\end{table}

The critical distinction: when a node fails a validity check, we examine its parents. If all parents are \textsc{Valid}, the node is classified as \textsc{Invalid\_Root}---it had good input but produced bad output. If any parent is invalid, the node is \textsc{Invalid\_Cascade}---its failure is inherited, not its own fault. This prevents downstream agents from being blamed for upstream mistakes.

\subsubsection{Deterministic Fault Localization Algorithm}

The provenance tracing algorithm processes nodes in topological order, ensuring parent status is known before classifying children:

\begin{algorithm}[H]
\caption{Deterministic Fault Localization}
\label{alg:fault_localization}
\begin{algorithmic}[1]
\REQUIRE Graph $G = (V, E)$, thresholds $\tau_{\text{node}}, \tau_{\text{edge}}$
\ENSURE Fault state $S(v)$ for all $v \in V$
\STATE $V_{\text{topo}} \gets \textsc{TopologicalSort}(G)$
\FOR{$v \in V_{\text{topo}}$}
    \STATE $P(v) \gets \textsc{GetParents}(v)$
    \STATE $\textit{propagated} \gets \exists p \in P(v): S(p) \neq \textsc{Valid}$
    \IF{any incoming edge fails protocol/fidelity check}
        \STATE $S(v) \gets \textsc{Invalid\_Root}$ \COMMENT{Protocol breach at boundary}
    \ELSIF{node validity score $\geq \tau_{\text{node}}$}
        \IF{$v$ is Reviewer role \AND approved invalid parent}
            \STATE $S(v) \gets \textsc{Negligent}$
        \ELSE
            \STATE $S(v) \gets \textsc{Valid}$
        \ENDIF
    \ELSE
        \IF{$\textit{propagated}$}
            \STATE $S(v) \gets \textsc{Invalid\_Cascade}$
        \ELSE
            \STATE $S(v) \gets \textsc{Invalid\_Root}$
        \ENDIF
    \ENDIF
\ENDFOR
\RETURN $\{S(v) : v \in V\}$
\end{algorithmic}
\end{algorithm}

\paragraph{Stationarity Detection for Cyclic Interactions.}
For cyclic interactions (e.g., Coder $\leftrightarrow$ Reviewer loops), we detect feedback traps by comparing consecutive outputs from the same agent. If semantic similarity exceeds the threshold $\tau_{\text{stat}} = 0.95$, the system is stuck in a loop and intervention is triggered to break the cycle with a divergence directive.

\subsection{Active Refinement: The Audit-Prune-Regenerate Loop}\label{sec:active_refinement}
Static auditing only identifies errors. \TRUST{} enables \textbf{Active Refinement}, a dynamic repair process that leverages the causal structure of HDAGs and CIGs for surgical correction. This approach is inspired by fault localization techniques in distributed systems~\citep{jha2024causalfault} and recent advances in targeted model refinement~\citep{lightman2023let}.

\subsubsection{The Prune-Freeze-Repair Cycle}
Upon identifying root cause nodes, \TRUST{} triggers surgical repair via a four-phase cycle.

\paragraph{Phase 1: Prune.}
Remove all \textsc{Invalid\_Root} and \textsc{Negligent} nodes from the graph, along with all of their downstream descendants (the nodes reachable via \texttt{get\_descendants}). Descendants must be re-run because they received tainted input; their outputs are causally contaminated regardless of their own apparent correctness.

\paragraph{Phase 2: Freeze.}
Retain all \textsc{Valid} nodes that do not depend on any failed node. Their computations are correct and do not need to be repeated. This is the key efficiency gain of the cycle: valid intermediate work is preserved rather than discarded.

\paragraph{Phase 3: Repair.}
Construct a feedback prompt containing the original input or context for the failed node, the audit critique enumerating the specific errors identified, the error type classification, and instructions for targeted correction. Re-prompt the model or agent with this feedback to generate a corrected output, after which the regenerated subgraph undergoes the full audit pipeline.

\paragraph{Phase 4: Re-Audit and Iterate.}
The new subgraph is submitted to the auditing network. This loop continues until one of three termination conditions is reached: all nodes pass validation, the maximum iteration depth is exceeded (default: 5 rounds), or stationarity is detected (no improvement between rounds).

\subsubsection{Cost Analysis}
The efficiency gains from active refinement depend on the location of errors within the graph topology.

\begin{proposition}[Repair Cost Bounds]
In a system with $N$ agents where an error occurs at depth $d$ in a tree of maximum depth $D$, the \textbf{global retry cost} is $O(N)$, since every node must be re-executed, whereas the \textbf{surgical repair cost} is $O(2^{D-d})$, covering only the failed node together with its descendants.
\end{proposition}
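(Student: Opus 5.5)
The plan is to make the implicit cost model explicit and then count nodes. I will assume each node (agent invocation) has unit re-execution cost, bounded by a constant $c$. I will also assume the interaction structure is a rooted tree of maximum depth $D$ with branching factor at most $2$. The bound $O(2^{D-d})$ only makes sense under this binary-tree assumption; for general branching factor $b$ the same argument yields $O(b^{D-d})$. Under this model the cost of any repair strategy equals the number of nodes it re-executes, up to the factor $c$.

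\textbf{Global retry.} By definition, global retry discards the whole graph and re-runs every agent. Its cost is therefore $\sum_{v\in V} c = cN = O(N)$. This holds trivially, and with equality rather than just an upper bound.

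\textbf{Surgical repair.} Let $v$ be the failed node at depth $d$. By Phase~1 (Prune) of the Prune-Freeze-Repair cycle, exactly $\{v\}\cup\texttt{get\_descendants}(v)$ is removed. By Phase~2 (Freeze), every other node is retained without re-execution. In a tree, $\texttt{get\_descendants}(v)$ is exactly the subtree rooted at $v$. That subtree has height at most $D-d$, so with branching at most $2$ it contains at most
\begin{equation*}
\sum_{i=0}^{D-d} 2^{i} = 2^{D-d+1}-1 = O(2^{D-d})
\end{equation*}
nodes, counting $v$ itself at $i=0$. Multiplying by $c$ gives the claimed surgical repair cost. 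I will also note that this count is always at most $N$, so surgical repair never does worse than global retry. Savings arise whenever $d>0$; in particular a leaf error ($d=D$) costs $O(1)$, which is the source of the "up to 99\%" savings mentioned earlier.

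\textbf{Main obstacle.} The difficulty is not the counting but justifying the modelling assumptions. First, CIGs and HDAGs are DAGs, not trees, and a node may have several parents. I would handle this by noting that Algorithm~\ref{alg:fault_localization} and the Prune phase only use the descendant set, so the bound holds whenever the descendant closure of $v$ has the tree-like size bound. For general DAGs I would state the cost as $O(|\texttt{get\_descendants}(v)|+1)$, with the tree case as the specialization. Second, the statement counts one repair round. The Re-Audit loop is capped at $5$ iterations, which changes the bound only by a constant factor, so it remains $O(2^{D-d})$.
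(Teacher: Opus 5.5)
Your proposal is correct and follows the paper's own reasoning. The paper gives no separate proof, only the inline justification that global retry re-executes all $N$ nodes while surgical repair re-executes just the failed node and its descendants. Your explicit unit-cost and branching-factor-two assumptions, together with the count $2^{D-d+1}-1$, are exactly what make that justification rigorous. One convention note: the paper places the root at depth $1$ (``root errors at $d=1$ yield 0\% savings''), so your ``savings whenever $d>0$'' corresponds to $d>1$ in its indexing.
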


For leaf-level errors ($d = D$), surgical repair costs $O(1)$ versus $O(N)$ for global retry, an exponential improvement. Our experiments demonstrate this asymmetry across error depths: root errors at $d=1$ yield 0\% savings since the full graph must be regenerated, mid-level errors at $d=D/2$ yield 50\% to 75\% savings, and leaf errors at $d=D$ yield over 99\% savings.

\subsubsection{Feedback Injection Strategies}
The quality of regeneration depends critically on how audit feedback is presented to the model. We employ three strategies based on error type.

\textbf{Corrective Feedback.} For factual or logical errors, the prompt provides the specific mistake and the correct value:
\begin{quote}
``Your calculation of $\int 2e^x dx$ yielded $e^x$. The correct result is $2e^x + C$. Please recalculate the final answer.''
\end{quote}

\textbf{Directive Feedback.} For strategy or approach errors, the prompt provides guidance without revealing the solution:
\begin{quote}
``Your approach using substitution is inefficient for this integral. Consider integration by parts with $u = x^2$.''
\end{quote}

\textbf{Divergence Feedback.} For stationarity or loop detection, the prompt forces exploration of a new direction:
\begin{quote}
``Your previous two attempts produced identical outputs. You must try a fundamentally different approach. Consider: [alternative strategies].''
\end{quote}

A dynamic temperature schedule (initial $T = 0.7$, incrementing by $0.1$ per round) encourages exploration in later iterations when conservative approaches have failed.

\clearpage
\section{Network Design}\label{sec:network}

The \TRUST{} network architecture facilitates the auditing processes described above while maintaining decentralization, privacy, and efficiency. This section details the Decentralized Auditing Network (DAN), the multi-tier consensus mechanism, and the storage and privacy infrastructure.

\subsection{Decentralized Auditing Network (DAN)}\label{sec:dan}

The DAN is the workforce of the \TRUST{} platform, a permissionless network where entities stake tokens to register as auditors. Drawing on principles from Byzantine Fault Tolerant systems~\citep{castro1999practical,lamport2019byzantine}, the DAN achieves reliable consensus even when a fraction of participants are malicious or faulty.

\subsubsection{Three-Tier Auditor Architecture}

To optimize for both cost and accuracy, we classify auditors into three tiers based on capability and verification scope.

\paragraph{Tier 1: Computational Auditors (The ``Checkers'').}
Tier-1 auditors comprise deterministic algorithms, language interpreters (Python, JavaScript), theorem provers (Lean, Coq, Isabelle), computer algebra systems (SymPy, Mathematica), and static analyzers. They verify syntax correctness, execute code snippets in sandboxed environments, check arithmetic calculations, validate data formats, and confirm API response schemas. Their verification scope covers $l_{\text{operation}}$ nodes in HDAGs and edge protocol compliance in CIGs. By construction, the tier achieves an error rate of $\epsilon_C = 0$. Each verification costs between \$0.001 and \$0.01 with instant response under 100~ms, and the staking requirement is low, reflecting primarily computational resources. Tier-1 auditors handle the majority of verification volume, typically 60\% to 70\% of nodes, at negligible cost, enabling the system to scale efficiently.

\paragraph{Tier 2: Model-Based Auditors (The ``Evaluators'').}
Tier-2 auditors are specialized language models fine-tuned on auditing datasets, including Llama-3-8B-Audit, Mistral-7B-Verify, and custom models trained on process supervision data~\citep{lightman2023let,bai2022training}. Their role is to verify logical consistency, semantic coherence, standard reasoning steps, and moderate-complexity domain knowledge, as well as to assess the faithfulness of reasoning to stated conclusions~\citep{lanham2023measuring}. Their verification scope covers $l_{\text{step}}$ and $l_{\text{tactic}}$ nodes with Easy or Medium difficulty, along with node validity in CIGs for standard agent interactions. Empirical error rates are approximately $\epsilon_L \approx 0.05$ on standard reasoning tasks. Each verification costs between \$0.01 and \$0.10 with response times of one to five seconds, and the staking requirement is medium, combining GPU resources with a token stake. Tier-2 auditors provide the semantic understanding necessary for evaluating reasoning quality while remaining computationally tractable for high-volume verification.

\paragraph{Tier 3: Human-in-the-Loop (The ``Experts'').}
Tier-3 auditors are verified human domain experts, including PhD mathematicians, medical professionals such as licensed physicians and pharmacists, legal experts including attorneys and paralegals, senior software developers, and certified financial analysts. They verify high-ambiguity strategy nodes, assess safety implications, evaluate domain-specific correctness requiring specialized training, and resolve disputes from lower tiers. Their verification scope covers $l_{\text{strategy}}$ and $l_{\text{goal}}$ nodes, Hard-difficulty $l_{\text{step}}$ nodes, node validity for high-stakes agent decisions, and appeal resolution. Empirical error rates are approximately $\epsilon_H \approx 0.30$, reflecting human variability and fatigue and mitigated by multi-expert consensus, while adversarial probability is bounded by $\rho_H \leq 0.10$ through reputation and slashing mechanisms. Each verification costs between \$1 and \$10 with response times ranging from minutes to hours, and the staking requirement is high, combining a significant token stake with credential verification. Human experts are reserved for cases where automated verification is insufficient, ensuring that critical decisions receive appropriate scrutiny while minimizing human bottlenecks.

\begin{table}[h]
\centering
\caption{Auditor Tier Comparison}
\label{tab:auditor_tiers}
\begin{tabular}{@{}lccccc@{}}
\toprule
\textbf{Tier} & \textbf{Error Rate} & \textbf{Adversarial} & \textbf{Cost} & \textbf{Latency} & \textbf{Volume} \\
\midrule
Tier-1 (Computational) & $\epsilon_C = 0$ & $\rho_C = 0$ & \$0.001--0.01 & $<100$ms & 60--70\% \\
Tier-2 (LLM) & $\epsilon_L \approx 0.05$ & $\rho_L = 0$ & \$0.01--0.10 & 1--5s & 25--35\% \\
Tier-3 (Human) & $\epsilon_H \approx 0.30$ & $\rho_H \leq 0.10$ & \$1--10 & min--hrs & 5--10\% \\
\bottomrule
\end{tabular}
\end{table}

\subsubsection{Dynamic Auditor Assignment}

Nodes are routed to appropriate auditor tiers based on a routing function:
\begin{equation}
\text{Tier}(v) = f(\text{level}(v), \text{difficulty}(v), \text{domain}(v), \text{stakes}(v)),
\end{equation}
where $\text{level}(v)$ denotes the HDAG abstraction level or CIG node type, $\text{difficulty}(v)$ is the estimated complexity score (Easy/Medium/Hard), $\text{domain}(v)$ identifies the subject area (math, code, medical, legal, general), and $\text{stakes}(v)$ measures the consequence severity if verification fails. For high-stakes domains such as medical diagnosis, legal advice, and financial decisions, the routing function upgrades ambiguous cases to higher tiers, prioritizing accuracy over cost.

\subsection{Auditing \& Consensus Mechanism}\label{sec:audit_consensus}

Consensus is achieved via a stake-weighted voting mechanism that aggregates results from atomic ``Seat'' level up to global ``Trace'' level. The mechanism draws on Byzantine Fault Tolerant protocols~\citep{castro1999practical} while adapting to the specific requirements of AI auditing.

\subsubsection{Phase 1: Seat-Level Voting with Commit-Reveal}

For a given graph node $v$ requiring $k_t$ auditors of type $t$, the protocol proceeds as follows.

\paragraph{Committee Selection.}
The protocol uses a Verifiable Random Function (VRF) to select a committee $\mathcal{A}_v = \{a_1, \ldots, a_{k_t}\}$ based on stake weight:
\begin{equation}
P(a_i \in \mathcal{A}_v) \propto S_i \cdot \mathbb{I}[\text{type}(a_i) = t] \cdot \mathbb{I}[\text{available}(a_i)],
\end{equation}
where $S_i$ is auditor $i$'s staked tokens. Higher stake increases selection probability, aligning incentives with network security.

\paragraph{Commit-Reveal Scheme.}
To prevent ``herding'' (where auditors copy the majority vote) and to ensure independent evaluation, we employ a cryptographic commit-reveal protocol with three stages. In the \textbf{commit phase}, auditor $a_i$ generates vote $v_i \in \{\textsc{Pass}, \textsc{Fail}\}$ and random salt $r_i$, then submits commitment $c_i = \text{Hash}(v_i \| r_i)$ to the blockchain within time window $T_{\text{commit}}$. In the \textbf{reveal phase}, auditors submit $(v_i, r_i)$ within time window $T_{\text{reveal}}$ after the commit phase closes, and the smart contract verifies $\text{Hash}(v_i \| r_i) = c_i$. A \textbf{penalty for non-reveal} ensures that auditors who commit but fail to reveal forfeit their stake for that round, preventing strategic abstention. This scheme ensures that votes are cast independently before any auditor can observe others' decisions.

\subsubsection{Phase 2: Segment-Level Aggregation}

The smart contract aggregates revealed votes using stake-weighted threshold voting. Let $w_i = S_i / \sum_j S_j$ be the normalized stake of auditor $i$. Node $v$ passes validation if
\begin{equation}
\sum_{i=1}^{k_t} w_i \cdot \mathbb{I}(v_i = \textsc{Pass}) \geq \tau,
\end{equation}
where $\tau = 0.66$ is the consensus threshold (a two-thirds supermajority). This threshold ensures Byzantine fault tolerance: the system produces correct results even when up to $\lfloor (1-\tau) \cdot k_t \rfloor$ auditors are faulty or malicious.

\paragraph{Exact Pass Probability.}
For a segment of type $t$ with $k_t$ seats, error rate $\epsilon_t$, and adversarial fraction $\rho_t$, the exact pass probability is
\begin{equation}
p_t = \sum_{m=0}^{k_t} \binom{k_t}{m} \rho_t^m (1-\rho_t)^{k_t-m} \left[ \sum_{c=q_t}^{k_t-m} \binom{k_t-m}{c} (1-\epsilon_t)^c \epsilon_t^{k_t-m-c} \right],
\end{equation}
where $m$ malicious seats vote incorrectly, $c$ denotes the number of correct votes among the $k_t - m$ honest seats, and the quorum is $q_t = \lceil \tau \cdot k_t \rceil$.

\subsubsection{Phase 3: Trace-Level Validity}

The final validity of a reasoning trace or agent execution is determined by aggregating segment outcomes. Define the weighted pass total
\begin{equation}
W = \sum_{s=1}^{S} w_{t(s)} \cdot B_s,
\end{equation}
where $B_s \in \{0, 1\}$ is the segment pass indicator and $w_{t(s)}$ weights segments by type or importance. The trace passes if $W \geq W_\beta$, where
\begin{equation}
W_\beta = \beta \sum_{s=1}^{S} w_{t(s)},
\end{equation}
and $\beta \in (0, 1)$ is the trace-level quorum threshold.

A trace is valid if and only if three conditions hold simultaneously: all nodes on the \textbf{Critical Path} (the logic chain from goal to final answer) pass validation, no unresolved \textbf{contradicts} edges exist in the graph, and no \textsc{Invalid\_Root} nodes remain after active refinement attempts.

\subsubsection{Three-Layer Consensus Analysis}

We analyze consensus correctness at three layers.

\paragraph{Seat Layer.}
Within segment $s$, the $k_{t(s)}$ seats vote independently. Computer seats are noiseless ($\epsilon_C = 0$), LLM seats have error rate $\epsilon_L \approx 0.05$, and human seats have error rate $\epsilon_H \approx 0.30$ with adversarial probability $\rho_H \leq 0.10$.

\paragraph{Segment Layer.}
The segment pass indicator $B_s = \mathbb{I}[\#\{\text{correct votes}\} \geq q_{t(s)}]$ follows a mixture distribution over honest and adversarial configurations.

\paragraph{Trace Layer.}
We bound the failure probability $\Pr[W < W_\beta]$ using concentration inequalities.

\begin{proposition}[Trace-Level Safety Bound]
For any trace-level quorum $\beta \in (0,1)$,
\begin{equation}
\Pr[W < W_\beta] \leq \exp\left[ -\frac{2(\mu_{\text{vote}} - W_\beta)^2}{\sigma_{\max}^2} \right] \wedge \min_{\lambda > 0} \exp\left( \lambda W_\beta + \sum_{s=1}^{S} \ln(p_s e^{-\lambda w_s} + 1 - p_s) \right),
\end{equation}
where $\mu_{\text{vote}} = \mathbb{E}[W]$ and $\sigma_{\max}^2 = \sum_s w_s^2$.
\end{proposition}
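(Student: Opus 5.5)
The plan is to treat $W=\sum_{s=1}^S w_s B_s$ as a sum of independent bounded random variables, with $w_s:=w_{t(s)}$. Independence across segments is justified because committees are drawn by independent VRF calls and each seat votes independently under commit-reveal. Each $B_s$ is Bernoulli with success probability $p_s=p_{t(s)}$, given exactly by the Exact Pass Probability formula of Phase~2. Then $\mu_{\text{vote}}=\mathbb{E}[W]=\sum_s w_s p_s$, and $w_sB_s\in[0,w_s]$. The two terms of the minimum ($\wedge$) come from two standard tail bounds applied to the same event $\{W<W_\beta\}$. Since each is an upper bound, so is their minimum.

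\textbf{Hoeffding term.} Assume $\mu_{\text{vote}}>W_\beta$; otherwise the stated bound exceeds or equals a trivial value and is read with the usual convention. Hoeffding's inequality for independent $X_s\in[0,w_s]$ gives, for $t=\mu_{\text{vote}}-W_\beta>0$,
\[
\Pr[W-\mathbb{E}W\le -t]\le \exp\!\Big(-\tfrac{2t^2}{\sum_s w_s^2}\Big).
\]
Identifying $\sigma_{\max}^2=\sum_s w_s^2$ yields the first expression. The event $\{W<W_\beta\}$ is contained in $\{W\le W_\beta\}$, so the bound transfers directly.

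\textbf{Chernoff term.} For any $\lambda>0$, apply Markov's inequality to $e^{-\lambda W}$:
\[
\Pr[W\le W_\beta]=\Pr[e^{-\lambda W}\ge e^{-\lambda W_\beta}]\le e^{\lambda W_\beta}\,\mathbb{E}[e^{-\lambda W}].
\]
By independence, $\mathbb{E}[e^{-\lambda W}]=\prod_s\mathbb{E}[e^{-\lambda w_sB_s}]=\prod_s(p_se^{-\lambda w_s}+1-p_s)$. Taking logarithms gives the exponent $\lambda W_\beta+\sum_s\ln(p_se^{-\lambda w_s}+1-p_s)$. Since the inequality holds for every $\lambda>0$, we may take the infimum. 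The bound is a convex function of $\lambda$, being the log-MGF plus a linear term, so the infimum is attained or approached as a minimum. This justifies writing $\min_{\lambda>0}$, or interpreting it as an infimum.

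\textbf{Main obstacle.} The main difficulty is not the inequalities themselves but justifying independence of the $B_s$. Shared auditors across segments, or a coordinated adversary, could correlate votes. I would handle this in one of two ways. The first is to state the proof under the modeling assumption implicit in the Seat Layer analysis: seats are sampled independently per segment and adversarial behavior is i.i.d.\ with probability $\rho_t$. Alternatively, I would remark that if segments share auditors, one can condition on the committee assignment, under which $B_s$ are conditionally independent, and apply both bounds conditionally before averaging. A minor point is the degenerate case $\mu_{\text{vote}}\le W_\beta$, where the Hoeffding expression is not a valid bound as written. I would restrict the first term to $W_\beta<\mu_{\text{vote}}$ and note that the Chernoff term remains valid for all $\beta$.
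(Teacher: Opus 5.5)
Your proof is correct and matches the paper's: Hoeffding for the independent summands $w_sB_s\in[0,w_s]$ with $a=\mu_{\text{vote}}-W_\beta$, and Markov's inequality on $e^{-\lambda W}$ with the moment generating function factorized by independence of the $B_s$, which the paper simply assumes. Your caveat that the Hoeffding term needs $W_\beta\le\mu_{\text{vote}}$ is a genuine correction the paper omits. However, drop the fallback of conditioning on committee assignments and averaging: it yields only a mixture bound over conditional pass probabilities, not the stated expression in the unconditional $p_s$, which really does require independence, since positively correlated segment outcomes (e.g.\ a shared adversarial auditor) can keep $\Pr[W<W_\beta]$ bounded away from zero while the stated bound is exponentially small in $S$.
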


The first term is a Hoeffding bound, while the second is a Chernoff bound optimized over $\lambda$. Under typical parameters ($\epsilon_L = 0.05$, $\epsilon_H = 0.30$, $\rho_H = 0.10$), achieving accuracy above 99.99\% requires $\beta \in [0.55, 0.75]$.

\subsection{Reputation, Rewards, and Slashing}\label{sec:economics}

The economic layer ensures long-term sustainability by making honest participation profitable and malicious behavior costly.

\subsubsection{Reputation-Weighted Slashing}

Each human auditor seat $i$ maintains a reputation score $r_i(t) \in [0, 1]$, updated after every segment:
\begin{equation}
r_i(t+1) = (1 - \gamma) \cdot r_i(t) + \gamma \cdot \mathbb{I}[\text{vote correct}],
\end{equation}
where $\gamma \in (0, 1]$ controls adaptation speed (default $\gamma = 0.1$). On an \textit{incorrect} vote (one that disagrees with consensus), the seat is slashed with probability
\begin{equation}
p_{\text{slash}}(r) = p_{\min} + (p_{\max} - p_{\min})(1 - r),
\end{equation}
with $0 < p_{\min} < p_{\max} \leq 1$. Low-reputation seats face higher slashing risk, creating strong incentives for consistent accuracy.

\subsubsection{Per-Segment Payoff Structure}

Let $X_i \in \{-P, 0, R\}$ denote the net payoff for seat $i$ on one segment:
\begin{equation}
X_i = \begin{cases}
R & \text{correct vote (aligns with consensus)} \\
0 & \text{incorrect vote, not slashed} \\
-P & \text{incorrect vote, slashed}.
\end{cases}
\end{equation}
For an honest seat with error rate $\epsilon_H$ and reputation $r$, the expected payoff is
\begin{equation}
\mu_H(r) = (1 - \epsilon_H) R - \epsilon_H \cdot P \cdot p_{\text{slash}}(r).
\end{equation}

\subsubsection{Numerical Example}

With parameters $R = 6$, $P = 8$, $p_{\min} = 0.2$, $p_{\max} = 0.5$, and $\epsilon_H = 0.30$, an honest seat with high reputation earns $\mu_{\min} = 0.7 \times 6 - 0.3 \times 8 \times 0.5 = 3.0$ in expected profit per segment, whereas a malicious seat incurs $\mathbb{E}[X_{\text{mal}}] = -0.5 \times 8 = -4.0$ in expected loss per segment. Over a 24-hour window with rate $\lambda = 60$ segments per hour ($T = 24$, $N_T = 1440$ segments), the probability of an honest auditor ending with non-positive payoff is below $10^{-80}$, while the probability of a malicious auditor breaking even is below $10^{-27}$. These bounds derive from the Safety-Profitability Theorem (Section~\ref{sec:theory}).

\subsection{Storage and Privacy Preservation}\label{sec:storage}

A critical requirement for enterprise adoption is data privacy. \TRUST{} ensures that no single auditor sees the full proprietary prompt or model logic through a privacy-by-design architecture.

\subsubsection{Off-Chain Storage with IPFS}

The full HDAG and CIG content is stored on the InterPlanetary File System (IPFS), a decentralized content-addressed storage network. Each node and segment is stored under a unique \textbf{content identifier} (CID) derived from its cryptographic hash. Content is \textbf{encrypted} using AES-256-GCM before upload, with decryption keys held only by authorized parties. \textbf{Redundancy} is achieved by pinning critical content across multiple IPFS nodes, ensuring availability and fault tolerance.

\subsubsection{Granular Access Control}

The key innovation enabling privacy-preserving distributed auditing is per-node encryption. Each node $v$ in the graph is encrypted with a unique symmetric key $K_v$, and auditors assigned to node $v$ receive only $K_v$ together with keys for immediate parents that provide the necessary context. As a result, no single auditor possesses keys sufficient to reconstruct the complete graph. Key distribution itself uses threshold cryptography, where $k$-of-$n$ schemes ensure that keys cannot be compromised by colluding minorities. This need-to-know structure ensures that proprietary reasoning logic is never exposed to any single party, preventing intellectual property theft and model distillation attacks~\citep{carlini2021extracting}.

\subsubsection{On-Chain Metadata}

The blockchain stores only lightweight metadata and cryptographic commitments. \textbf{IPFS CIDs} serve as pointers to encrypted off-chain content. A \textbf{Merkle root} provides a single hash summarizing the entire trace structure, enabling integrity verification. \textbf{Vote commitments} record hashed votes during the commit phase, while \textbf{final verdicts} record pass/fail status for each segment and the overall trace. \textbf{Auditor assignments} record which auditors verified which segments, supporting reward distribution and reputation updates. This separation ensures data integrity and public auditability without leaking sensitive content.

\subsubsection{Privacy Guarantees}

The \TRUST{} privacy architecture provides four guarantees. First, \textbf{segment isolation} ensures that individual auditors see only assigned segments, never complete traces. Second, \textbf{source anonymization} batches traces from multiple providers together so that auditors cannot identify which provider submitted which trace. Third, \textbf{no full reconstruction} is possible: even colluding auditors, up to threshold $t < n/3$, cannot reconstruct complete proprietary workflows. Fourth, \textbf{audit trail transparency} keeps content private while making verification outcomes public, enabling accountability without disclosure.

\subsection{Smart Contract Architecture}\label{sec:contracts}

The \TRUST{} protocol is implemented as a set of interoperating smart contracts on an EVM-compatible blockchain.

\begin{lstlisting}[language=Solidity, caption={Core TRUST Audit Interface}, label={lst:contract}]
interface ITRUSTAudit {
    // Session Management
    function createSession(bytes32 traceRoot, uint256 segmentCount) 
        external payable returns (uint256 sessionId);
    function assignAuditors(uint256 sessionId, uint256 segmentId, 
        address[] calldata auditors) external;
    
    // Commit-Reveal Voting
    function commitVote(uint256 sessionId, uint256 segmentId, 
        bytes32 commitment) external;
    function revealVote(uint256 sessionId, uint256 segmentId, 
        bool vote, bytes32 salt) external;
    
    // Consensus & Finalization
    function finalizeSegment(uint256 sessionId, uint256 segmentId) 
        external returns (bool passed);
    function finalizeTrace(uint256 sessionId) 
        external returns (bool valid);
    
    // Rewards & Slashing
    function distributeRewards(uint256 sessionId) external;
    function slashAuditor(address auditor, uint256 amount, 
        string calldata reason) external;
    
    // Events
    event SessionCreated(uint256 indexed sessionId, bytes32 traceRoot);
    event VoteCommitted(uint256 indexed sessionId, uint256 segmentId, 
        address auditor);
    event VoteRevealed(uint256 indexed sessionId, uint256 segmentId, 
        address auditor, bool vote);
    event SegmentFinalized(uint256 indexed sessionId, uint256 segmentId, 
        bool passed);
    event TraceFinalized(uint256 indexed sessionId, bool valid);
    event AuditorSlashed(address indexed auditor, uint256 amount);
}
\end{lstlisting}

The contract architecture supports three deployment properties. \textbf{Upgradability} is provided through proxy patterns that enable protocol improvements without disrupting ongoing sessions. \textbf{Gas optimization} is achieved through batch operations and Merkle proofs that minimize on-chain costs. \textbf{Cross-chain compatibility} is enabled by abstract interfaces that allow deployment on multiple L1 and L2 networks.

\subsection{Latency Analysis}\label{sec:latency}

End-to-end audit latency is critical for practical deployment. Table~\ref{tab:latency} decomposes the \TRUST{} pipeline.

\begin{table}[h]
\centering
\caption{Latency Breakdown for Typical Audit Session}
\label{tab:latency}
\begin{tabular}{@{}lcc@{}}
\toprule
\textbf{Component} & \textbf{Latency} & \textbf{Parallelizable} \\
\midrule
HDAG/CIG Construction & 2.1s & No \\
Auditor Assignment (VRF) & 0.3s & No \\
Content Distribution (IPFS) & 1.5s & Partial \\
Tier-1 Verification & 0.1s & Yes \\
Tier-2 Verification & 3.0s & Yes \\
Tier-3 Verification & 60--300s & Yes \\
Vote Aggregation & 0.2s & No \\
Consensus Finalization & 0.4s & No \\
\midrule
\textbf{Total (Tier-1/2 only)} & \textbf{7.5s} & \\
\textbf{Total (with Tier-3)} & \textbf{60--300s} & \\
\bottomrule
\end{tabular}
\end{table}

For cases requiring only Tier-1 and Tier-2 auditors, which cover approximately 90\% of cases, total latency is under 10 seconds, comparable to centralized evaluation. Traces requiring human expert verification incur additional delay but represent a small fraction of total volume. Compared to centralized single-LLM evaluation at approximately 5.2 seconds, \TRUST{} adds modest overhead of about 2.3 seconds in exchange for Byzantine fault tolerance, bias mitigation, and cryptographic auditability.

\clearpage
\section{Theoretical Guarantees}\label{sec:theory}

This section provides rigorous theoretical foundations for the \TRUST{} framework, establishing guarantees for both statistical safety (correctness of audit outcomes) and economic sustainability (incentive compatibility). We analyze the consensus mechanism at three hierarchical layers and prove that under appropriate parameter choices, the system achieves target safety levels while ensuring honest auditors profit and malicious actors incur losses.

\subsection{Overview and Notation}

The theoretical analysis proceeds in two phases. The \textbf{statistical analysis} characterizes the probability that a trace passes or fails audit as a function of auditor error rates, adversarial fractions, and voting parameters. The \textbf{economic analysis} designs reward and slashing mechanisms that make honest participation profitable and malicious behavior unprofitable in expectation. Table~\ref{tab:notation} summarizes the key notation used throughout this section.

\begin{table}[htp!]
\centering\small
\renewcommand{\arraystretch}{1.15}
\caption{Summary of Notation for Theoretical Analysis}
\label{tab:notation}
\begin{tabular}{@{}ll@{}}
\toprule
\multicolumn{2}{@{}l}{\textbf{Indices \& Random Counts}} \\ \midrule
$S$                              & Total number of segments in a trace \\
$N_T\sim\mathrm{Poisson}(\lambda T)$ & Number of segments audited in horizon $[0,T]$ \\
$t(s)\in\{\mathrm{C},\mathrm{L},\mathrm{H}\}$ & Auditor type of segment $s$ (Computer, LLM, Human) \\[3pt]

\multicolumn{2}{@{}l}{\textbf{Per-Segment Vote Variables (Statistical Layer)}} \\ \midrule
$k_t$                            & Number of auditor seats of type $t$ per segment \\
$q_t=\lceil \tau k_t\rceil$      & Per-type quorum threshold ($\tau$: vote threshold, typically 0.66) \\
$\epsilon_t$                     & Error rate for auditor type $t$ ($\epsilon_{\mathrm{C}}=0$, $\epsilon_{\mathrm{L}}\approx 0.05$, $\epsilon_{\mathrm{H}}\approx 0.30$) \\
$\rho_t$                         & Adversarial fraction for type $t$ ($\rho_{\mathrm{C}}=\rho_{\mathrm{L}}=0$, $\rho_{\mathrm{H}}\leq 0.10$) \\
$B_s\in\{0,1\}$                  & Segment pass indicator \\
$p_s=\Pr[B_s=1]$                 & Segment pass probability \\
$w_t$                            & Weight assigned to segment type $t$ \\[3pt]

\multicolumn{2}{@{}l}{\textbf{Trace-Level Aggregation}} \\ \midrule
$W=\sum_{s} w_{t(s)}B_s$         & Weighted pass total for one trace \\
$W_\beta=\beta\sum_{s} w_{t(s)}$ & Trace-level quorum threshold ($\beta\in(0,1)$) \\
$\mu_{\text{vote}}=\mathbb{E}[W]$ & Expected weighted pass total \\
$\sigma_{\text{vote}}^{2}$       & Variance of weighted pass total \\
$\sigma_{\max}^{2}=\sum_s w_s^2$ & Upper bound on variance \\[3pt]

\multicolumn{2}{@{}l}{\textbf{Economic Variables}} \\ \midrule
$R$                              & Reward for a correct vote \\
$P$                              & Penalty (slash amount) for incorrect vote \\
$r_i(t)\in[0,1]$                 & Reputation score for auditor $i$ at time $t$ \\
$\gamma\in(0,1]$                 & Reputation update rate \\
$p_{\text{slash}}(r)$            & Slashing probability given reputation $r$ \\
$p_{\min}, p_{\max}$             & Minimum and maximum slashing probabilities \\
$X_i\in\{-P,0,R\}$               & Net payoff for seat $i$ on one segment \\
$\mu_{\mathrm{H}}(r)$            & Expected payoff for honest seat with reputation $r$ \\
$\mu_{\min}$                     & Minimum expected payoff across all reputation levels \\
$\delta\in(0,1)$                 & Design constant: minimum loss per malicious seat \\[3pt]

\multicolumn{2}{@{}l}{\textbf{Cumulative Payoffs}} \\ \midrule
$U_{\mathrm{hon}}(T)$            & Total payoff for honest seat in horizon $[0,T]$ \\
$U_{\mathrm{mal}}(T)$            & Total payoff for malicious seat in horizon $[0,T]$ \\
$\lambda$                        & Segment arrival intensity (segments per unit time) \\
$T$                              & Time horizon \\
\bottomrule
\end{tabular}
\end{table}

\subsection{Three-Layer Consensus Analysis}\label{sec:consensus_theory}

The \TRUST{} consensus mechanism operates at three hierarchical layers: individual auditor votes (the seat layer), segment-level aggregation, and trace-level validation. We analyze each layer to derive the overall system reliability.

\subsubsection{Seat Layer: Independent Voting}

Within each segment $s$, the $k_{t(s)}$ auditor seats vote independently. The voting behavior depends on the auditor type. \textbf{Tier-1 (Computational)} auditors perform deterministic verification with $\epsilon_{\mathrm{C}} = 0$ and $\rho_{\mathrm{C}} = 0$, so they always vote correctly. \textbf{Tier-2 (LLM)} auditors perform probabilistic verification with error rate $\epsilon_{\mathrm{L}} \approx 0.05$ and $\rho_{\mathrm{L}} = 0$, since we assume no adversarial behavior for automated systems. \textbf{Tier-3 (Human)} auditors perform probabilistic verification with error rate $\epsilon_{\mathrm{H}} \approx 0.30$ and adversarial fraction $\rho_{\mathrm{H}} \leq 0.10$, where adversarial seats always vote incorrectly. For a given segment, each honest seat votes correctly with probability $1 - \epsilon_t$, while each adversarial seat votes incorrectly with probability 1.

\subsubsection{Segment Layer: Quorum-Based Aggregation}

Define the \textit{segment pass indicator}:
\begin{equation}
B_s = \mathbf{1}\left[\#\{\text{correct votes}\} \geq q_{t(s)}\right],
\end{equation}
where $q_t = \lceil \tau \cdot k_t \rceil$ is the quorum threshold for auditor type $t$, and $\tau = 0.66$ (a two-thirds supermajority) ensures Byzantine fault tolerance.

\begin{lemma}[Exact Segment Pass Probability]\label{lem:exact_type}
For a segment of type $t$ with parameters $(k_t, \epsilon_t, \rho_t)$ where $\rho_{\mathrm{C}} = \rho_{\mathrm{L}} = 0$, the exact pass probability is
\begin{equation}
p_t = \Pr[B_s = 1] = \sum_{m=0}^{k_t} \binom{k_t}{m} \rho_t^m (1-\rho_t)^{k_t-m} \left[ \sum_{c=q_t}^{k_t-m} \binom{k_t-m}{c} (1-\epsilon_t)^c \epsilon_t^{k_t-m-c} \right],
\end{equation}
where $m$ is the number of malicious seats (who always vote incorrectly), and $c$ is the number of correct votes among the $k_t - m$ honest seats.
\end{lemma}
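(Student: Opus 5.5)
The plan is to condition on the number of adversarial seats and then count correct votes among the honest ones. We model the $k_t$ seats of a type-$t$ segment as i.i.d.\ draws. Each seat is independently adversarial with probability $\rho_t$. Conditionally on being honest, a seat votes correctly with probability $1-\epsilon_t$, independently of all other seats. This is exactly the seat-layer model described above. Let $M$ be the number of adversarial seats. Then $M\sim\mathrm{Binomial}(k_t,\rho_t)$, so $\Pr[M=m]=\binom{k_t}{m}\rho_t^m(1-\rho_t)^{k_t-m}$ for $m=0,\dots,k_t$.

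Next we fix $M=m$. By exchangeability it does not matter which $m$ seats are adversarial. Adversarial seats always vote incorrectly, so the total number of correct votes equals the number $C$ of correct votes among the $k_t-m$ honest seats. Conditionally on $M=m$, the honest seats' votes are independent Bernoulli$(1-\epsilon_t)$ variables. Hence $C\mid M=m\sim\mathrm{Binomial}(k_t-m,1-\epsilon_t)$. Since $B_s=\mathbf{1}[C\ge q_t]$, we get
\[
\Pr[B_s=1\mid M=m]=\sum_{c=q_t}^{k_t-m}\binom{k_t-m}{c}(1-\epsilon_t)^c\epsilon_t^{k_t-m-c}.
\]
This sum is empty, and so equal to zero, whenever $k_t-m<q_t$. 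That matches the intuition that enough adversaries block the quorum.

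The law of total probability over $m\in\{0,\dots,k_t\}$ then gives the stated formula. For $t\in\{\mathrm C,\mathrm L\}$ we have $\rho_t=0$. With the convention $0^0=1$, only the $m=0$ term survives, and the formula reduces to a single binomial tail. For Tier-1 this is further $1$, because $\epsilon_{\mathrm C}=0$ and $q_{\mathrm C}\le k_{\mathrm C}$. This should be noted as a consistency check.

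The main obstacle is making the modelling assumptions explicit rather than the algebra. The key assumption is that adversarial status and honest errors are mutually independent across seats and independent of each other. The VRF committee selection only approximately justifies this, since it samples from a finite stake-weighted pool without replacement. I would state the i.i.d.\ seat model as the standing hypothesis of the lemma, following the seat-layer description. I would also remark that the stake weights $w_i$ are taken equal here, so that the weighted threshold $\tau$ becomes the count quorum $q_t=\lceil\tau k_t\rceil$.
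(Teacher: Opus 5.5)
Your proposal is correct and follows the paper's proof essentially verbatim: condition on the number of malicious seats $M\sim\mathrm{Binomial}(k_t,\rho_t)$, note that given $M=m$ all correct votes come from the $k_t-m$ honest seats and are $\mathrm{Binomial}(k_t-m,1-\epsilon_t)$, and sum over $m$ by the law of total probability. Your extra remarks are sound clarifications of points the paper leaves implicit: the explicit i.i.d.\ seat model, the empty sum when $k_t-m<q_t$, the $0^0=1$ reduction when $\rho_t=0$, and reading $B_s$ as an unweighted count quorum.
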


\begin{proof}
The derivation follows from the law of total probability. We first condition on the number of malicious seats, $m \sim \text{Binomial}(k_t, \rho_t)$. Given $m$ malicious seats, the remaining $k_t - m$ honest seats each vote correctly independently with probability $1 - \epsilon_t$, so the number of correct votes among honest seats follows $\text{Binomial}(k_t - m, 1 - \epsilon_t)$. The segment passes if at least $q_t$ votes are correct, which requires $c \geq q_t$ correct votes from honest seats since malicious seats contribute zero correct votes. Summing over all configurations yields the result.
\end{proof}

\paragraph{Numerical Example.} For Tier-3 human auditors with $k_{\mathrm{H}} = 5$, $\epsilon_{\mathrm{H}} = 0.30$, $\rho_{\mathrm{H}} = 0.10$, and $\tau = 0.66$ (so $q_{\mathrm{H}} = 4$), the exact pass probability evaluates to $p_{\mathrm{H}} \approx 0.847$. This means approximately 84.7\% of human-audited segments are correctly passed, even with 10\% adversarial participation and a 30\% honest error rate.

\subsubsection{Trace Layer: Weighted Aggregation}

At the trace level, we aggregate segment outcomes using type-dependent weights. Define the weighted pass total $W = \sum_{s=1}^{S} w_{t(s)} B_s$ and the trace-level threshold $W_\beta = \beta \sum_{s=1}^{S} w_{t(s)}$, where $\beta \in (0,1)$ is the trace-level quorum parameter. With $B_s \sim \text{Bernoulli}(p_s)$ independent across segments, the first two moments are
\begin{align}
\mu_{\text{vote}} &= \mathbb{E}[W] = \sum_{s=1}^{S} w_s p_s, \label{eq:mu_vote} \\
\sigma_{\text{vote}}^{2} &= \text{Var}(W) = \sum_{s=1}^{S} w_s^{2} p_s(1-p_s) \leq \sum_{s=1}^{S} w_s^{2} \eqqcolon \sigma_{\max}^{2}. \label{eq:sigma_vote}
\end{align}
A trace passes validation if and only if $W \geq W_\beta$.

\begin{proposition}[Hoeffding and Chernoff Bounds for Trace Failure]\label{prop:trace_bounds}
For any trace-level quorum threshold $\beta \in (0,1)$, the probability that a valid trace fails audit is bounded by
\begin{equation}
\Pr[W < W_\beta] \leq \underbrace{\exp\left[-\frac{2(\mu_{\text{vote}} - W_\beta)^2}{\sigma_{\max}^2}\right]}_{\text{Hoeffding bound}} \wedge \underbrace{\min_{\lambda > 0} \exp\left(\lambda W_\beta + \sum_{s=1}^{S} \ln\left(p_s e^{-\lambda w_s} + (1-p_s)\right)\right)}_{\text{Chernoff bound}},
\end{equation}
where $a \wedge b = \min(a, b)$.
\end{proposition}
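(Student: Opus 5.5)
The plan is to prove each of the two bounds separately and then take their minimum. Both rest on the same structural fact used in \eqref{eq:mu_vote}--\eqref{eq:sigma_vote}: the segment indicators $B_1,\dots,B_S$ are independent Bernoulli variables with $\Pr[B_s=1]=p_s$, where $p_s=p_{t(s)}$ is given by Lemma~\ref{lem:exact_type}. Hence $W=\sum_s w_s B_s$ is a sum of independent bounded terms with $w_s B_s\in[0,w_s]$. Throughout we assume $w_s\ge 0$ and, for the Hoeffding part, $W_\beta\le\mu_{\text{vote}}$. When $W_\beta>\mu_{\text{vote}}$ the Hoeffding expression exceeds no trivial bound and is not informative, so we will note that the bound is meant in the regime $\mu_{\text{vote}}\ge W_\beta$. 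This is the regime of a valid trace, where the $p_s$ are large.

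\textbf{Hoeffding term.} Write $\Pr[W<W_\beta]\le\Pr[W-\mu_{\text{vote}}\le-(\mu_{\text{vote}}-W_\beta)]$. Apply Hoeffding's inequality for independent summands $Y_s=w_sB_s\in[a_s,b_s]=[0,w_s]$. For $u\ge 0$ it gives $\Pr[W-\mathbb{E}W\le -u]\le\exp\bigl(-2u^2/\sum_s(b_s-a_s)^2\bigr)$. Substituting $u=\mu_{\text{vote}}-W_\beta$ and $\sum_s(b_s-a_s)^2=\sum_s w_s^2=\sigma_{\max}^2$ yields the first term directly. To be self-contained, we would recall Hoeffding's lemma, $\mathbb{E}e^{\theta(Y_s-\mathbb{E}Y_s)}\le e^{\theta^2w_s^2/8}$. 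We would then apply Markov to $e^{-\theta(W-\mu)}$ and optimize at $\theta=4u/\sigma_{\max}^2$.

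\textbf{Chernoff term.} Fix any $\lambda>0$. Since $x\mapsto e^{-\lambda x}$ is decreasing, $\{W<W_\beta\}\subseteq\{e^{-\lambda W}\ge e^{-\lambda W_\beta}\}$. Markov's inequality then gives $\Pr[W<W_\beta]\le e^{\lambda W_\beta}\,\mathbb{E}[e^{-\lambda W}]$. By independence, $\mathbb{E}[e^{-\lambda W}]=\prod_s\mathbb{E}[e^{-\lambda w_sB_s}]=\prod_s\bigl(p_se^{-\lambda w_s}+1-p_s\bigr)$. Taking logarithms inside the exponential gives exactly $\exp\bigl(\lambda W_\beta+\sum_s\ln(p_se^{-\lambda w_s}+1-p_s)\bigr)$. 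Because this holds for every $\lambda>0$, it holds for the infimum. We will remark that the exponent is convex in $\lambda$, since log-moment-generating functions are convex and the linear term preserves convexity. It tends to $0$ as $\lambda\to0^+$ and is continuous, so writing $\min$ is justified whenever the infimum is attained. Otherwise we read $\min$ as $\inf$, which makes the bound only tighter and still valid.

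Finally, since $\Pr[W<W_\beta]$ is bounded by each term, it is bounded by their minimum $a\wedge b$. The main obstacle is not the computation but the hypotheses. The independence of the $B_s$ across segments must be stated explicitly; it follows from independent committee sampling via the VRF and independent votes per the commit-reveal design. We also need nonnegativity of the weights and the sign condition $\mu_{\text{vote}}\ge W_\beta$ for the Hoeffding term. We would make these explicit at the start of the proof rather than try to derive them.
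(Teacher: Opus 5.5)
Your proposal is correct and follows essentially the same route as the paper's proof. Both use Hoeffding's inequality for the independent summands $w_sB_s\in[0,w_s]$ with deviation $\mu_{\text{vote}}-W_\beta$, and Markov's inequality on $e^{-\lambda W}$, factorized by independence of the $B_s$ and optimized over $\lambda>0$. One small correction: the condition $\mu_{\text{vote}}\ge W_\beta$, which the paper leaves implicit, is needed for the Hoeffding term to be valid, not merely informative. For $W_\beta>\mu_{\text{vote}}$ that expression is strictly below $1$ and can be far smaller than $\Pr[W<W_\beta]$, e.g.\ when all $p_s$ are small and $\beta$ is near $1$.
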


\begin{proof}
We prove each bound separately.

\textit{Hoeffding bound.} Each summand $X_s := w_s B_s$ satisfies $0 \leq X_s \leq w_s$. Applying Hoeffding's inequality to the sum of bounded independent random variables,
\begin{equation}
\Pr\left[\sum_s X_s < \mathbb{E}\left[\sum_s X_s\right] - a\right] \leq \exp\left(-\frac{2a^2}{\sum_s w_s^2}\right).
\end{equation}
Setting $a = \mu_{\text{vote}} - W_\beta$ yields the Hoeffding bound.

\textit{Chernoff bound.} For any $\lambda > 0$, Markov's inequality gives
\begin{align}
\Pr[W < W_\beta] &= \Pr[e^{-\lambda W} > e^{-\lambda W_\beta}] \\
&\leq e^{\lambda W_\beta} \mathbb{E}[e^{-\lambda W}] \\
&= e^{\lambda W_\beta} \prod_{s=1}^{S} \mathbb{E}[e^{-\lambda w_s B_s}] \\
&= e^{\lambda W_\beta} \prod_{s=1}^{S} \left(p_s e^{-\lambda w_s} + (1-p_s)\right),
\end{align}
where the third equality uses the independence of the $B_s$. Minimizing over $\lambda > 0$ yields the tightest Chernoff bound.
\end{proof}

\begin{remark}[Practical Implications]
Under typical parameters ($\epsilon_{\mathrm{L}} = 0.05$, $\epsilon_{\mathrm{H}} = 0.30$, $\rho_{\mathrm{H}} = 0.10$), achieving accuracy above 99.99\% requires setting $\beta \in [0.55, 0.75]$. The Chernoff bound is generally tighter for small failure probabilities, while the Hoeffding bound provides a simpler closed-form expression useful for parameter selection.
\end{remark}

\subsection{Committee Size and Accuracy Scaling}\label{sec:committee_scaling}

A natural question is how audit accuracy improves as committee size increases. The following result shows exponential improvement.

\begin{proposition}[Accuracy Improvement with Committee Size]\label{prop:committee_size}
For a segment with $k$ auditors, each with error rate $\epsilon < 0.5$ and no adversarial participation ($\rho = 0$), the probability of incorrect consensus under majority voting is bounded by
\begin{equation}
\Pr[\text{incorrect}] \leq \exp\left(-2k(0.5 - \epsilon)^2\right).
\end{equation}
\end{proposition}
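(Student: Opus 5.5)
We write the $k$ seat votes as indicators and apply Hoeffding's inequality, the same tool used in the proof of Proposition~\ref{prop:trace_bounds}. Since $\rho = 0$, every seat is honest. Let $Y_i = \mathbf{1}[\text{seat } i \text{ votes correctly}]$ for $i = 1, \dots, k$. These are independent $\text{Bernoulli}(1-\epsilon)$ variables; independence is exactly what the commit-reveal scheme of the seat layer is meant to guarantee. Set $C = \sum_{i=1}^k Y_i$. Then $\mathbb{E}[C] = k(1-\epsilon)$, and each $Y_i$ lies in $[0,1]$.

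Next we translate the event. Under majority voting, consensus is incorrect only if the correct votes fail to form a majority, i.e.\ $C \le k/2$. The inequality is non-strict, so ties are conservatively counted as failures. The mean exceeds $k/2$ by $a = k(1-\epsilon) - k/2 = k(0.5-\epsilon)$, which is positive precisely because $\epsilon < 0.5$.

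We then apply Hoeffding's inequality, the lower-tail version used earlier, with range $\sum_i (1-0)^2 = k$. This gives
\begin{equation*}
\Pr[C \le \mathbb{E}[C] - a] \le \exp\!\left(-\frac{2a^2}{k}\right) = \exp\!\left(-\frac{2k^2(0.5-\epsilon)^2}{k}\right) = \exp\!\left(-2k(0.5-\epsilon)^2\right),
\end{equation*}
which is the claimed bound. The paper's proof of Proposition~\ref{prop:trace_bounds} states Hoeffding for the strict event $<$. Here we need the non-strict form $\le$, which is the standard statement of Hoeffding's inequality, so this causes no problem.

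There is essentially no technical obstacle. The one point to handle carefully is the definition of ``majority voting'' relative to the paper's quorum $q_t = \lceil \tau k \rceil$ with $\tau = 0.66$. The proposition concerns simple majority, threshold $1/2$, and we will say explicitly that the bound is for that rule. For a general quorum threshold $\tau > 1 - \epsilon$ we would not even have a positive gap. For $\tau < 1 - \epsilon$ the same argument gives $\exp(-2k(1-\epsilon-\tau)^2)$. We may add this as a remark, but it is not needed for the statement.
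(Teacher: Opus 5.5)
Your proof is correct and follows essentially the same route as the paper: you write the number of correct votes $C$ as a sum of $k$ independent $\text{Bernoulli}(1-\epsilon)$ indicators, identify the gap $k(1-\epsilon)-k/2 = k(0.5-\epsilon)$, and apply Hoeffding's inequality. Your use of the non-strict event $C \le k/2$ is slightly more careful than the paper's $C < k/2$, because it also counts ties (for even $k$) as failures, and the bound is unchanged.
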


\begin{proof}
Let $C$ be the number of correct votes, so that $C \sim \text{Binomial}(k, 1-\epsilon)$. The consensus is incorrect if $C < k/2$, equivalently $C < k(1-\epsilon) - k(0.5 - \epsilon)$. By Hoeffding's inequality,
\begin{equation}
\Pr[C < k/2] = \Pr[C < k(1-\epsilon) - k(0.5-\epsilon)] \leq \exp(-2k(0.5-\epsilon)^2). \qedhere
\end{equation}
\end{proof}

\paragraph{Numerical Example.} With $\epsilon = 0.30$ and $k = 7$ auditors,
\begin{equation}
\Pr[\text{incorrect}] \leq \exp(-2 \times 7 \times 0.04) = e^{-0.56} \approx 0.57,
\end{equation}
while increasing to $k = 21$ auditors gives
\begin{equation}
\Pr[\text{incorrect}] \leq \exp(-2 \times 21 \times 0.04) = e^{-1.68} \approx 0.19.
\end{equation}
This demonstrates how larger committees improve reliability, though with diminishing returns.

\subsection{Economic Layer: Incentive Design}\label{sec:econ_theory}

The economic layer ensures long-term sustainability by making honest participation profitable and malicious behavior costly. We design a reputation-weighted slashing mechanism that achieves this goal.

\subsubsection{Reputation Dynamics}

Each human auditor seat $i$ maintains a reputation score $r_i(t) \in [0,1]$, updated after each segment via an exponential moving average:
\begin{equation}\label{eq:reputation_update}
r_i(t+1) = (1-\gamma) r_i(t) + \gamma \cdot \mathbf{1}[\text{vote correct}],
\end{equation}
where $\gamma \in (0,1]$ controls the adaptation rate. Higher $\gamma$ makes reputation more responsive to recent performance, while lower $\gamma$ provides more stability.

\subsubsection{Reputation-Weighted Slashing}

On an \textit{incorrect} vote (one that disagrees with consensus), the auditor is slashed with probability
\begin{equation}\label{eq:slash_prob}
p_{\text{slash}}(r) = p_{\min} + (p_{\max} - p_{\min})(1 - r),
\end{equation}
where $0 < p_{\min} < p_{\max} \leq 1$. This design has three properties. Low-reputation auditors face higher slashing risk, creating strong incentives for consistent accuracy. Even high-reputation auditors face non-zero slashing probability $p_{\min}$, preventing complacency. The linear interpolation provides smooth incentive gradients across reputation levels.

\subsubsection{Per-Segment Payoff Structure}

Let $X_i \in \{-P, 0, R\}$ denote the net payoff for seat $i$ on one segment:
\begin{equation}
X_i = \begin{cases}
R & \text{correct vote (agrees with consensus)}, \\
0 & \text{incorrect vote, not slashed (probability } 1 - p_{\text{slash}}(r)), \\
-P & \text{incorrect vote, slashed (probability } p_{\text{slash}}(r)).
\end{cases}
\end{equation}
For an honest auditor with error rate $\epsilon_{\mathrm{H}}$ and reputation $r$, the expected per-segment payoff is
\begin{equation}\label{eq:expected_payoff}
\mu_{\mathrm{H}}(r) = \mathbb{E}[X_i] = (1 - \epsilon_{\mathrm{H}}) R - \epsilon_{\mathrm{H}} \cdot P \cdot p_{\text{slash}}(r),
\end{equation}
and the minimum expected payoff (achieved at lowest reputation, $r = 0$) is
\begin{equation}\label{eq:mu_min}
\mu_{\min} = (1 - \epsilon_{\mathrm{H}}) R - \epsilon_{\mathrm{H}} \cdot P \cdot p_{\max}.
\end{equation}

\subsubsection{Variance Bounds for MGF Analysis}

For the moment-generating function (MGF) analysis required in our main theorem, we need bounds on the range and variance of payoffs.

\paragraph{Range Bound.} The centered increment $Y_i = X_i - \mathbb{E}[X_i]$ satisfies $Y_i \leq b$, where
\begin{equation}\label{eq:range_bound}
b := R.
\end{equation}
This conservative bound preserves the validity of the MGF domain.

\paragraph{Variance Bound.} The variance of $X_i$ is bounded by
\begin{equation}\label{eq:variance_bound}
\sigma_{\mathrm{H}}^2 := \sup_{r \in [0,1]} \text{Var}[X_i] = \sup_{r \in [0,1]} \left[(1-\epsilon_{\mathrm{H}}) R^2 + \epsilon_{\mathrm{H}} p_{\text{slash}}(r) P^2 - \mu_{\mathrm{H}}(r)^2\right].
\end{equation}
A simpler upper bound is $\sigma_{\mathrm{H}}^2 \leq (R + P)^2 / 4$, which is achieved when payoffs are symmetric around zero.

\subsection{Auxiliary Lemmas}

Before stating the main theorem, we establish a key lemma for bounding moment-generating functions.

\begin{lemma}[MGF Bound for Bounded Centered Random Variables]\label{lem:mgf_bound}
Let $W$ satisfy $\mathbb{E}[W] = 0$, $\mathbb{E}[W^2] = \sigma^2$, and $W \leq b$ almost surely with $b > 0$. Then for any $\theta \in (0, 3/b)$,
\begin{equation}
\mathbb{E}[e^{\theta W}] \leq \exp\left(\frac{\theta^2 \sigma^2}{2(1 - \theta b/3)}\right).
\end{equation}
\end{lemma}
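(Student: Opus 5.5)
The plan is the standard Bernstein-type argument. First expand the exponential in a power series and use $\mathbb{E}[W]=0$ to remove the linear term. Then control the higher moments using the one-sided bound $W\le b$. Concretely, write
\begin{equation}
e^{\theta W} = 1 + \theta W + \theta^2 W^2\, \phi(\theta W), \qquad \phi(x) := \frac{e^x - 1 - x}{x^2} = \sum_{k\ge 2} \frac{x^{k-2}}{k!},
\end{equation}
with $\phi(0)=1/2$. The key elementary fact is that $\phi$ is nondecreasing on $\mathbb{R}$. One can check this from the integral representation $\phi(x)=\int_0^1 (1-s)e^{sx}\,ds$, which is visibly increasing in $x$. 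Since $\theta>0$ and $W\le b$, we have $\theta W\le \theta b$ almost surely, so $\phi(\theta W)\le \phi(\theta b)$. Taking expectations and using $\mathbb{E}[W]=0$ and $\mathbb{E}[W^2]=\sigma^2$ gives
\begin{equation}
\mathbb{E}[e^{\theta W}] \le 1 + \theta^2\sigma^2\,\phi(\theta b).
\end{equation}

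The second step bounds $\phi(\theta b)$ by a geometric series. Set $x=\theta b\in(0,3)$. Using $k!\ge 2\cdot 3^{k-2}$ for $k\ge 2$, which follows by induction since each factor beyond $2$ is at least $3$, we get
\begin{equation}
\phi(x) = \sum_{k\ge 2}\frac{x^{k-2}}{k!} \le \frac{1}{2}\sum_{j\ge 0}\Big(\frac{x}{3}\Big)^j = \frac{1}{2(1-x/3)},
\end{equation}
and the series converges precisely because $x<3$. This is where the restriction $\theta\in(0,3/b)$ enters.

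Finally, combine the two steps with $1+u\le e^u$:
\begin{equation}
\mathbb{E}[e^{\theta W}] \le 1 + \frac{\theta^2\sigma^2}{2(1-\theta b/3)} \le \exp\!\left(\frac{\theta^2\sigma^2}{2(1-\theta b/3)}\right).
\end{equation}

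The main obstacle is the monotonicity of $\phi$ on the negative axis. It is essential there because $W$ is bounded only from above, so large negative values of $\theta W$ must not inflate the bound. The integral representation settles this cleanly, and I would present it in place of a derivative computation. Nothing else beyond $\mathbb{E}[W]=0$, finiteness of $\sigma^2$, and integrability of $W^2\phi(\theta W)$ is needed. The last condition holds because $0\le \phi(\theta W)\le\phi(\theta b)$.
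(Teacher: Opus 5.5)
Your proof is correct and is the standard Bernstein argument that the paper invokes; the paper's own proof only cites a ``standard Bernstein--Bernoulli expansion'' without details, and your write-up supplies them. In particular, the monotonicity of $\phi(x)=(e^x-1-x)/x^2$ via $\phi(x)=\int_0^1(1-s)e^{sx}\,ds$ is the right way to use the one-sided hypothesis $W\le b$, where a naive term-by-term moment bound $\mathbb{E}[W^k]\le \sigma^2 b^{k-2}$ would fail for even $k$.
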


\begin{proof}
This follows from the standard Bernstein-Bernoulli expansion. The key insight is that for bounded random variables, the MGF depends only on the variance and range. The factor $(1 - \theta b/3)^{-1}$ accounts for higher-order moments, becoming relevant when $\theta$ approaches $3/b$.
\end{proof}

\subsection{Main Theorem: Safety-Profitability Guarantee}\label{sec:main_theorem}

We now state and prove the central theoretical result of \TRUST{}, which establishes that appropriate parameter choices simultaneously achieve statistical safety and economic sustainability.

\begin{theorem}[Safety-Profitability Guarantee]\label{thm:safety_profit}
Fix a time horizon $T > 0$, a target trace-failure probability $\epsilon_{\text{target}} \in (0,1)$, and a design constant $\delta \in (0,1)$. The system has two ``dials'' for controlling safety and profitability.

\paragraph{Statistical Dial (S1).} Let $(k_t, q_t, w_t, \beta)$ be the voting parameters. Write $\mu_{\text{vote}} := \mathbb{E}[W]$ and $\sigma_{\text{vote}}^2 := \sup \text{Var}(W)$ for one trace. Require
\begin{equation}\tag{S1}\label{eq:S1}
\mu_{\text{vote}} - W_\beta \geq \sqrt{\frac{1}{2} \sigma_{\text{vote}}^2 \ln\frac{\lambda T}{\epsilon_{\text{target}}}}.
\end{equation}

\paragraph{Economic Dial (E1, E2).} Choose $(R, P, p_{\min}, p_{\max})$ such that
\begin{align}
R &> \frac{\epsilon_{\mathrm{H}}}{1 - \epsilon_{\mathrm{H}}} \cdot P \cdot p_{\max}, \tag{E1}\label{eq:E1} \\
p_{\min} &\geq \frac{\delta}{1 - \alpha}, \quad \text{where } \alpha := \frac{P \cdot p_{\max}}{R + P \cdot p_{\max}}. \tag{E2}\label{eq:E2}
\end{align}

Define the minimum expected earnings per segment as
\begin{equation}
\mu_{\min} := (1 - \epsilon_{\mathrm{H}}) R - \epsilon_{\mathrm{H}} P \cdot p_{\max} > 0.
\end{equation}

Then the following guarantees hold.

\textnormal{\textbf{(a) Statistical Safety.}}
\begin{equation}
\Pr[\text{trace fails in } [0,T]] \leq \epsilon_{\text{target}}.
\end{equation}

\textnormal{\textbf{(b) Honest Profitability.}}
\begin{equation}
\Pr[U_{\mathrm{hon}}(T) \leq 0] \leq \exp\left[-\frac{\lambda T \mu_{\min}^2}{2\sigma_{\mathrm{H}}^2 + \frac{2}{3} b \mu_{\min}}\right].
\end{equation}

\textnormal{\textbf{(c) Malicious Loss.}}
\begin{equation}
\Pr[U_{\mathrm{mal}}(T) \geq 0] \leq \exp\left[-\frac{\lambda T (\delta P)^2}{2\sigma_{\mathrm{H}}^2 + \frac{2}{3} b \delta P}\right], \quad \mathbb{E}[U_{\mathrm{mal}}(T)] \leq -\lambda T \delta P.
\end{equation}
\end{theorem}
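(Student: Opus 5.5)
The three guarantees decouple, and I would prove them separately: (a) from the trace-level concentration result, and (b)–(c) from a Bernstein-type Chernoff argument over a Poisson number of segments.

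\textbf{Part (a).} Fix one trace. Apply the Hoeffding half of \cref{prop:trace_bounds}, taking $\sigma_{\text{vote}}^2$ as the variance proxy. Since each $B_s$ is Bernoulli, Hoeffding's lemma gives a sub-Gaussian proxy of $w_s^2/4$ per summand, and I would use this to put the bound in the form $\exp[-(\mu_{\text{vote}}-W_\beta)^2/(2\sigma_{\text{vote}}^2)]$ up to constants. Condition \eqref{eq:S1} then makes the single-trace failure probability at most $\epsilon_{\text{target}}/(\lambda T)$.

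To pass from one trace to the horizon $[0,T]$, take a union bound over the $N_T$ traces or segments audited there, and take expectations. Since $\mathbb{E}[N_T]=\lambda T$, this gives
\[
\Pr[\exists\text{ failure}]\le \mathbb{E}[N_T]\cdot\epsilon_{\text{target}}/(\lambda T)=\epsilon_{\text{target}}.
\]
The delicate point is matching the constant in \eqref{eq:S1} with the sub-Gaussian constant. I expect to read $\sigma_{\text{vote}}^2$ as the Hoeffding proxy $\sum_s w_s^2/4$ so that the exponent comes out exactly as $\ln(\lambda T/\epsilon_{\text{target}})$.

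\textbf{Part (b).} Write $U_{\mathrm{hon}}(T)=\sum_{i=1}^{N_T}X_i$, with $N_T\sim\mathrm{Poisson}(\lambda T)$ independent of the payoffs. The $X_i$ are not i.i.d., because the reputation evolves over time, so I would treat $Y_i=X_i-\mathbb{E}[X_i\mid\mathcal F_{i-1}]$ as martingale increments. Each increment satisfies $Y_i\le b$ and has conditional variance at most $\sigma_{\mathrm H}^2$, and the conditional mean is at least $\mu_{\min}$ by \eqref{eq:mu_min}.

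Applying \cref{lem:mgf_bound} conditionally to $-Y_i$ gives, for $\theta\in(0,3/b)$,
\[
\mathbb{E}[e^{-\theta X_i}\mid\mathcal F_{i-1}]\le \exp\!\big(-\theta\mu_{\min}+\phi(\theta)\big),\qquad \phi(\theta)=\frac{\theta^2\sigma_{\mathrm H}^2}{2(1-\theta b/3)}.
\]
Iterating by the tower property bounds the MGF for fixed $n$ by $\exp(n(\phi(\theta)-\theta\mu_{\min}))$. Averaging over the Poisson law gives
\[
\mathbb{E}\big[e^{-\theta U_{\mathrm{hon}}}\big]\le\exp\!\big(\lambda T(e^{\phi-\theta\mu_{\min}}-1)\big).
\]
Using $e^x-1\le x$ for $x\le 0$, and noting that the exponent is negative at the chosen $\theta$, this is at most $\exp(\lambda T(\phi-\theta\mu_{\min}))$. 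Markov's inequality then bounds $\Pr[U_{\mathrm{hon}}\le 0]$ by this quantity. Choosing the standard Bernstein value $\theta=\mu_{\min}/(\sigma_{\mathrm H}^2+b\mu_{\min}/3)$ yields exactly $\exp[-\lambda T\mu_{\min}^2/(2\sigma_{\mathrm H}^2+\tfrac23 b\mu_{\min})]$.

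\textbf{Part (c).} The argument is the mirror image of (b), and the main work is the mean bound. A malicious seat always votes incorrectly, so its payoff is $X=-P$ with probability $p_{\text{slash}}(r)\ge p_{\min}$ and $0$ otherwise. Hence $\mathbb{E}[X\mid\mathcal F]\le -Pp_{\min}$, and by \eqref{eq:E2} this is at most $-P\delta/(1-\alpha)\le-\delta P$, since $1-\alpha\le 1$. Wald's identity then gives $\mathbb{E}[U_{\mathrm{mal}}(T)]\le-\lambda T\delta P$. For the tail, apply the Bernstein/Poisson argument of (b) to $+U_{\mathrm{mal}}$ with increments bounded above by $b=R$ (indeed by $0$) and with the same variance proxy.

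\textbf{Expected obstacles.} The main difficulty is the non-i.i.d. reputation coupling in (b) and (c), which is why I condition step by step rather than invoking i.i.d. Bernstein. A second difficulty is justifying that $\sigma_{\mathrm H}^2$ dominates the malicious conditional variance $P^2p(1-p)$; I would lean on the cruder bound $(R+P)^2/4$ if the supremum in \eqref{eq:variance_bound} does not suffice. Finally, \eqref{eq:E1} is used only to ensure $\mu_{\min}>0$, so that $\theta>0$ in (b).
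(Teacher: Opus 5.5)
Your route is the paper's route: Hoeffding plus a union bound for (a), and a Bernstein--Chernoff bound over a Poisson number of segments for (b)--(c). However, the step that fails is the Poisson de-conditioning in (b), which you reuse in (c). After averaging over $N_T$ you hold $\exp\left(\lambda T(e^{x}-1)\right)$ with $x=\phi(\theta)-\theta\mu_{\min}<0$, and you pass to $\exp(\lambda T x)$ via ``$e^x-1\le x$ for $x\le 0$''. That inequality is false: $e^x\ge 1+x$ for every real $x$, so $\exp\left(\lambda T(e^{x}-1)\right)\ge\exp(\lambda T x)$, and the estimate goes the wrong way.

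No other argument can rescue the stated constant either. On $\{N_T=0\}$ we have $U_{\mathrm{hon}}(T)=0$, so $\Pr[U_{\mathrm{hon}}(T)\le 0]\ge e^{-\lambda T}$. The claimed bound is $e^{-c\lambda T}$ with $c=\mu_{\min}^2/(2\sigma_{\mathrm{H}}^2+\tfrac{2}{3}b\mu_{\min})$, and $c$ can exceed $1$. As $\epsilon_{\mathrm{H}}\to 0$ one has $\sigma_{\mathrm{H}}^2\to 0$ and $\mu_{\min}\to R=b$, so $c\to 3/2$; already $\epsilon_{\mathrm{H}}=0.02$ with the calibration's $R=6$, $P=8$, $p_{\max}=0.5$ gives $c\approx 1.2$. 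Your choice $\theta=\mu_{\min}/(\sigma_{\mathrm{H}}^2+b\mu_{\min}/3)$ does give $\phi(\theta)-\theta\mu_{\min}=-c$ exactly. So what the argument actually yields is $\Pr[U_{\mathrm{hon}}(T)\le 0]\le\exp\left(-\lambda T(1-e^{-c})\right)$. The same correction applies to (c), where likewise $\Pr[U_{\mathrm{mal}}(T)\ge 0]\ge e^{-\lambda T}$. The paper's proof makes the identical move without comment: ``substituting the MGF bound'' into $\exp\left(\lambda T(e^{-\theta\mu_{\min}}\mathbb{E}[e^{\theta Y_1}]-1)\right)$ silently drops the $e^{(\cdot)}-1$. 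So you have reproduced its gap rather than introduced one. With the calibrated numbers, $c\approx 0.14$ and $1-e^{-c}\approx 0.13$, so the $24$-hour conclusions survive. But (b) and (c) are not valid as stated whenever the relevant $c$ exceeds $1$.

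Three further steps need repair. In (b) you apply \cref{lem:mgf_bound} to $-Y_i$, which requires $-Y_i\le b$, i.e.\ control of how far $X_i$ can fall \emph{below} its conditional mean. You have only $Y_i\le b=R$, which is the wrong side. Since $X_i=-P$ is possible, $-Y_i$ can be as large as $P+\mathbb{E}[X_i\mid\mathcal F_{i-1}]\ge P+\mu_{\min}$ ($\ge 11$ in the calibration, against $R=6$). A valid lower-tail Bernstein bound therefore needs $b\ge P+\sup_r\mu_{\mathrm{H}}(r)$, which changes the constant. The paper's proof avoids confronting this only through a sign slip: it bounds $\mathbb{E}[e^{\theta Y_i}]$ and inserts it into the lower-tail Chernoff bound, where $\mathbb{E}[e^{-\theta Y_i}]$ belongs.

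In (c) the lemma must be applied to the centered malicious increment, which satisfies $X_i-\mathbb{E}[X_i\mid\mathcal F_{i-1}]\le P\,p_{\text{slash}}(r)\le Pp_{\max}$, not $\le 0$. Neither $Pp_{\max}\le R$ nor $\sup_{p\in[p_{\min},p_{\max}]}P^2p(1-p)\le\sigma_{\mathrm{H}}^2$ follows from \eqref{eq:E1}--\eqref{eq:E2}. Falling back on $(R+P)^2/4$ proves something weaker than (c).

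In (a), your reading $\sigma_{\text{vote}}^2=\sum_s w_s^2/4$ does not make the constants match. Hoeffding then reads $\exp\left(-a^2/(2\sigma_{\text{vote}}^2)\right)$, and \eqref{eq:S1} delivers only the exponent $\tfrac14\ln(\lambda T/\epsilon_{\text{target}})$. The reading that works is $\sigma_{\text{vote}}^2=\sigma_{\max}^2=\sum_s w_s^2$. This is what the paper tacitly uses when it writes $\exp(-2a^2/\sigma_{\text{vote}}^2)$ citing \cref{prop:trace_bounds}. Under the literal definition $\sigma_{\text{vote}}^2=\sup\text{Var}(W)\le\sum_s w_s^2/4$, the gap required in \eqref{eq:S1} is too small by at least a factor of $4$ under the square root.

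The remaining pieces are sound:
\begin{itemize}
\item the union bound with $\mathbb{E}[N_T]=\lambda T$;
\item the martingale conditioning for the reputation dependence, which is more careful than the paper's i.i.d.\ treatment;
\item the mean bound $\mathbb{E}[X_i\mid\mathcal F_{i-1}]\le -Pp_{\min}\le-\delta P$ followed by Wald.
\end{itemize}
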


\begin{proof}
We prove each claim separately.

\paragraph{(a) Statistical Safety.}

A single trace's weighted pass sum $W$ satisfies $0 \leq W \leq \sum_t w_t$ with $\mathbb{E}[W] = \mu_{\text{vote}}$ and $\text{Var}(W) \leq \sigma_{\text{vote}}^2$. By Hoeffding's inequality (Proposition~\ref{prop:trace_bounds}), for any $a > 0$,
\begin{equation}
\Pr[W < \mu_{\text{vote}} - a] \leq \exp\left(-\frac{2a^2}{\sigma_{\text{vote}}^2}\right).
\end{equation}
Setting $a = \mu_{\text{vote}} - W_\beta$ and using condition~\eqref{eq:S1},
\begin{equation}
\frac{2(\mu_{\text{vote}} - W_\beta)^2}{\sigma_{\text{vote}}^2} \geq \ln\frac{\lambda T}{\epsilon_{\text{target}}},
\end{equation}
so that
\begin{equation}\label{eq:per_trace_fail}
p_{\text{trace-fail}} := \Pr[W < W_\beta] \leq \frac{\epsilon_{\text{target}}}{\lambda T}.
\end{equation}
Traces arrive according to a Poisson process with intensity $\lambda$, so $N_T \sim \text{Poisson}(\lambda T)$ traces are audited in $[0,T]$. By a union bound,
\begin{align}
\Pr[\text{at least one trace fails in } [0,T]] &= \Pr[\exists \text{ trace with } W < W_\beta] \\
&\leq \mathbb{E}[N_T] \cdot p_{\text{trace-fail}} \\
&= \lambda T \cdot \frac{\epsilon_{\text{target}}}{\lambda T} = \epsilon_{\text{target}}.
\end{align}
This completes the proof of (a).

\paragraph{(b) Honest Profitability.}

Let $U_{\mathrm{hon}}(T) = \sum_{i=1}^{N_T} X_i$ denote the cumulative payoff for an honest auditor over horizon $[0,T]$.

\textit{Step 1: Center and bound increments.} Define centered variables $Y_i := X_i - \mu_{\min}$. Then $\mathbb{E}[Y_i] \geq 0$, $Y_i \leq b := R$, and $\text{Var}(Y_i) \leq \sigma_{\mathrm{H}}^2$.

\textit{Step 2: MGF bound.} By Lemma~\ref{lem:mgf_bound}, for any $\theta \in (0, 3/b)$,
\begin{equation}\label{eq:mgf_yi}
\mathbb{E}[e^{\theta Y_i}] \leq \exp\left(\frac{\theta^2 \sigma_{\mathrm{H}}^2}{2(1 - \theta b/3)}\right).
\end{equation}

\textit{Step 3: Chernoff bound for random sum.} We have $U_{\mathrm{hon}}(T) = N_T \mu_{\min} + \sum_{j=1}^{N_T} Y_j$. Conditioning on $N_T = n$ and applying Chernoff's inequality,
\begin{align}
\Pr[U_{\mathrm{hon}}(T) \leq 0 \mid N_T = n] &= \Pr\left[\sum_{j=1}^n Y_j \leq -n\mu_{\min}\right] \\
&\leq e^{-\theta n \mu_{\min}} \left(\mathbb{E}[e^{\theta Y_1}]\right)^n.
\end{align}
Removing conditioning using the Poisson MGF $\mathbb{E}[z^{N_T}] = \exp(\lambda T(z-1))$,
\begin{equation}
\Pr[U_{\mathrm{hon}}(T) \leq 0] \leq \exp\left(\lambda T \left(e^{-\theta \mu_{\min}} \mathbb{E}[e^{\theta Y_1}] - 1\right)\right).
\end{equation}
Substituting the MGF bound~\eqref{eq:mgf_yi},
\begin{equation}\label{eq:honest_bound_theta}
\Pr[U_{\mathrm{hon}}(T) \leq 0] \leq \exp\left(\lambda T \left\{-\theta \mu_{\min} + \frac{\theta^2 \sigma_{\mathrm{H}}^2}{2(1-\theta b/3)}\right\}\right).
\end{equation}

\textit{Step 4: Optimize $\theta$.} Let $g(\theta) := -\theta \mu_{\min} + \frac{\theta^2 \sigma_{\mathrm{H}}^2}{2(1-\theta b/3)}$, and substitute $t := \theta b/3 \in (0,1)$:
\begin{equation}
g(t) = -\frac{3t\mu_{\min}}{b} + \frac{9t^2 \sigma_{\mathrm{H}}^2}{2b^2(1-t)}.
\end{equation}
Setting $g'(t) = 0$ yields
\begin{equation}
t^* = 1 - \frac{1}{\sqrt{1 + 2b\mu_{\min}/(3\sigma_{\mathrm{H}}^2)}},
\end{equation}
and substituting back gives
\begin{equation}
g(t^*) = -\frac{\mu_{\min}^2}{2\sigma_{\mathrm{H}}^2 + \frac{2}{3}b\mu_{\min}}.
\end{equation}

\textit{Step 5: Final bound.} Combining with~\eqref{eq:honest_bound_theta},
\begin{equation}
\Pr[U_{\mathrm{hon}}(T) \leq 0] \leq \exp\left(-\frac{\lambda T \mu_{\min}^2}{2\sigma_{\mathrm{H}}^2 + \frac{2}{3}b\mu_{\min}}\right).
\end{equation}
This completes the proof of (b).

\paragraph{(c) Malicious Loss.}

A malicious auditor always votes incorrectly. Conditions~\eqref{eq:E1} and~\eqref{eq:E2} ensure $\mathbb{E}[X_i] \leq -\delta P < 0$.

\textit{Step 1: Center increments.} Define $Z_i := X_i + \delta P$ so that $\mathbb{E}[Z_i] \geq 0$ and $Z_i \leq b$.

\textit{Step 2: Apply MGF analysis.} The analysis is identical to part (b), replacing $\mu_{\min}$ with $\delta P$:
\begin{equation}
\Pr[U_{\mathrm{mal}}(T) \geq 0] \leq \exp\left(-\frac{\lambda T (\delta P)^2}{2\sigma_{\mathrm{H}}^2 + \frac{2}{3}b\delta P}\right).
\end{equation}

\textit{Step 3: Expected loss.} By linearity of expectation,
\begin{equation}
\mathbb{E}[U_{\mathrm{mal}}(T)] = \mathbb{E}[N_T] \cdot \mathbb{E}[X_i] = \lambda T \cdot \mathbb{E}[X_i] \leq -\lambda T \delta P.
\end{equation}
This completes the proof of (c).
\end{proof}

\subsection{Numerical Calibration}\label{sec:numerical}

We illustrate the theorem with concrete parameter choices. We set the reward $R = 6$ and penalty $P = 8$, slashing probabilities $p_{\min} = 0.2$ and $p_{\max} = 0.5$, error rate $\epsilon_{\mathrm{H}} = 0.30$, design constant $\delta = 0.2$, and arrival rate $\lambda = 60$ segments per hour.

\paragraph{Verification of Conditions.} Condition~\eqref{eq:E1} is satisfied since $R = 6 > \tfrac{0.30}{0.70} \times 8 \times 0.5 = 1.71$. Computing $\alpha = \tfrac{8 \times 0.5}{6 + 8 \times 0.5} = 0.4$, condition~\eqref{eq:E2} requires $p_{\min} \geq \tfrac{0.2}{1 - 0.4} = 0.33$, which is not satisfied by $p_{\min} = 0.2$. Adjusting to $p_{\min} = 0.35$ satisfies all conditions, and we adopt this value below.

\paragraph{Expected Payoffs.} An honest auditor earns expected per-segment profit $\mu_{\min} = 0.70 \times 6 - 0.30 \times 8 \times 0.5 = 3.0$, while a malicious auditor incurs $\mathbb{E}[X_{\mathrm{mal}}] = -0.5 \times 8 = -4.0$ per segment.

\paragraph{Variance Calculation.}
\begin{equation}
\sigma_{\mathrm{H}}^2 = 0.70 \times 36 + 0.30 \times 0.5 \times 64 - 9 = 25.2 + 9.6 - 9 = 25.8.
\end{equation}

\paragraph{Tail Probabilities over 24 Hours.} With $T = 24$ hours and $\lambda T = 1440$ segments, the honest auditor loss probability satisfies
\begin{equation}
\Pr[U_{\mathrm{hon}}(24\text{h}) \leq 0] \leq \exp\left(-\frac{1440 \times 9}{2 \times 25.8 + \frac{2}{3} \times 6 \times 3}\right) = \exp\left(-\frac{12960}{63.6}\right) \approx e^{-204} < 10^{-88},
\end{equation}
and the malicious auditor profit probability satisfies
\begin{equation}
\Pr[U_{\mathrm{mal}}(24\text{h}) \geq 0] \leq \exp\left(-\frac{1440 \times (0.2 \times 8)^2}{2 \times 25.8 + \frac{2}{3} \times 6 \times 1.6}\right) = \exp\left(-\frac{3686.4}{58}\right) \approx e^{-63.6} < 10^{-27}.
\end{equation}
These astronomically small probabilities demonstrate that an honest auditor is virtually guaranteed to profit over a 24-hour period, while a malicious auditor is virtually guaranteed to lose money over the same period.

\subsection{Parameter Selection Guidelines}\label{sec:param_guidelines}

Based on Theorem~\ref{thm:safety_profit}, we provide practical guidelines for deploying \TRUST{} in three phases.

\paragraph{Reliability Phase.} Select voting parameters $(k_t, q_t, w_t, \beta)$ such that condition~\eqref{eq:S1} holds with a safety margin (for example, twice the required gap), and use Monte Carlo simulation to verify performance under worst-case adversarial fractions.

\paragraph{Economics Phase.} Choose reward and penalty parameters $(R, P, p_{\min}, p_{\max})$ that satisfy conditions~\eqref{eq:E1} and~\eqref{eq:E2}. Three considerations guide the choice. A higher $R/P$ ratio makes the system more attractive to honest participants. A higher $p_{\min}$ provides stronger deterrence against malicious behavior. The overall ratio should account for the expected honest error rate $\epsilon_{\mathrm{H}}$.

\paragraph{Stress Testing Phase.} Conduct Monte Carlo simulations that vary adversarial fractions across $\rho_{\mathrm{H}} \in [0, 0.30]$, inflate error rates across $\epsilon_{\mathrm{H}} \in [0.20, 0.40]$, and explore different trace lengths and compositions. Iterate on parameters until the empirical failure probability meets $\epsilon_{\text{target}}$.

\subsection{Extensions and Robustness}\label{sec:extensions}

\subsubsection{Correlated Auditor Errors}

The analysis assumes independent auditor errors. In practice, errors may be correlated, for example when multiple LLMs make similar mistakes. To handle this case, the deployment can use auditors from diverse model families, introduce explicit correlation parameters in the segment pass probability, and apply concentration inequalities for weakly dependent random variables.

\subsubsection{Adaptive Adversaries}

The current model assumes adversaries vote incorrectly with a fixed probability. Against adaptive adversaries who strategically time their attacks, three mechanisms provide defense. The reputation system penalizes inconsistent behavior, naturally discouraging strategic attacks. The commit-reveal voting protocol prevents adversaries from conditioning their vote on others' decisions. Random auditor assignment via VRF further limits targeted attacks against specific traces or domains.

\subsubsection{Dynamic Error Rates}

When auditor error rates vary over time, due to fatigue or changing task difficulty, three properties of the system accommodate the change. The reputation update rule in~\eqref{eq:reputation_update} naturally tracks recent performance through its exponential moving average. Adaptive routing can assign easier tasks to lower-performing auditors. The variance bounds in Theorem~\ref{thm:safety_profit} accommodate worst-case scenarios across the full operating range of $\epsilon_{\mathrm{H}}$.

\subsection{Summary of Theoretical Guarantees}

The theoretical analysis establishes five guarantees for \TRUST{}.

\textbf{Provable safety.} Under condition~\eqref{eq:S1}, the probability of a valid trace failing audit is bounded by $\epsilon_{\text{target}}$, even with up to 30\% adversarial human auditors.

\textbf{Economic sustainability.} Under conditions~\eqref{eq:E1} and~\eqref{eq:E2}, honest auditors earn positive expected profit while malicious auditors incur expected losses.

\textbf{Exponential tail bounds.} Both the honest profit and malicious loss guarantees have exponentially decaying tail probabilities, ensuring high confidence over operational time horizons.

\textbf{Scalable accuracy.} Audit accuracy improves exponentially with committee size, providing a clear knob for trading off cost against reliability.

\textbf{Byzantine fault tolerance.} The two-thirds quorum threshold ensures correct outcomes even when up to one-third of auditors are faulty or malicious.

Together, these guarantees provide the theoretical foundation for deploying \TRUST{} in high-stakes applications where both correctness and economic viability are critical.

\clearpage
\section{Applications}\label{sec:applications}

The \TRUST{} framework extends beyond its core auditing capabilities to enable a comprehensive ecosystem of decentralized AI services. This section details four primary applications that address critical gaps in the current AI infrastructure landscape.

\subsection{Decentralized Auditing (A1)}\label{sec:app_audit}

The primary application of \TRUST{} is the verification of high-stakes reasoning in domains where errors carry significant consequences. Unlike output-only evaluation, \TRUST{} semantic auditing examines the complete reasoning process to identify flawed logic that may coincidentally produce correct answers.

\paragraph{Medical Diagnosis and Clinical Decision Support.}
Healthcare AI systems must demonstrate not only accuracy but also sound clinical reasoning to meet regulatory requirements and ensure patient safety. \TRUST{} audits the reasoning chains of medical chatbots and clinical decision support systems to verify adherence to established clinical guidelines.

\begin{tcolorbox}[metabox, title=\textbf{Medical Auditing Pipeline}]
\small
The pipeline operates in five sequential stages. \textbf{Trace Submission}: the clinical AI generates a reasoning trace for diagnosis or treatment recommendation. \textbf{HDAG Decomposition}: the trace is decomposed into clinical rule applications, such as the steps of CHA$_2$DS$_2$-VASc scoring. \textbf{Guideline Verification}: computational auditors verify calculations, while LLM auditors check evidence extraction from patient notes. \textbf{Human Review}: complex cases escalate to Tier-3 clinical experts for final verification. \textbf{Certification}: approved reasoning receives an on-chain attestation for regulatory compliance.
\end{tcolorbox}

As demonstrated in Section~\ref{sec:example}, this pipeline detects ``correct answer, wrong reason'' scenarios that output-only auditing misses entirely, preventing deployment of models with brittle reasoning that fails under distribution shift.

\paragraph{Legal Discovery and Document Analysis.}
Legal AI applications face strict requirements for accuracy and traceability. Hallucinated case citations or misrepresented precedents can result in sanctions, malpractice claims, and harm to clients. \TRUST{} provides verifiable audit trails for legal AI reasoning across three checks. \textbf{Citation verification} cross-references cited cases against legal databases such as Westlaw and LexisNexis to detect fabricated precedents, performed by computational auditors. \textbf{Argument tracing} verifies that conclusions follow logically from cited authorities, performed by LLM auditors. \textbf{Privilege detection} reviews flagged segments for potential privilege violations before disclosure, performed by human auditors.

\paragraph{Mathematical and Scientific Verification.}
Formal reasoning systems such as AlphaGeometry, AlphaProof, and mathematical LLMs generate proofs that require rigorous verification. \TRUST{} decomposes proofs into atomic inference steps, enabling parallel verification along three dimensions. \textbf{Proof step verification} independently checks each inference step against formal rules. \textbf{Dependency checking} uses the graph structure to ensure lemmas are proven before use in subsequent steps. \textbf{Completeness auditing} verifies that all cases are covered and no steps are skipped.

\subsection{Decentralized Model Leaderboard (A2)}\label{sec:app_model}

Current AI model leaderboards face fundamental integrity challenges. Centralized platforms such as LMSYS Chatbot Arena, while valuable, are susceptible to manipulation through selective release (companies submitting only high-performing model versions), vote manipulation, and benchmark contamination. \TRUST{} enables a blind, tamper-proof leaderboard architecture.

\paragraph{Threat Model.}
Existing leaderboards are vulnerable to four attack vectors. \textbf{Selective submission} occurs when companies run internal evaluations and only publicly submit versions that score well, creating misleading performance impressions. \textbf{Benchmark contamination} arises when training data includes benchmark questions, inflating scores without improving genuine capability. \textbf{Vote manipulation} targets crowdsourced preference data through coordinated voting campaigns. \textbf{Cherry-picking} retroactively filters results to show only favorable comparisons.

\paragraph{TRUST Leaderboard Architecture.}
The decentralized leaderboard eliminates these vulnerabilities through cryptographic commitments and distributed evaluation.

\begin{tcolorbox}[metabox, title=\textbf{Blind Evaluation Protocol}]
\small
\textbf{Phase 1: Test Set Commitment.} Evaluation sponsors submit encrypted test sets with hash commitments on-chain, test questions remain hidden until evaluation completes, and sponsors stake tokens as collateral against test set quality.

\textbf{Phase 2: Model Registration.} Model providers register model checkpoints with immutable hashes. Registration requires a stake deposit, and withdrawal triggers score invalidation. No model modifications are permitted after registration.

\textbf{Phase 3: Decentralized Inference.} Test questions are distributed to decentralized inference nodes, each question is processed by multiple independent nodes for consensus, and model outputs are recorded with cryptographic attestations.

\textbf{Phase 4: Blind Auditing.} DAN auditors evaluate outputs without knowledge of model identity, scores are aggregated through stakeholder-weighted consensus, and final scores are minted as non-fungible attestations on-chain.
\end{tcolorbox}

\paragraph{Score Immutability and Transparency.}
Once minted, leaderboard scores cannot be altered, deleted, or cherry-picked. The complete evaluation history remains permanently auditable through four properties. \textbf{Historical tracking} preserves all model versions and their scores permanently. \textbf{Contamination detection} uses statistical analysis to flag suspicious performance patterns suggesting benchmark contamination. \textbf{Cross-benchmark correlation} treats inconsistent performance across related benchmarks as a trigger for additional scrutiny. \textbf{Public verification} allows anyone to verify score calculations using on-chain data and open-source aggregation code.

\paragraph{Economic Incentives.}
The leaderboard creates aligned incentives across four participant classes. \textbf{Model providers} face a stake requirement that ensures commitment, with honest evaluation building reputation over time. \textbf{Test set sponsors} attract evaluation fees through quality test sets, while poor-quality submissions result in stake slashing. \textbf{Inference nodes} earn fees for computation and are slashed for inconsistent outputs. \textbf{Auditors} operate under the standard \TRUST{} incentive structure described in Section~\ref{sec:economics}.

\subsection{Decentralized Data Annotation (A3)}\label{sec:app_annot}

Reinforcement Learning from Human Feedback (RLHF) and related alignment techniques require massive quantities of high-quality preference data. Current annotation pipelines suffer from quality inconsistency, lack of verifiability, and centralized control over labeling standards. \TRUST{} establishes a trustless marketplace for ``Proof-of-Quality'' annotation data.

\paragraph{Current Annotation Challenges.}
The AI industry faces a data quality crisis along four dimensions. \textbf{Quality variance} appears in crowdsourced annotations as high inter-annotator disagreement and inconsistent standards. \textbf{Verification gap} prevents consumers from assessing annotation quality before purchase. \textbf{Incentive misalignment} arises when per-label payment incentivizes speed over quality. \textbf{Centralized standards} let a single organization define ``correct'' labels, limiting the diversity of perspectives.

\paragraph{TRUST Annotation Marketplace.}

\begin{tcolorbox}[metabox, title=\textbf{Proof-of-Quality Annotation Pipeline}]
\small
\textbf{Request Phase.} Data consumers stake tokens to request annotations for specific task types, the request specifies quality requirements, annotator tier, and payment terms, and tasks are queued in a decentralized task pool.

\textbf{Annotation Phase.} Tier-3 human auditors claim tasks matching their expertise, multiple independent annotators (typically three to five) label each item, and annotations are submitted with cryptographic commitments through commit-reveal.

\textbf{Verification Phase.} Tier-2 LLM auditors cross-verify annotations for internal consistency, inter-annotator agreement is computed and low-agreement items are flagged for review, and honeypot items with known ground truth validate annotator accuracy.

\textbf{Settlement Phase.} High-quality annotations are minted with on-chain quality attestations, annotators are rewarded proportionally to verified quality scores, and low-quality annotators face stake slashing and reputation penalties.
\end{tcolorbox}

\paragraph{Quality Attestation Mechanism.}
Each annotation receives a verifiable quality score based on multiple factors:
\begin{equation}
Q_{\text{annotation}} = w_1 \cdot \text{Agreement} + w_2 \cdot \text{Honeypot} + w_3 \cdot \text{Consistency} + w_4 \cdot \text{Expertise},
\end{equation}
where Agreement measures inter-annotator consensus, Honeypot measures performance on known ground-truth items, Consistency measures alignment with the annotator's historical patterns, and Expertise weights domain-specific credentials.

\paragraph{Supported Annotation Types.}
The marketplace supports five categories of annotation tasks required for modern AI training. \textbf{Preference pairs} provide binary comparisons for RLHF, contrasting response A against response B. \textbf{Scalar ratings} provide Likert-scale quality assessments for reward modeling. \textbf{Reasoning traces} provide step-by-step explanations for process supervision. \textbf{Safety labels} provide harm classification for content filtering. \textbf{Factuality verification} provides claim-level accuracy annotations with source citations.

\paragraph{Economic Model.}
The annotation marketplace creates sustainable economics for all participants, summarized in Table~\ref{tab:annotation_economics}.

\begin{table}[H]
\centering
\small
\begin{tabular}{lll}
\toprule
\textbf{Participant} & \textbf{Revenue Source} & \textbf{Quality Incentive} \\
\midrule
Human Annotators & Per-annotation fees & Accuracy bonuses, reputation \\
LLM Verifiers & Verification fees & Consistency rewards \\
Data Consumers & N/A (pay fees) & Quality guarantees reduce rework \\
Platform & Transaction fees & Network growth from quality \\
\bottomrule
\end{tabular}
\caption{Economic roles in the annotation marketplace.}
\label{tab:annotation_economics}
\end{table}

\subsection{Decentralized Agent Governance (A4)}\label{sec:app_agent}

As AI agents gain autonomy to execute real-world actions, governance becomes critical. The \DAAN{} protocol enables \TRUST{} to serve as the governance and safety layer for autonomous agent systems, providing runtime guardrails, fault attribution, and self-healing capabilities.

\paragraph{The Agent Safety Challenge.}
Autonomous agents operating in production environments face four classes of risk. \textbf{Irreversible actions} such as file deletions, financial transfers, and system modifications cannot be easily undone. \textbf{Cascading failures} propagate through dependencies in multi-agent swarms. \textbf{Goal drift} occurs when agents pursue subgoals that diverge from user intent. \textbf{Adversarial inputs}, including prompt injection and manipulation attacks, can hijack agent behavior.

\paragraph{Agent Orchestration with CIG Governance.}
\TRUST{} integrates with agent frameworks to provide structured governance throughout execution.

\begin{tcolorbox}[metabox, title=\textbf{Governed Agent Execution Flow}]
\small
\textbf{1. Goal Decomposition.} The user submits a high-level goal (e.g., ``Analyze Q3 sales and update projections''), the orchestrator decomposes the goal into a Causal Interaction Graph (CIG), and each node represents an agent action while edges encode dependencies and data flow.

\textbf{2. Pre-Execution Auditing.} The CIG structure is validated against user-defined constraints, sensitive actions are flagged for approval (file operations, API calls, payments), and resource bounds are verified against token limits, API rate limits, and cost caps.

\textbf{3. Runtime Monitoring.} Each agent action is recorded as a CIG node with its inputs, outputs, and reasoning, continuous validation checks against expected behavior patterns, and anomaly detection triggers human escalation or automatic rollback.

\textbf{4. Post-Execution Verification.} The complete execution trace is audited for correctness and policy compliance, fault attribution identifies the root causes of any failures, and successful executions receive on-chain attestations.
\end{tcolorbox}

\paragraph{Runtime Guardrails.}
Before executing sensitive operations, agents must obtain approval from the auditing network. This creates a programmable safety layer that prevents harmful actions while maintaining agent autonomy for routine operations.

\begin{tcolorbox}[metabox, colframe=red, title=\textbf{Sensitive Action Categories}]
\small
\begin{minipage}[t]{0.48\textwidth}
\textbf{File System Operations.} Sensitive calls include \texttt{delete\_file}, \texttt{delete\_directory}, \texttt{modify\_permissions}, and any \texttt{write} to system directories.

\textbf{Financial Operations.} Sensitive calls include \texttt{transfer\_funds}, \texttt{authorize\_payment}, and \texttt{modify\_account}.
\end{minipage}
\hfill
\begin{minipage}[t]{0.48\textwidth}
\textbf{Communication Operations.} Sensitive calls include \texttt{send\_email} to external recipients, \texttt{post\_public} to social media, and \texttt{submit\_form} for binding agreements.

\textbf{System Operations.} Sensitive calls include \texttt{execute\_code} for arbitrary execution, \texttt{install\_package}, and \texttt{modify\_configuration}.
\end{minipage}
\end{tcolorbox}

When an agent requests a sensitive action, the approval flow proceeds in five steps. The agent first generates an action request with full context, including intent, parameters, and justification. The request is then submitted to the DAN with an urgency classification. Auditors evaluate three criteria: whether the action aligns with the stated goal, whether parameters are reasonable, and whether any policy violations apply. Majority approval triggers action execution, while rejection returns control to the agent for replanning. All decisions are recorded on-chain for accountability.

\paragraph{Self-Healing Swarms.}
Multi-agent systems can leverage Active Refinement to detect and recover from failures without human intervention. The CIG structure enables surgical repair that preserves valid work.

\begin{tcolorbox}[metabox, title=\textbf{Self-Healing Protocol}]
\small
\textbf{Detection Phase.} Continuous monitoring detects anomalies such as infinite loops, repeated failures, and resource exhaustion, output validators identify semantic errors including contradictions, hallucinations, and goal drift, and inter-agent consistency checks flag protocol violations.

\textbf{Diagnosis Phase.} CIG traversal identifies the root cause node rather than the symptom location, causal analysis distinguishes primary failures from cascade effects, and fault classification determines the repair strategy.

\textbf{Repair Phase.} The faulty subgraph is pruned while valid upstream work is preserved (frozen), an alternative agent or approach is selected for regeneration, the regenerated subgraph is validated before integration, and execution resumes from the repair point.
\end{tcolorbox}

\paragraph{Governance Policies.}
Users and organizations define governance policies that constrain agent behavior. Policies are expressed as declarative rules evaluated against the CIG structure.

\begin{lstlisting}[language=Solidity, caption={Example governance policy smart contract}]
contract AgentGovernance {
    // Maximum cost per task execution
    uint256 public maxCostPerTask = 100 * 10**18;
    
    // Sensitive actions requiring multi-sig approval
    mapping(bytes32 => bool) public sensitiveActions;
    
    // Approve action if within policy bounds
    function approveAction(
        bytes32 actionType,
        uint256 estimatedCost,
        address[] calldata approvers
    ) external returns (bool) {
        require(estimatedCost <= maxCostPerTask, "Cost exceeds limit");
        
        if (sensitiveActions[actionType]) {
            require(approvers.length >= 2, "Multi-sig required");
        }
        
        return true;
    }
}
\end{lstlisting}

\paragraph{Integration with Existing Frameworks.}
\TRUST{} agent governance integrates with popular agent frameworks through lightweight middleware. \textbf{LangChain and LangGraph} integrations use custom callback handlers that intercept tool calls for governance checks. \textbf{AutoGPT and AgentGPT} integrations use a plugin architecture that enables pre-execution approval hooks. \textbf{CrewAI} integrations wrap task delegation with CIG construction and monitoring. \textbf{Custom agents} can use the governance REST API for queries or the SDK for native integration.

\paragraph{Empirical Results.}
We evaluated agent governance on multi-step task benchmarks to measure safety improvements and overhead costs. Table~\ref{tab:agent_results} reports the results.

\begin{table}[H]
\centering
\small
\begin{tabular}{lccc}
\toprule
\textbf{Metric} & \textbf{Ungoverned} & \textbf{TRUST Governed} & \textbf{Change} \\
\midrule
Task Success Rate & 67.3\% & 71.8\% & +4.5\% \\
Harmful Action Rate & 8.2\% & 0.4\% & $-$95.1\% \\
Recovery from Failure & 12.1\% & 73.4\% & +61.3\% \\
Avg. Latency Overhead & N/A & +340ms & N/A \\
Cost Overhead & N/A & +8.3\% & N/A \\
\bottomrule
\end{tabular}
\caption{Agent governance impact on WebArena and AgentBench tasks. Governed agents show improved success rates through self-healing while dramatically reducing harmful actions.}
\label{tab:agent_results}
\end{table}

The results demonstrate that governance overhead is modest, adding 340~ms of latency and an 8.3\% cost increase, while providing substantial safety improvements. The 95.1\% reduction in harmful actions and the 61.3\% improvement in failure recovery justify the additional overhead for production deployments.

\subsection{Application Summary}

Table~\ref{tab:application_summary} summarizes the four primary applications and their key characteristics.

\begin{table}[H]
\centering
\small
\begin{tabular}{lllc}
\toprule
\textbf{Application} & \textbf{Primary Auditor Tier} & \textbf{Key Benefit} & \textbf{Section} \\
\midrule
Decentralized Auditing & All tiers & Filter flawed reasoning & \ref{sec:app_audit} \\
Model Leaderboard & Tier 1-2 & Tamper-proof evaluation & \ref{sec:app_model} \\
Data Annotation & Tier 2-3 & Proof-of-Quality data & \ref{sec:app_annot} \\
Agent Governance & All tiers & Runtime safety & \ref{sec:app_agent} \\
\bottomrule
\end{tabular}
\caption{Summary of \TRUST{} applications with primary auditor involvement and key benefits.}
\label{tab:application_summary}
\end{table}

These applications demonstrate that the core primitives of \TRUST{}, namely hierarchical graph decomposition, multi-tier consensus, and economic incentive alignment, generalize beyond reasoning verification to enable a comprehensive ecosystem of trustworthy AI services. Each application addresses a critical gap in current centralized AI infrastructure while maintaining the decentralization, transparency, and robustness guarantees that define the \TRUST{} framework.

\clearpage
\section{Discussion}\label{sec:discussion}

This section examines the practical considerations for deploying \TRUST{} in production environments, including commercialization pathways, implementation details, limitations, and future research directions.

\subsection{Commercialization Strategy}\label{sec:commercial}

\paragraph{Market Context: The Rise of AI Evaluation Platforms.}
The AI evaluation ecosystem has grown rapidly alongside model capabilities. Platforms such as LMArena (formerly LMSYS Chatbot Arena), the OpenLLM Leaderboard, and various benchmark aggregators serve critical functions in helping users, researchers, and enterprises understand model performance. However, these centralized platforms face fundamental trust challenges that limit their utility for high-stakes deployment decisions.

\paragraph{What is LMArena?}
LMArena describes itself as ``an open platform where everyone can easily access, explore, and interact with the world's leading AI models. By comparing them side by side and casting votes for the better response, the community helps shape a public leaderboard, making AI progress more transparent and grounded in real-world usage.''\footnote{\url{https://lmarena.ai/about}} The platform has become influential in shaping perceptions of model quality, with rankings frequently cited in model release announcements and media coverage.

\paragraph{TRUST's Relationship to LMArena.}
While LMArena provides valuable crowdsourced preference data, \TRUST{} addresses a complementary but distinct need: trustworthy, safe, and transparent evaluation of Large Reasoning Models (LRMs) with verifiable audit trails. The key distinction is that \TRUST{} evaluates not just output quality but reasoning soundness, providing guarantees that models arrive at correct answers through valid logic rather than fortunate coincidence.

\begin{table}[H]
\centering
\small
\begin{tabular}{p{3.5cm}p{5.5cm}p{5.5cm}}
\toprule
\textbf{Dimension} & \textbf{LMArena (Centralized)} & \textbf{TRUST (Decentralized)} \\
\midrule
\textbf{Evaluation Target} & Output quality (human preference) & Reasoning soundness + output quality \\
\textbf{Trust Model} & Trust platform operator & Cryptographic verification \\
\textbf{Manipulation Resistance} & Limited (voting can be gamed) & Byzantine fault tolerant consensus \\
\textbf{Audit Trail} & Centralized logs (mutable) & Immutable blockchain records \\
\textbf{Model Identity} & Self-reported by submitters & Cryptographic commitment \\
\textbf{Transparency} & Platform discretion & Fully public and verifiable \\
\textbf{Privacy} & Full traces visible to platform & Segmented distribution preserves IP \\
\bottomrule
\end{tabular}
\caption{Comparison of centralized (LMArena) and decentralized (TRUST) evaluation paradigms.}
\label{tab:lmarena_comparison}
\end{table}

\paragraph{Vulnerabilities of Centralized Platforms.}
Recent incidents have highlighted four structural vulnerabilities in centralized evaluation. \textbf{Exposure bias} causes top-ranked models to receive more comparisons, creating a rich-get-richer dynamic that disadvantages newer or less-promoted models regardless of actual quality. \textbf{Selective release scandals}, exemplified by the Llama 4 controversy, demonstrated that models submitted for ranking may differ from publicly released versions; without cryptographic model commitments, users cannot verify that the ranked model matches their deployment. \textbf{Vote manipulation} systematically games crowdsourced voting through coordinated campaigns, bot networks, and incentivized voting~\citep{wang2024positionbias}, and centralized platforms lack mechanisms to detect or prevent sophisticated manipulation. \textbf{Opacity in decision-making} allows centralized platforms to make unilateral decisions about ranking methodology, model inclusion, and dispute resolution without external accountability, with changes to scoring algorithms or model removals occurring without transparent governance.

\paragraph{TRUST Advantages for Commercial Deployment.}
\TRUST{} addresses these vulnerabilities through five architectural choices. \textbf{Decentralized consensus} ensures that no single entity controls rankings; manipulation requires corrupting a supermajority of staked auditors, making attacks economically infeasible. \textbf{Immutable audit trails} record all evaluation decisions on-chain, enabling public verification, dispute resolution, and historical analysis without retroactive alteration. \textbf{Cryptographic model commitments} require providers to commit to specific checkpoints before evaluation, so any discrepancy between the evaluated and released versions is publicly detectable. \textbf{Transparent governance} routes protocol upgrades, parameter changes, and dispute resolutions through on-chain governance with stakeholder voting. \textbf{Rebuttal mechanisms} allow model providers to challenge evaluation results through formal dispute resolution, with all evidence preserved on-chain.

\paragraph{Revenue Model.}
\TRUST{} operates as a protocol with multiple revenue streams.

\begin{tcolorbox}[metabox, title=\textbf{Protocol Economics}]
\small
\textbf{Fee Sources.} \textbf{Audit fees} are the primary revenue source, paid by providers per trace for verification services. \textbf{Leaderboard fees} cover model evaluation and ranking services. \textbf{Annotation marketplace} fees apply as transaction fees on data annotation trades. \textbf{Agent governance} fees apply per action for runtime approval services.

\textbf{Fee Distribution.} Auditors receive 70\% of fees, allocated proportionally to stake and accuracy. The protocol treasury receives 15\% to fund development and operations. Staking rewards receive 10\% to maintain network security. The insurance fund receives 5\% to cover dispute resolution and slashing events.
\end{tcolorbox}

\paragraph{Blockchain Implementation Strategy.}
Rather than creating a new blockchain or subnet, \TRUST{} deploys on existing mainnet infrastructure (Ethereum) via smart contracts. This approach offers four advantages. \textbf{Security inheritance} leverages Ethereum's battle-tested security and decentralization. \textbf{Ecosystem integration} ensures compatibility with existing wallets, exchanges, and DeFi infrastructure. \textbf{Reduced complexity} avoids validator bootstrapping and consensus mechanism development. \textbf{Regulatory clarity} keeps the protocol within established legal frameworks for Ethereum-based deployments.

For scalability, \TRUST{} employs a hybrid architecture: consensus-critical operations (stake management, final verdicts, reward distribution) execute on Ethereum mainnet, while high-throughput operations (vote collection, trace storage) use Layer 2 solutions such as Arbitrum and Optimism, or off-chain infrastructure such as IPFS for trace storage and commit-reveal on L2.

\subsection{Implementation Considerations}\label{sec:implementation}

\paragraph{Deployment Architecture.}
Production deployment requires careful consideration of latency, cost, and reliability tradeoffs.

\begin{tcolorbox}[metabox, title=\textbf{Recommended Deployment Stack}]
\small
\begin{tabular}{ll}
\textbf{Component} & \textbf{Recommended Implementation} \\
\midrule
Smart Contracts & Solidity on Ethereum mainnet + Arbitrum L2 \\
Trace Storage & IPFS with Filecoin persistence incentives \\
Auditor Coordination & LibP2P gossip network \\
LLM Auditors & Distributed inference (vLLM, TensorRT-LLM) \\
Computational Auditors & Containerized validators (Docker/K8s) \\
Frontend & React/Next.js with ethers.js wallet integration \\
Indexing & The Graph for on-chain event indexing \\
\end{tabular}
\end{tcolorbox}

\paragraph{Latency Optimization.}
For real-time applications, audit latency is critical. We recommend three tiered latency targets. The \textbf{fast path} (under 2 seconds) uses computational auditors only and is suitable for mathematical verification. The \textbf{standard path} (under 30 seconds) combines computational and LLM auditors and is suitable for most reasoning traces. The \textbf{thorough path} (under 24 hours) includes the full three-tier audit with human review and is required for high-stakes decisions.

\paragraph{Cost Management.}
Audit costs scale with trace complexity and auditor tier. The total audit cost is
\begin{equation}
\text{Cost}_{\text{audit}} = \sum_{s \in \text{segments}} \left( c_{\text{base}} + c_{t(s)} \cdot \text{complexity}(s) \right) \cdot k_{t(s)},
\end{equation}
where $c_{\text{base}}$ is the fixed per-segment cost, $c_{t(s)}$ is the per-complexity-unit cost for auditor type $t$, and $k_{t(s)}$ is the number of auditors assigned. Typical costs range from \$0.01 to \$0.05 per segment for computational auditors and from \$0.50 to \$2.00 for human review.

\paragraph{Integration Patterns.}
We provide SDKs and integration guides for common deployment scenarios.

\begin{lstlisting}[language=Python, caption={Python SDK integration example}]
from trust_sdk import TrustClient, AuditConfig

# Initialize client with provider credentials
client = TrustClient(
    api_key="your_api_key",
    network="mainnet"  # or "testnet"
)

# Configure audit parameters
config = AuditConfig(
    auditor_tiers=["computational", "llm"],
    consensus_threshold=0.67,
    max_latency_ms=30000,
    privacy_level="segmented"
)

# Submit reasoning trace for audit
result = await client.audit_trace(
    trace=reasoning_output,
    config=config
)

# Check verdict
if result.verdict == "VALID":
    # Proceed with high confidence
    deploy_output(reasoning_output)
else:
    # Handle rejection
    log_rejection(result.failure_reasons)
\end{lstlisting}

\subsection{Limitations}\label{sec:limitations}

We acknowledge several limitations of the current \TRUST{} framework.

\paragraph{Scalability Constraints.}
While \TRUST{} parallelizes verification across segments, total audit throughput is limited by auditor availability. During peak demand, audit latency may increase significantly. Additionally, transaction costs on Ethereum mainnet can become prohibitive during network congestion, though L2 deployments mitigate this concern.

\paragraph{Auditor Quality Dependency.}
The quality of audits depends on auditor capabilities. LLM auditors inherit the limitations of their underlying models, including potential biases and blind spots. Human auditors, while generally more reliable, introduce variance based on expertise and attention. The honeypot mechanism provides statistical quality assurance but cannot guarantee correctness for any individual audit.

\paragraph{Adversarial Robustness Limits.}
Our theoretical guarantees assume adversarial fractions below the Byzantine threshold of $\rho < 1/3$. Sophisticated attacks that coordinate multiple auditor types or exploit protocol-level vulnerabilities may exceed these bounds. Additionally, our analysis assumes adversaries cannot adaptively corrupt auditors mid-session.

\paragraph{Privacy-Utility Tradeoff.}
Segmented trace distribution provides privacy protection but limits auditors' ability to assess global coherence. Some reasoning errors that span multiple segments may escape detection when each segment appears locally valid. The HDAG structure mitigates this through dependency tracking, but fundamental tradeoffs remain.

\paragraph{Domain Generalization.}
Our decomposition algorithms are optimized for mathematical, scientific, and code reasoning. Extension to other domains, including creative writing, ethical reasoning, and open-ended dialogue, requires domain-specific adaptation. The current framework may not capture all relevant quality dimensions for these applications.

\paragraph{Cold Start Problem.}
New auditors lack performance history, making initial stakeholder-weighted assignment challenging. Our reputation bootstrapping mechanism addresses this through probationary periods, but early-stage auditors face barriers to entry that may limit network diversity.

\paragraph{Latency for Real-Time Applications.}
For applications requiring sub-second responses, such as interactive assistants and real-time trading, even the fast-path audit latency under 2 seconds may be prohibitive. \TRUST{} is best suited for applications where verification can occur asynchronously or where some latency tolerance exists.

\subsection{Future Directions}\label{sec:future}

We identify several promising directions for future research and development.

\paragraph{Zero-Knowledge Audit Proofs.}
Current privacy mechanisms rely on segmentation and access control. Future versions could leverage zero-knowledge proofs to enable verifiable auditing without revealing any trace content. This would allow auditors to prove that a trace satisfies correctness criteria without learning the trace itself, enabling fully private verification of proprietary reasoning systems.

\paragraph{Formal Verification Integration.}
For domains with formal semantics, such as mathematics and program verification, integrating with automated theorem provers like Lean, Coq, and Isabelle could provide cryptographic correctness guarantees. This hybrid approach would use formal verification for provable components and \TRUST{} consensus for informal reasoning steps.

\paragraph{Adaptive Auditor Allocation.}
Current allocation uses static complexity scoring. Reinforcement learning approaches could optimize auditor assignment dynamically based on historical accuracy patterns, trace characteristics, and real-time auditor availability, improving both efficiency and accuracy.

\paragraph{Cross-Chain Interoperability.}
Deployment across multiple blockchain networks would increase resilience and reduce dependency on any single chain. Cross-chain bridges could enable auditors from different ecosystems to participate while maintaining consensus guarantees.

\paragraph{Multimodal Reasoning Verification.}
Extension to multimodal reasoning, including vision-language models and audio-text systems, requires new decomposition strategies and auditor capabilities. The HDAG framework generalizes naturally, but modality-specific verification criteria need to be developed.

\paragraph{Continuous Learning from Audits.}
Aggregated audit data represents a valuable signal for model improvement. Privacy-preserving techniques such as federated learning and differential privacy could enable model providers to learn from audit failures without exposing individual traces or auditor identities.

\paragraph{Regulatory Compliance Modules.}
As AI regulation evolves, including the EU AI Act and the NIST AI Risk Management Framework, compliance verification will become mandatory. Purpose-built compliance auditing modules could automate regulatory checks and generate required documentation, creating additional value for enterprise adoption.

\paragraph{Decentralized Governance Evolution.}
Protocol governance currently relies on stake-weighted voting. More sophisticated governance mechanisms, such as quadratic voting, conviction voting, and futarchy, could improve decision quality and reduce the risk of plutocratic capture.

\subsection{Broader Impact}\label{sec:impact}

\paragraph{Democratization of AI Oversight.}
\TRUST{} shifts AI oversight from closed corporate processes to open, participatory systems. Anyone with relevant expertise can contribute to verification, earning rewards while improving collective AI safety. This democratization may accelerate the development of robust verification methodologies through diverse participation.

\paragraph{Economic Implications.}
The auditing marketplace creates new economic opportunities for skilled evaluators while imposing costs on model providers. This may accelerate consolidation toward providers capable of bearing audit costs, potentially reducing market competition. Conversely, open-source models benefit from community auditing, potentially strengthening alternatives to proprietary systems.

\paragraph{Alignment Research Implications.}
Process-level verification provides new signals for alignment research. Systematic analysis of reasoning failures across models may reveal common failure modes and inform improvements to training. The annotation marketplace could accelerate RLHF research by providing high-quality, verifiable training data.

\paragraph{Trust Infrastructure for the AI Ecosystem.}
Beyond immediate applications, \TRUST{} establishes foundational infrastructure for AI accountability. As AI systems gain autonomy and impact, verifiable audit trails become essential for legal, regulatory, and social accountability. \TRUST{} provides a blueprint for trustworthy AI infrastructure that others may build upon.

\subsection{Ethical Considerations}\label{sec:ethics}

\paragraph{Auditor Privacy and Labor Practices.}
Human auditors contribute cognitive labor that may involve exposure to harmful content or psychologically demanding evaluations. \TRUST{} implements content warnings, exposure limits, and mental health resources for human participants. Compensation structures aim for fair wages above platform minimums.

\paragraph{Centralization Risks in Decentralized Systems.}
Despite a decentralized architecture, economic forces may concentrate auditor participation among well-capitalized entities. We monitor concentration metrics and implement anti-centralization mechanisms (maximum stake caps, geographic diversity requirements) to preserve meaningful decentralization.

\paragraph{Dual-Use Concerns.}
Verification infrastructure could potentially be misused to certify harmful AI systems. \TRUST{} maintains content policies prohibiting auditing of systems designed for harm, and governance mechanisms allow community enforcement of these policies.

\paragraph{Environmental Impact.}
Blockchain operations and LLM inference consume significant energy. We prioritize energy-efficient infrastructure (Proof-of-Stake consensus and optimized inference) and purchase carbon offsets to offset estimated emissions. Future versions will explore more sustainable alternatives.

\paragraph{Access Equity.}
Audit costs may create barriers for resource-constrained organizations, potentially exacerbating inequalities in AI deployment. We allocate protocol treasury funds to subsidize the auditing of open-source and nonprofit AI systems and explore sliding-scale pricing based on organizational capacity.

\clearpage
\section{Conclusion}\label{sec:conclusion}

\subsection{Summary}

The deployment of Large Reasoning Models and Multi-Agent Systems in high-stakes domains demands a verification infrastructure that current centralized approaches cannot provide. This whitepaper introduced \TRUST{} (\textbf{T}ransparent, \textbf{R}obust, and \textbf{U}nified \textbf{S}ervices for \textbf{T}rustworthy AI), a decentralized framework that addresses four fundamental limitations of existing AI auditing: robustness against single points of failure, scalability for complex reasoning traces, transparency without sacrificing privacy, and deterministic attribution in multi-agent systems.

\paragraph{The Core Problem.}
As demonstrated in our clinical motivating example (Section~\ref{sec:example}), output-only evaluation creates a critical safety gap: models that produce correct answers through fundamentally flawed reasoning pass verification and enter production, only to fail catastrophically under distribution shift. The CHA$_2$DS$_2$-VASc scoring example illustrated how two models can produce identical correct outputs while one contains four systematic reasoning errors involving variable confusion, skipped rules, wrong evidence sources, and improper rule combination. Current ``LLM-as-a-Judge'' paradigms cannot detect these ``correct answer, wrong reason'' hallucinations, yet they represent a significant fraction (27\% to 38\%) of model outputs in our experiments.

\paragraph{Technical Innovations.}
\TRUST{} introduces three foundational technical contributions.

\textit{First}, \textbf{Hierarchical Directed Acyclic Graphs (HDAGs)} transform linear Chain-of-Thought reasoning into structured, independently auditable components across five abstraction levels: Goal, Strategy, Tactic, Step, and Operation. This decomposition enables parallel verification by distributed auditor networks while preserving logical dependencies. The problem-agnostic design generalizes across mathematics, science, programming, and open-domain reasoning.

\textit{Second}, the \textbf{\DAAN{} protocol} projects multi-agent interaction logs into \textbf{Causal Interaction Graphs (CIGs)}, enabling deterministic root-cause attribution through dual-layer auditing. By separately verifying node validity (reasoning correctness) and edge consistency (protocol adherence), \TRUST{} distinguishes primary failures from cascade effects, solving the ``Black Box of Black Boxes'' problem that plagues current multi-agent evaluation.

\textit{Third}, the \textbf{three-tier verification architecture} routes verification tasks to appropriate auditor types: Tier-1 Computational Auditors for deterministic checks, Tier-2 LLM Auditors for semantic evaluation, and Tier-3 Human Experts for high-stakes decisions. Stake-weighted consensus with cryptographic commit-reveal voting ensures Byzantine fault tolerance while making honest participation profitable and malicious behavior economically infeasible.

\paragraph{Theoretical Foundations.}
The Safety-Profitability Theorem (Theorem~\ref{thm:safety_profit}) provides rigorous guarantees for both statistical safety and economic sustainability. Under appropriate parameter choices, valid traces fail audit with probability bounded by $\epsilon_{\text{target}}$, even with up to 30\% adversarial human auditors; honest auditors earn positive expected profit with exponentially high probability; and malicious auditors incur expected losses, making attacks economically irrational. Our numerical analysis demonstrates that over a 24-hour operational window, the probability of an honest auditor ending with non-positive payoff is below $10^{-88}$, while the probability of a malicious auditor breaking even is below $10^{-27}$.

\paragraph{Empirical Validation.}
Comprehensive experiments across diverse benchmarks (GSM8K, MATH, MMLU-Pro, HumanEval) and models (Llama, DeepSeek-R1, Qwen) demonstrate four results. \TRUST{} increases reliability from 45\% to 72.4\% by filtering flawed reasoning. The \DAAN{} protocol achieves 70\% root-cause attribution accuracy, compared to 54\% to 63\% for baseline methods. Active Refinement reduces token costs by 60\% through surgical subgraph repair. Human-in-the-loop studies validate the design with $F_1 = 0.89$ and a Brier score of 0.074.

\paragraph{Applications.}
The \TRUST{} framework enables four transformative applications. \textbf{Decentralized Auditing (A1)} verifies high-stakes reasoning in medical, legal, and scientific domains with complete audit trails for regulatory compliance. \textbf{Tamper-Proof Leaderboards (A2)} provide blind, cryptographically committed model evaluation that resists selective release and vote manipulation. \textbf{Proof-of-Quality Annotation (A3)} establishes a trustless marketplace for RLHF training data with verifiable quality attestations. \textbf{Governed Autonomous Agents (A4)} provide runtime guardrails and self-healing capabilities for multi-agent systems with deterministic fault attribution.

\subsection{Key Takeaways}

For practitioners and researchers considering \TRUST{}, we highlight six key insights.

\begin{tcolorbox}[metabox, title=\textbf{Key Takeaways}]
\small
\textbf{Output-only evaluation is insufficient.} A significant fraction of ``correct'' model outputs are produced through flawed reasoning that fails under distribution shift. Semantic auditing of reasoning traces is essential for safe deployment.

\textbf{Decentralization enables trust without authority.} By distributing verification across heterogeneous auditors with aligned economic incentives, \TRUST{} achieves Byzantine fault tolerance without relying on any central trusted party.

\textbf{Privacy and transparency are not mutually exclusive.} Through privacy-by-design segmentation and per-node encryption, \TRUST{} enables comprehensive verification while preventing proprietary logic reconstruction.

\textbf{Graph decomposition unlocks scalability.} HDAGs and CIGs transform monolithic reasoning traces into modular, parallelizable verification tasks, enabling distributed auditing at scale.

\textbf{Economic incentives ensure sustainability.} The reputation-weighted slashing mechanism creates a self-sustaining ecosystem where honest participation is profitable and attacks are economically irrational.

\textbf{Causal structure enables surgical repair.} Active Refinement leverages graph topology to prune and regenerate only faulty subgraphs, achieving up to 99\% cost savings compared to global retry.
\end{tcolorbox}

\subsection{Limitations and Future Work}

We acknowledge several limitations that present opportunities for future research.

\paragraph{Scalability Constraints.}
While \TRUST{} parallelizes verification across segments, total audit throughput is bounded by auditor availability. Future work will explore dynamic auditor recruitment, cross-chain auditor pools, and automated scaling mechanisms to handle demand spikes.

\paragraph{Auditor Quality Dependency.}
Audit quality depends on auditor capabilities. LLM auditors inherit model limitations, while human auditors introduce variance from expertise and attention. Future research will investigate auditor specialization, domain-specific fine-tuning, and adaptive routing based on historical performance.

\paragraph{Domain Generalization.}
Current decomposition algorithms are optimized for mathematical, scientific, and code reasoning. Extension to creative writing, ethical reasoning, and open-ended dialogue requires domain-specific adaptation. We plan to develop modular decomposition engines with pluggable domain adapters.

\paragraph{Latency for Real-Time Applications.}
For applications requiring sub-second responses, even fast-path audit latency may be prohibitive. Future work will explore speculative execution with post-hoc verification, tiered latency guarantees, and optimistic protocols with delayed finality.

\paragraph{Advanced Cryptographic Techniques.}
Future versions will investigate zero-knowledge proofs for fully private verification, enabling auditors to prove trace correctness without learning content. Integration with formal verification systems such as Lean and Coq could provide cryptographic correctness guarantees for components supporting provable reasoning.

\subsection{Vision: Toward Trustworthy AI Infrastructure}

\TRUST{} represents a foundational step toward a future where AI verification operates as open, permissionless infrastructure, analogous to other internet protocols. In this vision, \textbf{anyone can audit}: verification services are provided by a decentralized network of participants who stake capital and demonstrate capability, not by privileged gatekeepers. \textbf{Anyone can verify}: audit outcomes are recorded on transparent, immutable ledgers that anyone can inspect, enabling accountability without requiring trust. \textbf{Safety is enforced by economics}: aligned incentives, not good intentions, ensure that honest behavior is profitable and malicious behavior is costly. \textbf{No single point of control} exists: no entity can censor, manipulate, or shut down the verification network, ensuring resilience against both technical failures and adversarial actors. \textbf{Root causes are attributable}: when systems fail, deterministic causal analysis identifies true sources of error, enabling targeted remediation rather than blame assignment.

As AI systems become more capable and autonomous, the infrastructure for ensuring their safety and accountability becomes correspondingly critical. \TRUST{} provides the architectural blueprint and theoretical foundations for this infrastructure, demonstrating that decentralized verification is not only possible but practical, economically sustainable, and superior to centralized alternatives for high-stakes applications.

\subsection{Call to Action}

We invite the research community, industry practitioners, and policymakers to engage with \TRUST{}. \textbf{Researchers} can extend the theoretical foundations, develop domain-specific decomposition algorithms, and investigate advanced cryptographic techniques for privacy-preserving verification. \textbf{Practitioners} can integrate \TRUST{} auditing into deployment pipelines, contribute to the auditor network, and provide feedback on real-world requirements. \textbf{Policymakers} can consider decentralized verification as a compliance mechanism for emerging AI regulations and engage with the governance design for public-interest applications.

The path to trustworthy AI requires collective action. \TRUST{} provides the infrastructure; realizing its potential requires a community committed to transparent, robust, and accountable AI systems.

\vspace{1em}
\begin{center}
\rule{0.5\textwidth}{0.5pt}
\end{center}
\vspace{0.5em}

\newpage
\bibliography{999_reference}
\bibliographystyle{style/icml2025}

\titlespacing*{\section}{0pt}{*1}{*1}
\titlespacing*{\subsection}{0pt}{*1.25}{*1.25}
\titlespacing*{\subsubsection}{0pt}{*1.5}{*1.5}

\setlength{\abovedisplayskip}{\baselineskip} %
\setlength{\abovedisplayshortskip}{0.5\baselineskip} %
\setlength{\belowdisplayskip}{\baselineskip}
\setlength{\belowdisplayshortskip}{0.5\baselineskip}

\end{document}